\def\eqref#1{equation~\ref{#1}}
\def\1{\bm{1}}
\DeclareMathAlphabet{\mathsfit}{\encodingdefault}{\sfdefault}{m}{sl}
\SetMathAlphabet{\mathsfit}{bold}{\encodingdefault}{\sfdefault}{bx}{n}
\newcommand{\E}{\mathbb{E}}
\newcommand{\Var}{\mathrm{Var}}
\newcommand{\Cov}{\mathrm{Cov}}
\theoremstyle{plain}
\newtheorem{theorem}{Theorem}[section]
\newtheorem{proposition}[theorem]{Proposition}
\newtheorem{corollary}[theorem]{Corollary}
\theoremstyle{definition}
\newtheorem{definition}[theorem]{Definition}
\newtheorem{remark}[theorem]{Remark}
\theoremstyle{plain}
\definecolor{mydarkblue}{rgb}{0,0.08,0.45}
\title{On the Impact of the Utility in Semivalue-based Data Valuation}
\author{Mélissa Tamine \thanks{Corresponding author.} \\
Criteo AI lab, FairPlay joint team, France \\
CREST, ENSAE, Institut Polytechnique de Paris \\
\texttt{m.tamine@criteo.com} \\
\And
Benjamin Heymann \\
Criteo AI lab, FairPlay joint team, France \\
\texttt{b.heymann@criteo.com} \\
\AND
Maxime Vono \\
Criteo AI lab, FairPlay joint team, France \\
\texttt{m.vono@criteo.com} \\
\And
Patrick Loiseau \\
Inria, FairPlay joint team, France \\
\texttt{patrick.loiseau@inria.fr}
}
\begin{document}

\maketitle
\begin{abstract}
Semivalue–based data valuation uses cooperative‐game theory intuitions to assign each data point a \emph{value} reflecting its contribution to a downstream task. Still, those values depend on the practitioner’s choice of utility, raising the question: \emph{How robust is semivalue-based data valuation to changes in the utility?} This issue is critical when the utility is set as a trade‐off between several criteria and when practitioners must select among multiple equally valid utilities. We address this by introducing the notion of a dataset’s \emph{spatial signature}: given a semivalue, we embed each data point into a lower-dimensional space in which any utility becomes a linear functional, making the data valuation framework amenable to a simpler geometric picture. Building on this, we propose a practical methodology centered on an explicit robustness metric that informs practitioners whether and by how much their data valuation results will shift as the utility changes. We validate this approach across diverse datasets and semivalues, demonstrating strong agreement with rank‐correlation analyses and offering analytical insight into how choosing a semivalue can amplify or diminish robustness. 
\end{abstract}
\section{Introduction}
\label{sec:introduction}
Supervised machine learning (ML) relies on data, but real-world datasets often suffer from noise and biases as they are collected from multiple sources and are subject to measurement and annotation errors \citep{errors1}. Such variability can impact learning outcomes, highlighting the need for systematic methods to evaluate data quality. In response, \textit{data valuation} has emerged as a growing research field that aims to quantify individual data points' contribution to a downstream ML task, helping identify informative samples and mitigate the impact of low-quality data. A popular approach to tackling the data valuation problem is to adopt a cooperative game-theoretic perspective, where each data point is modeled as a player in a coalitional game, and the usefulness of any data subset is measured by a \textit{utility function}. This approach leverages game theory solution concepts called \textit{semivalues} \citep{semivalues}, which input data and utility to assign an importance score to each data point, thereby inducing a ranking of points in the order of their contribution to the ML task \citep{datashapley, betashapley, databanzhaf, jia2023, jia2020}.
\paragraph{Motivation.} When computing semivalues, the utility is typically selected by the practitioner to reflect the downstream task. In some contexts, this choice is obvious. For example, when fine-tuning a large language model (LLM), one might balance two competing objectives: helpfulness (how well the model follows user instructions) and harmlessness (its tendency to refuse or safely complete unsafe requests) \citep{bai2022a, bai2022b}. If the practitioner then asks, “Which training examples most contributed to my desired helpfulness–harmlessness balance?”, the only sensible utility for semivalue-based data valuation is this composite trade-off itself. By contrast, in more open-ended tasks, the utility can be genuinely ambiguous. Imagine training a dog vs. cat image classifier and asking, “Which data points contributed most to overall performance?” Then, accuracy, precision, recall, F1, AUROC, balanced accuracy, and many others are all defensible choices. However, none is uniquely dictated by the task.

\noindent These two example settings respectively motivate two general scenarios: (1) the \emph{utility trade‐off} scenario, where the utility is a convex combination of fixed criteria, with a tunable weight $\nu$, and (2) the \emph{multiple‐valid‐utility} scenario, where the utility must be chosen among several equally defensible metrics, none of which being uniquely dictated by the task. In both cases, we argue that data valuation practitioners are well advised to ask themselves: 
\begin{center}
\emph{How robust are my data valuation results to the utility choice?}   
\end{center}
In what follows, we explain why.

\noindent \textbf{\emph{Utility trade‐off} scenario: Anticipating costly re‐training.} In scenarios where the utility itself is a trade‐off, e.g., when fine‐tuning an LLM by combining helpfulness and harmlessness into a single objective parameterized by $\nu$, practitioners often rely on data values to identify the $k$ most valuable training examples and train next models on this smaller subset to reduce computational cost as is common practice when using data valuation \citep{datashapley,opendataval}.
However, if the top-$k$ set shifts dramatically with changes in $\nu$, as priorities between harmlessness and helpfulness evolve, practitioners risk repeated, costly re-training. Quantifying robustness to utility choice makes this risk explicit, alerting practitioners up front to whether data valuation is a safe, one-time investment or whether they must plan for ongoing computational overhead as their utility trade-off evolves.

\noindent \textbf{\emph{Multiple-valid-utility} scenario: Detecting when data valuation fails as a heuristic.} In many real-world tasks, like the earlier dog vs. cat classifier example, practitioners must select a utility from several valid options, none uniquely dictated by the problem. Now, one would "morally" expect their induced orderings of points to be consistent. After all, each utility is a valid measure for the same task. It may be hard to think that a data point deemed highly important under accuracy would suddenly vanish from the top tier under F1‐score, or vice versa. In practice, however, we observe precisely such discrepancies, depending on both the training dataset and the semivalue. We compute data values under both accuracy and F1‐score on several public datasets, using three popular semivalues: Shapley \citep{datashapley}, $(4,1)$-Beta Shapley \citep{betashapley}, and Banzhaf \citep{databanzhaf} (see Appendix \ref{subsec:experiment-settings} for experimental details). Table \ref{tab:rank-corr} reports the Kendall rank correlation between the two score sets for each combination of dataset and semivalue. Low correlations reveal cases where rankings change substantially depending on whether accuracy or F1-score is used as the utility \footnote{We extend these experiments to additional classification utilities and rank correlation metrics for completeness (see Appendices \ref{subsec:additional-results-binary-classification-metric} and \ref{subsec:additional-results-spearman}), and observe variability as well.}. And because no utility is inherently better, a practitioner has no principled way to choose between the data values ranking produced under accuracy versus F1-score (or any other valid utility). In such settings, arbitrary utility choices can drive the ordering of data points in entirely different directions: the context (dataset + semivalue) is therefore not "data-valuationable", and semivalue-based data valuation \emph{fails} as a reliable heuristic. By contrast, if rankings remain consistent across all valid utilities for the task, data values truly capture the underlying importance ordering of points. Therefore, knowing how robust the scores ranking is to the utility choice enables a practitioner to determine whether semivalue-based data valuation can be trusted as a meaningful heuristic in that context, or whether it is too sensitive to utility to provide reliable guidance.
\begin{table}[ht]
\centering
\caption{Mean Kendall rank correlation (± standard error) between data values computed with accuracy versus F1-score. For each semivalue and dataset, we approximate data values $5$ times via Monte Carlo sampling. Standard errors reflect the variability across these $5$ trials.}
\vspace{2mm}
\label{tab:rank-corr}
\scriptsize
\begin{tabular}{lccc}
\toprule
Dataset & \multicolumn{3}{c}{Semivalue} \\
\cmidrule(lr){2-4}
        & Shapley & $(4,1)$‐Beta Shapley & Banzhaf \\
\midrule
\textsc{Breast} & $0.95$ $(0.003)$  & $0.95$ $(0.003)$ & $0.97$ $(0.008)$ \\
\textsc{Titanic} & $-0.19$ $(0.007)$ & $-0.17$ $(0.01)$ & $0.94$ $(0.003)$ \\
\textsc{Credit} & $-0.47$ $(0.01)$ & $-0.44$ $(0.02)$ & $0.87$ $(0.01)$ \\
\textsc{Heart} & $0.64$ $(0.006)$ & $0.68$ $(0.004)$  & $0.96$ $(0.003)$ \\
\textsc{Wind} & $0.81$ $(0.008)$ & $0.82$ $(0.008)$ & $0.99$ $(0.002)$ \\
\textsc{Cpu} & $0.59$ $(0.02)$ & $0.62$ $(0.02)$ & $0.86$ $(0.007)$ \\
\textsc{2dplanes} & $0.38$ $(0.01)$ & $0.44$ $(0.01)$ & $0.75$ $(0.03)$ \\
\textsc{Pol} & $0.67$ $(0.02)$ & $0.77$ $(0.01)$ & $0.40$ $(0.04)$ \\
\bottomrule
\end{tabular}
\end{table}

\noindent In this paper, we propose a methodology that enables data valuation practitioners to assess how robust their semivalue-based data valuation results are to the utility choice, in both scenarios. We summarize our main contributions as follows.
\begin{enumerate}
    \item \textbf{Unified geometric modeling of the two scenarios.} We observe that the same geometric representation can capture both scenarios. In this representation, we can, given a semivalue, embed each training data point into a lower-dimensional space (we call the set of embedded points the dataset's \emph{spatial signature}) where any utility becomes a linear functional, making the data valuation framework amenable to a simpler geometric interpretation.
    \item \textbf{A robustness metric derived from the geometric representation.} Building on the notion of \emph{spatial signature}, we introduce a metric that practitioners can compute to quantify how robust the data values' orderings are to the utility choice, providing a practical methodology for assessing the robustness of data valuation results.
    \item \textbf{Empirical evaluation of robustness and insights.} We compute the robustness metric across multiple public datasets and semivalues and find results consistent with our rank‐correlation experiments: contexts with low rank correlation exhibit a low robustness score, and vice versa. Moreover, we observe that Banzhaf consistently achieves higher robustness scores than other semivalues, a phenomenon for which we provide analytical insights.
\end{enumerate}

\paragraph{Related works.} Our focus on the robustness of semivalue‐based data valuation to the utility choice differs from most prior work, which has concentrated on \emph{defining} and \emph{efficiently computing} data valuation scores. The Shapley value \citep{shapley, datashapley}, in particular, has been widely studied as a data valuation method because it uniquely satisfies four key axioms: linearity, dummy player, symmetry, and efficiency. Alternative approaches have emerged by relaxing some of these axioms. Relaxing efficiency gives rise to the semivalue family \citep{semivalues}, which encompasses Leave-One-Out \citep{loo}, Beta Shapley \citep{betashapley}, and Data Banzhaf \citep{databanzhaf}, while relaxing linearity leads to the Least Core \citep{leastcore}. Extensions such as Distributional Shapley \citep{ghorbani2020,kwon21} further adapt the framework to handle underlying data distributions instead of a fixed dataset.
On the algorithmic front, exact semivalue computation is often intractable, as each semivalue requires training models over all possible data subsets, whose number grows exponentially with the dataset size. Consequently, a rich literature on approximation methods has emerged to make data valuation practical at scale \citep{mann1960values, maleki2015, jia2023, datashapley, jia2020, dushapley}. By contrast, when and why data valuation scores remain consistent across different utilities has received far less attention. Prior work has explored related robustness questions in special cases: \cite{databanzhaf} examines how semivalue–based valuations fluctuate when the utility function is corrupted by inherent randomness in the learning algorithm, and \cite{rethinkingdatashapley} studies how different choices of utility affect the reliability of Data Shapley for data-subset selection. Our proposed methodology broadens this scope by quantifying when data valuation remains robust to shifts in the utility function. Specifically, we independently observe the same sensitivity to utility specification as in \cite{diehl2025}, which argues that semivalue-based data valuation can be arbitrary and gameable under utility underspecification. While this work focuses on exposing this vulnerability and its implications, we develop a practical geometric framework that quantifies the sensitivity itself and measures its impact on ranking stability.
\paragraph{Notations.} We set $\mathbb{N}^{*} = \mathbb{N} \setminus \{0\}$. For $n \in \mathbb{N}^{*}$, we denote $[n]:=\{1, . ., n\}$. For a dataset $\mathcal{D}$, we denote as $2^{\mathcal{D}}$ its powerset, i.e., the set of all possible subsets of $\mathcal{D}$, including the empty set $\emptyset$ and $\mathcal{D}$ itself. For $d \in \mathbb{N}^*$, we denote $\mathcal{X} \subseteq \mathbb{R}^d$ and $\mathcal{Y} \subseteq \mathbb{R}$ as an input space and an output space, respectively.
\section{Background}
\label{sec:background}
The data valuation problem involves a dataset of interest $\mathcal{D} = \{z_i=(x_i, y_i)\}_{i \in [n]}$, where for any $i \in [n]$  each $x_i \in \mathcal{X}$ is a feature vector and $y_i \in \mathcal{Y}$ is the corresponding label. Data valuation aims to assign a scalar score to each data point in $\mathcal{D}$, quantifying its contribution to a downstream ML task. These scores will be referred to as \textit{data values}.

\paragraph{Utility functions.} Most data valuation methods rely on \textit{utility functions} to compute data values. A utility is a set function $u : 2^{\mathcal{D}} \to \mathbb{R}$ that maps any subset $S$ of the training set $\mathcal{D}$ to a score indicating its usefulness for performing the considered task. Formally, this can be expressed as $u(S) = \texttt{perf}(\mathcal{A}(S))$, where $\mathcal{A}$ is a learning algorithm that takes a subset $S$ as input and returns a trained model, and $\texttt{perf}$ is a metric used to evaluate the model’s ability to perform the task on a hold-out test set. For convenience, we interchangeably refer to the utility $u$ and the performance metric $\texttt{perf}$ as $u$ inherently depends on $\texttt{perf}$. 

\paragraph{Semivalues.} The most popular data valuation methods assign a value score to each data point in $\mathcal{D}$ using solution concepts from cooperative game theory, known as semivalues \citep{semivalues}. The collection of methods that fall under this category is referred to as \textit{semivalue-based data valuation}. They rely on the notion of \textit{marginal contribution}. 
Formally, for any $i,j \in [n]$, let $\mathcal{D}_j^{\backslash z_i}$ denote the set of all subsets of $\mathcal{D}$ of size $j-1$ that exclude $z_i$. Then, the marginal contribution of $z_i$ with respect to samples of size $j-1$ is defined as
\begin{align*}
    \Delta_j(z_i; u) := \frac{1}{\binom{n-1}{j-1}} \sum_{S \subseteq \mathcal{D}_j^{\backslash z_i}} u\left(S \cup \{z_i\}\right) - u(S) \ .
\end{align*}
The marginal contribution $\Delta_j(z_i; u)$ considers all
possible subsets $S \in \mathcal{D}_j^{\backslash z_i}$ with the same cardinality $j-1$
and measures the average changes of $u$ when datum of interest $z_i$ is removed from $S \cup \{z_i\}$.

Each semivalue-based method is characterized by a weight vector $\omega := (\omega_1, \hdots, \omega_n)$ and assigns a score $\phi(z_i; \omega, u)$ to each data point $z_i \in \mathcal{D}$ by computing a weighted average of its marginal contributions $\{\Delta_j(z_i;u)\}_{j \in [n]}$. Specifically,
\begin{align}
\label{eq:semivalue}
    \phi(z_i; \omega, u) := \sum_{j=1}^{n} \omega_j \Delta_j(z_i; u).
\end{align}
Below, we define the weights for three commonly used semivalue-based methods. Their differences in weighting schemes have geometric implications discussed in Section \ref{subsec:multiple-valid-utility}.

\begin{definition}
\label{def:shapley}
\textit{Data Shapley} \citep{datashapley} is derived from the \textit{Shapley value} \citep{shapley}, a solution concept from cooperative game theory that fairly allocates the total gains generated by a coalition of players based on their contributions. In the context of data valuation, Data Shapley takes a simple average of all the contributions so that its corresponding weight vector $\omega_{\text{shap}}=(\omega_{\text{beta},j})_{j \in [n]}$ is such that for all $j \in [n]$, $\omega_{\text{shap},j} = \frac{1}{n}$.
\end{definition}
\vspace{1pt}
\begin{definition}
\label{def:beta}
\textit{$(\alpha, \beta)$-Beta Shapley} \citep{betashapley} extends Data Shapley by introducing tunable parameters $(\alpha, \beta) \in \mathbb{R}^2$, which control the emphasis placed on marginal contributions from smaller or larger subsets. The corresponding weight vector $\omega_{\text{beta}}$ = $(\omega_{\text{beta},j})_{j \in [n]}$ is such that for all $j \in [n]$, $\omega_{\text{beta}, j} = \binom{n-1}{j-1} \cdot \frac{\texttt{Beta}(j+\beta-1, n-j+\alpha)}{\texttt{Beta}(\alpha, \beta)}$, where $\texttt{Beta}(\alpha, \beta) = \Gamma(\alpha)\Gamma(\beta)/\Gamma(\alpha + \beta)$ and $\Gamma$ is the Gamma function. 
\end{definition}
\vspace{1pt}
\begin{definition}
\label{def:banzhaf}
\textit{Data Banzhaf} \citep{databanzhaf} is derived from the \textit{Banzhaf value} \citep{banzhaf}, a cooperative game theory concept originally introduced to measure a player's influence in weighted voting systems. Data Banzhaf weight's vector $\omega_{\text{banzhaf}} = (\omega_{\text{banzhaf},j})_{j\in[n]}$ is such that for all $j \in [n]$, $\omega_{\text{banzhaf}, j} = \binom{n-1}{j-1} \cdot \frac{1}{2^{n-1}}$.
\end{definition}
These semivalue-based methods satisfy fundamental axioms \citep{semivalues} that ensure desirable properties in data valuation. In particular, any semivalue $\phi(.; \omega, .)$ satisfy the \emph{linearity} axiom which states that for any $\alpha_1, \alpha_2 \in \mathbb{R}$, and any utility $u$, $v$, $\phi(z_i; \omega, \alpha_1 u + \alpha_2 v) = \alpha_1 \phi(z_i; \omega, u) + \alpha_2 \phi(z_i; \omega, v)$.
\section{A methodology to assess data valuation robustness to the utility choice}
\label{sec:methodology}
We now turn to the two scenarios introduced in Section \ref{sec:introduction}, namely the \emph{utility trade‐off} scenario and the \emph{multiple‐valid‐utility} scenario, and show how both admit a common geometric formalization. 
In what follows, we let $\mathcal{D} = \{z_i\}_{i\in[n]}$ be the dataset that the practitioner seeks to score and rank by order of importance, and we let
$\omega$ be the chosen semivalue weight vector, so that each datum score is given by $\phi(z_i;\omega,u)$ as defined in (\ref{eq:semivalue}). 
We start by giving a formal definition of each scenario.

\paragraph{\emph{Utility trade-off} scenario.} In this scenario, the practitioner defines utility as a convex combination of multiple fixed criteria.
In the simplest case where one considers only two fixed criteria $u^A$ and $u^B$ (e.g. helpfulness vs. harmlessness when fine‐tuning an LLM), the utility is \begin{align*}
u_\nu =\nu u^A + (1-\nu) u^B,\qquad \nu\in[0,1],
\end{align*}
where the scalar $\nu$ is explicitly chosen by the practitioner (based on operational priorities) to set the desired trade-off between $u^A$ and $u^B$.
Note that this choice naturally extends to $K$ fixed criteria $u^1, \dots, u^K$ by taking $u_{\nu} = \sum_{k=1}^K \nu_k u^k$. 

\noindent By semivalue linearity, each data point’s score under $u_{\nu}$ is
\begin{align*}
    \phi(z_{i};\omega,u_{\nu}) =\nu\phi(z_{i};\omega,u^A) +(1-\nu)\phi(z_{i};\omega,u^B) \ .
\end{align*}

\paragraph{\emph{Multiple-valid-utility} scenario.} In this scenario, there is no single \emph{correct} utility: practitioners must choose among several equally defensible performance metrics. In the common case of binary classification, one might measure model quality with accuracy, F1‐score, or negative log-loss: each is valid but may yield different data valuation results; see Table \ref{tab:rank-corr}. 
Almost all of these utilities admit a \emph{linear‐fractional} form \citep{koyejo2014} in two test-set statistics: the empirical true-positive rate
$\lambda(S)=\tfrac{1}{m}\sum_{j=1}^m \mathbf{1}[g_S(x_j)=1,\,y_j=1]$ and the empirical positive-prediction rate
$\gamma(S)=\tfrac{1}{m}\sum_{j=1}^m \mathbf{1}[g_S(x_j)=1]$, where $g_S=\mathcal{A}(S)$ is the classifier trained on $S$.
Specifically, they can be written as
\begin{align}
\label{eq:util_linfrac}
u(S)=\frac{c_0+c_1\lambda(S)+c_2\gamma(S)}{d_0+d_1\lambda(S)+d_2\gamma(S)},
\end{align}
with coefficients $(c_\bullet,d_\bullet)$ determined by the chosen utility (see Table \ref{tab:linear-fractional-utilities}). 
\noindent Any linear–fractional utility of the form (\ref{eq:util_linfrac}) with $d_0 \neq\!0$ admits the first–order expansion at $(\lambda,\gamma)=(0,0)$:
\begin{align*}
u(S)=\frac{c_0}{d_0}
+\frac{c_1 d_0 - c_0 d_1}{d_0^2}\lambda(S)
+\frac{c_2 d_0 - c_0 d_2}{d_0^2}\gamma(S)
+o\big(\|(\lambda(S),\gamma(S))\|\big).
\end{align*}
Thus, to first order, $u$ is affine in $(\lambda,\gamma)$ and we validate this surrogate empirically (see Appendix \ref{subsec:first-order-approx}). Thus, by linearity of the semivalue and the fact that constants vanish, for each $z_i$, it is reasonable to consider that
\begin{align*}
\phi(z_i;\omega,u) = \frac{c_1 d_0 - c_0 d_1}{d_0^2} \phi(z_i;\omega,\lambda) + \frac{c_2 d_0 - c_0 d_2}{d_0^2},\phi(z_i;\omega,\gamma).
\end{align*}
\emph{Remark.} The \emph{multiple-valid utility} scenario also extends to multiclass classification metrics with $u = \sum_{k=1}^K \alpha_k u_k$ for $K>2$ (see Appendix \ref{subsec:extension-multiclass} for details).
\subsection{A unified geometric modeling of the two scenarios}
\label{subsec:unified-geometric-modeling}
Both scenarios can be unified by observing that the practitioner’s utility lies in a two-dimensional family spanned by the two fixed base utilities $u_1$ and $u_2$. Concretely, we consider
\begin{align*}
u_{\alpha}(S)\;=\;\alpha_1\,u_1(S)\;+\;\alpha_2\,u_2(S),
\quad
\alpha=(\alpha_1,\alpha_2)\in\mathbb{R}^2,
\end{align*}
so that varying the utility means moving $\alpha$ in this two‐dimensional parameter space.
\noindent In the \emph{utility trade‐off} scenario restricted to two fixed criteria, $(u_1,u_2)=(u^A,u^B)$, and $(\alpha_1,\alpha_2)=(\nu,1-\nu)$ ranges over $[0,1]^2$. In the \emph{multiple‐valid-utility} scenario for binary classification, $(u_1,u_2)=(\lambda,\gamma)$, and $(\alpha_1,\alpha_2) \in \mathbb{R}^2$.
In either case, the objective is the same: to quantify robustness, i.e., how stable the ranking of semivalue scores $\{\phi\bigl(z_i;\omega,\,u_{\alpha}\bigr)\}$ is as we change $\alpha$. 

\noindent With this unified view in hand, we have the following proposition, which can be extended to the general case $u_{\alpha}=\sum_{k=1}^K\alpha_ku_k$. A detailed extension is provided in Appendix \ref{subsec:proof-proposition}.
\begin{proposition}
\label{claim:spatial-signature}
Let $\mathcal{D}$ be any dataset of size $n$ and let $\omega \in \mathbb{R}^n$ be a semivalue weight vector. Then there exists a map $\psi_{\omega,\mathcal{D}}:\mathcal{D} \longrightarrow \mathbb{R}^2$ 
such that for every utility $u_\alpha=\alpha_1 u_1 +\alpha_2 u_2$, $\phi\bigl(z; \omega, u_\alpha\bigr) = 
\bigl\langle \psi_{\omega,\mathcal{D}}(z), \alpha \bigr\rangle$, for any $z\in\mathcal{D}$.
We call
$\mathcal S_{\omega,\mathcal D}=\{\psi_{\omega,\mathcal D}(z)\mid z\in\mathcal D\}$ the \emph{spatial signature} of $\mathcal{D}$ under semivalue $\omega$.
\end{proposition}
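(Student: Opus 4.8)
The plan is to obtain the proposition as an immediate consequence of the \emph{linearity} axiom of semivalues recalled at the end of Section~\ref{sec:background}. The key realization is that, once the two base utilities $u_1,u_2$ are fixed, the whole two-parameter family $u_\alpha=\alpha_1u_1+\alpha_2u_2$ is pushed forward by $\phi(z;\omega,\cdot)$ to a \emph{linear} map in $\alpha$, whose coefficients depend only on $(\mathcal D,\omega,u_1,u_2)$ and not on $\alpha$ itself.

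Concretely, I would proceed in three short steps. First, define the candidate embedding explicitly by
\[
\psi_{\omega,\mathcal{D}}(z)\;:=\;\bigl(\phi(z;\omega,u_1),\,\phi(z;\omega,u_2)\bigr)\;\in\;\mathbb{R}^2 ,
\qquad z\in\mathcal{D},
\]
and observe that this is well defined since $u_1,u_2:2^{\mathcal D}\to\mathbb R$ are fixed set functions and $\phi(\cdot;\omega,\cdot)$ is given by the finite sum~(\ref{eq:semivalue}). Second, fix any $\alpha=(\alpha_1,\alpha_2)\in\mathbb{R}^2$ and apply the linearity axiom with $(\alpha_1,\alpha_2,u,v)=(\alpha_1,\alpha_2,u_1,u_2)$ to get, for every $z\in\mathcal D$,
\[
\phi(z;\omega,u_\alpha)
=\alpha_1\,\phi(z;\omega,u_1)+\alpha_2\,\phi(z;\omega,u_2)
=\bigl\langle\psi_{\omega,\mathcal D}(z),\alpha\bigr\rangle ,
\]
which is exactly the asserted identity; the set $\mathcal S_{\omega,\mathcal D}$ is then the image of $\mathcal D$ under $\psi_{\omega,\mathcal D}$. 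Third, for self-containedness I would note that linearity itself is elementary from~(\ref{eq:semivalue}): the map $u\mapsto u(S\cup\{z_i\})-u(S)$ is linear, hence so is each $\Delta_j(z_i;\cdot)$ (an average of such maps over $S\in\mathcal D_j^{\backslash z_i}$), and a fixed weighted sum $\sum_j\omega_j\Delta_j(z_i;\cdot)$ of linear maps is linear.

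For the general case $u_\alpha=\sum_{k=1}^K\alpha_ku_k$ referenced in the statement, the same argument applies verbatim: set $\psi_{\omega,\mathcal D}(z)=\bigl(\phi(z;\omega,u_1),\dots,\phi(z;\omega,u_K)\bigr)\in\mathbb R^K$ and apply linearity term by term to recover $\phi(z;\omega,u_\alpha)=\langle\psi_{\omega,\mathcal D}(z),\alpha\rangle$; this is the extension deferred to Appendix~\ref{subsec:proof-proposition}.

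I do not expect a genuine obstacle here — the proposition is a repackaging of semivalue linearity into geometric language. The only points deserving a sentence of care are (i) making explicit that $\psi_{\omega,\mathcal D}$ is independent of $\alpha$, which is what licenses the ``any utility becomes a linear functional'' reading, and (ii) in the \emph{multiple-valid-utility} instantiation, recalling that the constant term of the first-order surrogate~(\ref{eq:util_linfrac}) has already been discarded before invoking the proposition (constants have zero marginal contribution, hence contribute nothing to $\phi$), so that the relevant base utilities are precisely $(u_1,u_2)=(\lambda,\gamma)$ as used in Section~\ref{sec:methodology}.
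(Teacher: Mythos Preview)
Your proposal is correct and matches the paper's own proof essentially line for line: define $\psi_{\omega,\mathcal D}(z)=(\phi(z;\omega,u_1),\phi(z;\omega,u_2))$, invoke semivalue linearity to obtain $\phi(z;\omega,u_\alpha)=\alpha_1\phi(z;\omega,u_1)+\alpha_2\phi(z;\omega,u_2)=\langle\psi_{\omega,\mathcal D}(z),\alpha\rangle$, and note that the same argument extends verbatim to $K$ base utilities. Your additional remarks (deriving linearity directly from~(\ref{eq:semivalue}), and that constants vanish under $\phi$) are not in the paper's proof but are correct and harmless.
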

Consequently, ranking the data points in $\mathcal{D}$ by $u_\alpha$ is equivalent to sorting their projections onto the vector $\alpha$:
$$
\phi(z_i; \omega, u_{\alpha}) >  \phi(z_j; \omega, u_{\alpha})
\quad\Longleftrightarrow\quad
\langle \psi_{\omega,\mathcal D}(z_i),\alpha\rangle
> \langle \psi_{\omega,\mathcal D}(z_j),\alpha\rangle.
$$
Moreover, since scaling $\alpha$ by any positive constant does not change the sign of $\langle \psi_{\omega, \mathcal{D}}(z_i),\alpha\rangle - \langle \psi_{\omega, \mathcal{D}}(z_j),\alpha\rangle$, any two utilities $u_{\alpha}$ and $u_{\alpha'}$ whose coefficient vectors point in the same direction induce identical rankings. Thus, each utility in the parametric family can be uniquely identified by its normalized vector $\bar{\alpha} =\frac{\alpha}{\|\alpha\|} \in \mathcal{S}^1$, with $\bar{\alpha}$ ranging over the unit circle $\mathcal{S}^1$. Consequently, ranking stability to the utility choice reduces to analyzing how the projections order of $\{\langle \psi_{\omega, \mathcal{D}}(z), \alpha\rangle \mid z \in \mathcal{D}\} \subset\mathbb{R}^2$ changes as we rotate the unit‐vector $\bar{\alpha}$ around $\mathcal{S}^1$.
\noindent Figure \ref{fig:geometry-illustration} illustrate the geometric mapping at hand. 
\begin{figure*}[ht]
    \centering
    \subfloat[Shapley]{%
        \includegraphics[width=0.30\textwidth,  clip, trim=0.0cm 0.3cm 0.0cm 0.3cm]{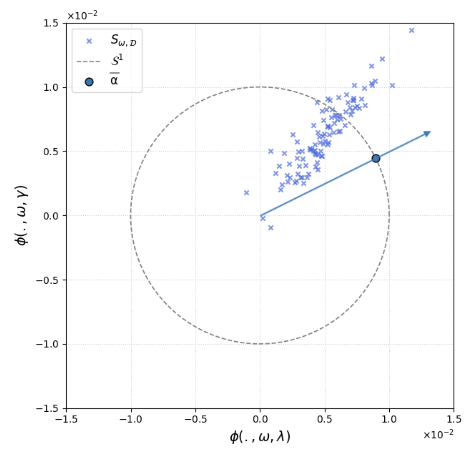}
    }
    \hfill
    \subfloat[$(4,1)$-Beta Shapley]{%
        \includegraphics[width=0.30\textwidth, clip, trim=0.0cm 0.3cm 0.0cm 0.3cm]{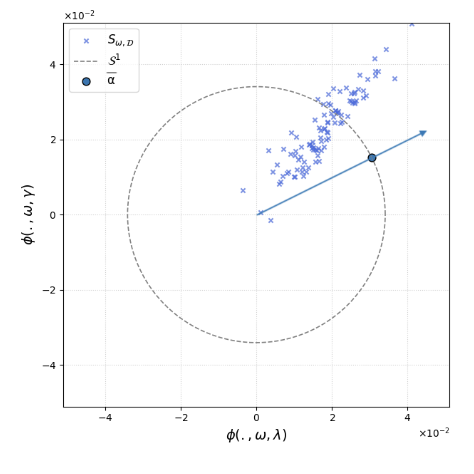}
    }
    \hfill
    \subfloat[Banzhaf]{%
        \includegraphics[width=0.30\textwidth, clip, trim=0.0cm 0.3cm 0.0cm 0.3cm]{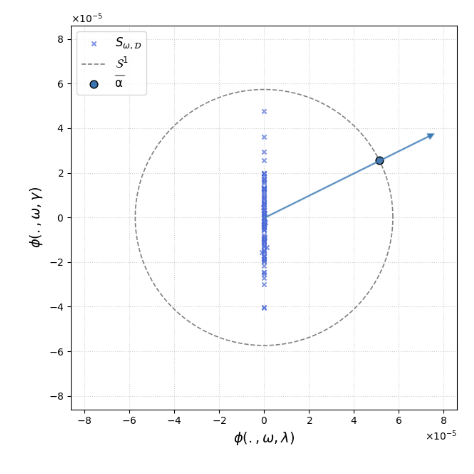}
    }
    \caption{Spatial signature of the \textsc{wind} dataset for three semivalues (a) Shapley, (b) $(4,1)$-Beta Shapley, and (c) Banzhaf.  Each cross marks the embedding $\psi_{\omega,\mathcal{D}}(z)$ of a data point (with $u_1=\lambda$, $u_2=\gamma$), the dashed circle is the unit circle $\mathcal{S}^1$, and the filled dot indicates one utility direction $\bar{\alpha}$.}
    \label{fig:geometry-illustration}
    \vskip -0.2in
\end{figure*}

\noindent Figure \ref{fig:geometry-illustration} shows that, under Banzhaf, the points lie almost exactly on a single line through the origin, much more so than under Shapley or $(4,1)$-Beta Shapley. This near-collinearity persists across all datasets used in the experiments (see Appendix \ref{subsec:additional-figures}). In Proposition \ref{prop:correlation-decomposition} and Section \ref{subsec:multiple-valid-utility}, we give insight into how this geometric property directly leads to Banzhaf’s higher robustness.
\subsection{A robustness metric derived from the geometric representation}
\label{subsec:robustness-metric}
Building on the geometric mapping of semivalue-based data valuation proposed in Section \ref{subsec:unified-geometric-modeling}, a natural way to quantify how robust a semivalue scores ranking is to changes in the utility is to ask \emph{how far on the unit circle one must rotate from a given utility direction before the induced ordering undergoes a significant change?} 

\noindent Formally, let $\bar{\alpha}_0$ be the starting utility direction, whose semivalue scores induce a reference ranking of the data points. We say that two points $z_i$ and $z_j$ experience a \emph{pairwise swap} when their order under a new direction $\bar{\alpha}$ is opposite to their order under $\bar{\alpha}_0$.  We then aim to define robustness as the smallest geodesic distance on $\mathcal{S}^1$ that one must travel from $\bar{\alpha}_0$ before $p$ pairwise swaps have occurred.

\noindent To make this concrete, we express the required geodesic distance in closed form by characterizing the critical angles on $\mathcal{S}^1$ at which pairwise swaps occur. For each unordered pair $(i,j)$, let $v_{ij} = \psi_{\omega,\mathcal{D}}(z_i) - \psi_{\omega,\mathcal{D}}(z_j)$ and observe that the condition $\langle \alpha,v_{ij}\rangle = 0$ defines two antipodal “cut” points on the unit circle: $H_{ij}
=\bigl\{\alpha\in \mathcal{S}^1 : \langle \alpha,v_{ij}\rangle=0\bigr\}.$ Across all $\binom{n}{2}=N$ pairs, these give $2N$ cuts, whose polar angles we list in ascending order as
\begin{align*}
0\le\theta_{1}\le\theta_{2}\le\dots\le\theta_{2N}<2\pi,
\end{align*}
and then wrap around by setting $\theta_{2N+1}=\theta_1+2\pi$. The open arcs between successive cuts are $A_k = (\theta_k,\theta_{k+1})$ of length $\lambda_k = \theta_{k+1}-\theta_k, \quad k=1,\dots,2N$ so that $\sum_{k=1}^{2N}\lambda_k=2\pi$. These open arcs partition $\mathcal{S}^1$ into ranking regions, meaning that the induced semivalue ordering is identical for every utility direction $\bar{\alpha} \in A_k$. Figure \ref{fig:ranking-regions} illustrates two example spatial signatures and their induced ranking regions. We view these arcs cyclically by taking indices modulo $2N$. Now let our reference direction $\bar{\alpha}_0$ have polar angle $\varphi_0 \in(\theta_k,\theta_{k+1})$. To induce $p$ swaps, one must cross $p$ distinct arcs: counterclockwise this is $S_k^+(p) =\sum_{i=1}^p \lambda_{(k+i)\bmod 2N}$ while clockwise it is $S_k^-(p) =\sum_{i=1}^p \lambda_{(k-i)\bmod 2N}$. Writing $t=\varphi_0-\theta_k\in(0,\lambda_k)$, the minimal geodesic distance from $\bar{\alpha}_0$ to achieve $p$ swaps is\footnote{All cut‐angles, arc‐lengths, and resulting geodesic distance $\rho_p$ are entirely determined by the spatial signature $\mathcal{S}_{\omega, \mathcal{D}}$. For brevity, we omit the explicit dependence on it from our notations.}
\begin{align*}
\rho_p(\bar{\alpha}_0)
=\min\bigl\{S_k^+(p)-t,\quad S_k^-(p)+t\bigr\}.
\end{align*}
\begin{figure}[ht]
    \vspace{-3mm}
    \centering
    \includegraphics[width=0.7\linewidth]{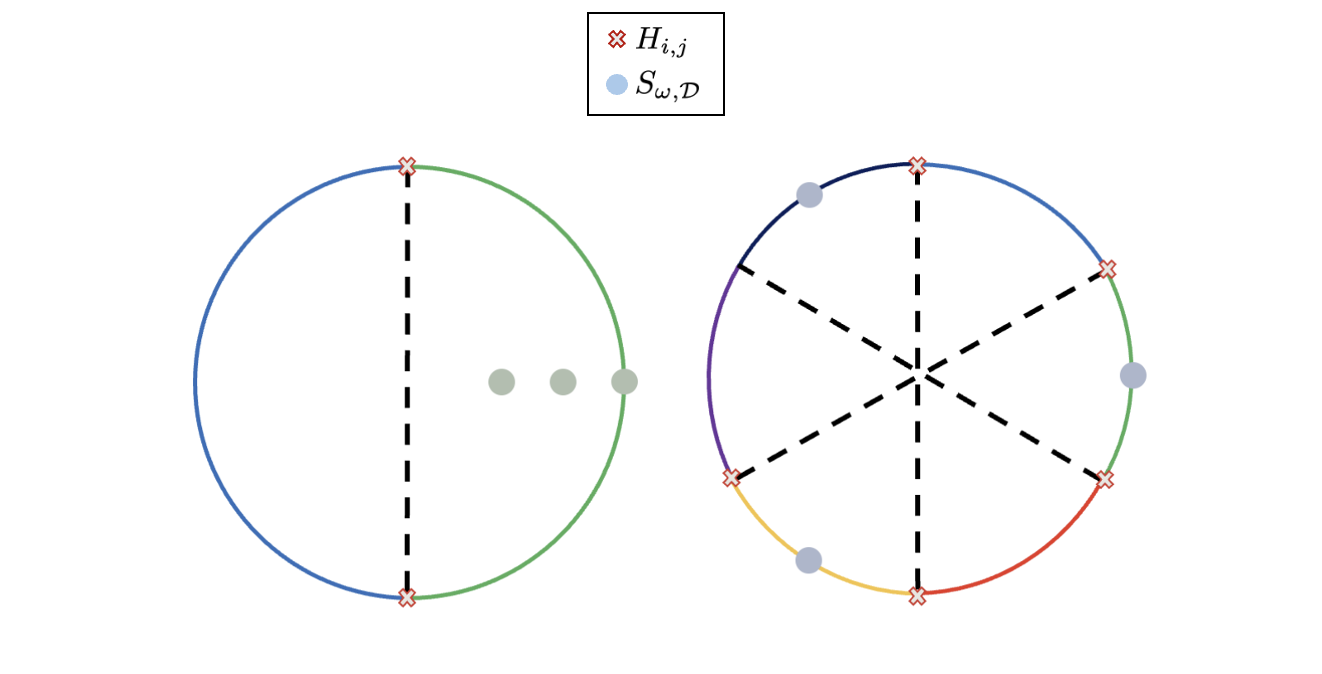}
    \caption{Ranking regions induced by utilities on the unit circle $\mathcal{S}^1$ for two example spatial signatures. Each colored arc on the unit circle corresponds to one of the open arcs $A_k$. Within any single arc, the projection order (and hence the data‐point ranking) remains unchanged.}
    \label{fig:ranking-regions}
\end{figure}
\vspace{-3mm}
\noindent We now define our robustness metric based on $\rho_p$.
\begin{definition}[Robustness metric $R_p$]
\label{def:robustness-metric}
Let $\mathcal{S}_{\omega,\mathcal{D}}=\{\psi_{\omega, \mathcal{D}}(z_i)\}_{i \in [n]}$ be the spatial signature for dataset $\mathcal D$ under semivalue weights $\omega$. For $\bar{\alpha} \in \mathcal{S}^1$, let $\rho_p(\bar{\alpha})$ denote the minimal geodesic distance on $\mathcal{S}^1$ one must travel from $\bar{\alpha}$ to incur $p < \binom{n}{2}$ pairwise swaps in the induced ranking. Define the average $p$–swaps distance
$\mathbb{E}_{\bar{\alpha}\sim \mathrm{Unif}(\mathcal{S}^1)}[\rho_p(\bar{\alpha})]
=\frac{1}{2\pi}\int_0^{2\pi} \rho_p(t)dt$. Then the \emph{robustness metric} $R_p\in[0,1]$ is
$$
R_p(S_{\omega, \mathcal{D}})
= \frac{\mathbb{E}_{\bar{\alpha}\sim \mathrm{Unif}(\mathcal{S}^1)}[\rho_p(\bar{\alpha})]}{\displaystyle\max_{S_{\omega,\mathcal D}}\mathbb{E}_{\bar{\alpha}\sim \mathrm{Unif}(\mathcal{S}^1)}[\rho_p(\bar{\alpha)}]}
=
\frac{\mathbb{E}_{\bar{\alpha}\sim \mathrm{Unif}(\mathcal{S}^1)}[\rho_p({\bar{\alpha}})]}{\pi/4},
$$
where the denominator $\pi/4$ is the maximum possible value of $\mathbb{E}_{\bar{\alpha}}[\rho_p(\bar{\alpha})]$ which occurs precisely when all embedded points $\psi_{\omega,\mathcal D}(z_i)$ are collinear \footnote{Proof of this claim is given in Appendix \ref{subsec:maximum-distance-collinearity}.}.
\end{definition}
Concretely, given a spatial signature, the $p$-robustness metric $R_p$ of this signature is the normalized average minimal angular distance one must rotate on the unit circle to force exactly $p$ pairs of points to swap in order in the induced ranking.
\vspace{-2mm}
\paragraph{Interpretation.}  $R_p$ close to 1 means that one can rotate $\bar{\alpha}$ significantly without flipping more than $p$ pairs, so the ranking is stable. $R_p$ close to $0$ means that even a tiny rotation will likely flip $p$ pairs. Moreover, if there are no tied ranks, $R_p$ captures how far in expectation one must move from a utility direction before the Kendall rank correlation degrades by $2p/\binom{n}{2}$ (see Appendix \ref{subsec:ties-kendall} for details).
\vspace{-2mm}
\paragraph{Computation.} We derive a closed-form expression for $\mathbb{E}_{\bar{\alpha}\sim \mathrm{Unif}(\mathcal{S}^1)}[\rho_p(\bar{\alpha})]$ that computes exactly in $\mathcal{O}(n^2\log n)$ time (see Appendix \ref{subsec:closed-form-for-average-distance}). In contrast, semivalue approximation methods based on Monte Carlo sampling require $\mathcal{O}(n^2 \log n)$ \emph{model trainings} to estimate the data values \citep{jia2023}. Therefore, in practice, once the semivalue scores are in hand, the additional cost of computing $R_p$ is negligible compared to the heavy model-training overhead, making this robustness metric an affordable add‐on to any data valuation pipeline. 
\vspace{-2mm}
\paragraph{Extension to $K > 2$.} The robustness metric $R_p$ extends naturally to $K>2$ base utilities, where utility directions $\bar{\alpha}$ lie on the unit sphere $\mathcal{S}^{K-1}$. While no closed-form exists for $\mathbb{E}[\rho_p]$ in this case, it can be efficiently approximated via Monte Carlo sampling. Appendix \ref{subsec:closed-form-for-average-distance} provides convergence guarantees.
\subsection{Spatial alignment and the robustness of semivalues} 
The robustness metric $R_p$ (Definition \ref{def:robustness-metric}) measures the stability of the data-value ranking as the utility varies. It increases with the \emph{collinearity} of the spatial signature $S_{\omega,\mathcal D}=\{\psi_{\omega,\mathcal D}(z):z\in\mathcal D\}\subset\mathbb R^2$, which is captured by the Pearson correlation between the two coordinate score vectors for base utilities $u_1$ and $u_2$. In Proposition \ref{prop:correlation-decomposition}, we express this correlation directly in terms of marginal contributions, and we characterize how it depends on semivalue weights under mild assumptions.

\noindent Let $\phi(u_a)=(\phi(z_1;\omega,u_a),\ldots,\phi(z_n;\omega,u_a))\in\mathbb R^n$ for $a\in\{1,2\}$. For $v,w\in\mathbb R^n$, write $\bar v=\tfrac1n\sum_i v_i$, $\Var(v)=\tfrac1n\sum_i (v_i-\bar v)^2$, and $\Cov(v,w)=\tfrac1n\sum_i (v_i-\bar v)(w_i-\bar w)$. We study
\begin{align*}
\operatorname{Corr}\big(\phi(u_1),\phi(u_2)\big)=\frac{\Cov(\phi(u_1),\phi(u_2))}{\sqrt{\Var(\phi(u_1))\Var(\phi(u_2))}}.
\end{align*}
\begin{proposition}[Utility alignment and semivalue weights]
\label{prop:correlation-decomposition}
Let $u_1,u_2$ be two base utilities and $\phi(u_1),\phi(u_2)\in\mathbb R^n$ their semivalue score vectors. If for all $j\ne k$ the marginal-contribution vectors $\Delta_j(u_1) := (\Delta_j(z_1, u_1), \ldots, \Delta_j(z_n, u_1))$ and $\Delta_k(u_2) := (\Delta_k(z_1, u_2), \ldots, \Delta_k(z_n, u_2))$ are uncorrelated across points, then
\begin{align*}
\operatorname{Corr}(\phi(u_1),\phi(u_2))
=\frac{\sum_{j=1}^n \omega_j^2\Cov\big(\Delta_j(u_1),\Delta_j(u_2)\big)}
{\sqrt{\sum_{j=1}^n \omega_j^2\Var\big(\Delta_j(u_1)\big)}
 \sqrt{\sum_{j=1}^n \omega_j^2\Var\big(\Delta_j(u_2)\big)}}.
\end{align*}
Defining the size-$j$ alignment factor
\begin{align*}
r_j:=\Cov\big(\Delta_j(u_1),\Delta_j(u_2)\big)
=\operatorname{Corr}\big(\Delta_j(u_1),\Delta_j(u_2)\big)\sqrt{\Var(\Delta_j(u_1))\Var(\Delta_j(u_2))},
\end{align*}
then the correlation increases as the semivalue weights $\{\omega_j\}$ concentrate on sizes $j$ where $r_j$ is large.
\end{proposition}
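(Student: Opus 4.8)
The plan is to exploit the linearity of the semivalue in its utility argument together with the bilinearity of the empirical covariance. Reading Equation~(\ref{eq:semivalue}) coordinate-wise over $\mathcal D$, the score vector is $\phi(u_a)=\sum_{j=1}^n\omega_j\,\Delta_j(u_a)$ for $a\in\{1,2\}$, where $\Delta_j(u_a)=(\Delta_j(z_1;u_a),\ldots,\Delta_j(z_n;u_a))\in\mathbb R^n$. Since $\Cov(\cdot,\cdot)$ as defined in the statement is a symmetric bilinear form on $\mathbb R^n$ (it is the ordinary inner product of the centered vectors, scaled by $1/n$), I would first expand
\begin{align*}
\Cov\big(\phi(u_1),\phi(u_2)\big)=\sum_{j=1}^n\sum_{k=1}^n \omega_j\omega_k\,\Cov\big(\Delta_j(u_1),\Delta_k(u_2)\big),
\end{align*}
and likewise $\Var(\phi(u_a))=\Cov(\phi(u_a),\phi(u_a))=\sum_{j,k}\omega_j\omega_k\,\Cov(\Delta_j(u_a),\Delta_k(u_a))$.

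The second step is to invoke the hypothesis: whenever $j\ne k$ the size-$j$ and size-$k$ marginal-contribution vectors are uncorrelated across the $n$ points, i.e.\ have vanishing empirical covariance, so every off-diagonal term in each of the three double sums drops out. This collapses each sum to its diagonal, yielding $\Cov(\phi(u_1),\phi(u_2))=\sum_j\omega_j^2\Cov(\Delta_j(u_1),\Delta_j(u_2))$ and $\Var(\phi(u_a))=\sum_j\omega_j^2\Var(\Delta_j(u_a))$. Substituting these three identities into $\operatorname{Corr}(\phi(u_1),\phi(u_2))=\Cov(\phi(u_1),\phi(u_2))/\sqrt{\Var(\phi(u_1))\Var(\phi(u_2))}$ (well-defined under the mild non-degeneracy $\Var(\phi(u_a))>0$) gives the claimed closed form, and the displayed identity for $r_j$ is then just the rewriting of $\Cov(\Delta_j(u_1),\Delta_j(u_2))$ through the Pearson correlation of the two size-$j$ marginal-contribution vectors.

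For the monotonicity conclusion I would set $\sigma_{a,j}=\sqrt{\Var(\Delta_j(u_a))}$ and $\rho_j=\operatorname{Corr}(\Delta_j(u_1),\Delta_j(u_2))$, so $r_j=\rho_j\sigma_{1,j}\sigma_{2,j}$ and the formula reads $\operatorname{Corr}(\phi(u_1),\phi(u_2))=\big(\sum_j\omega_j^2 r_j\big)\big/\big(\sqrt{\sum_j\omega_j^2\sigma_{1,j}^2}\,\sqrt{\sum_j\omega_j^2\sigma_{2,j}^2}\big)$. By Cauchy--Schwarz applied to $(\omega_j\sigma_{1,j})_j$ and $(\omega_j\sigma_{2,j})_j$, the denominator is at least $\sum_j\omega_j^2\sigma_{1,j}\sigma_{2,j}$, so the correlation is bounded by the weighted average $\sum_j q_j\rho_j$ of the size-$j$ alignments with weights $q_j\propto\omega_j^2\sigma_{1,j}\sigma_{2,j}$, with equality exactly when $\sigma_{1,j}/\sigma_{2,j}$ is constant in $j$. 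Specializing to the regime where the per-size variance profiles are (approximately) size-independent makes this bound tight and reduces the claim to the elementary fact that a weighted average increases when mass is moved toward larger summands, i.e.\ toward sizes with larger $r_j$. I expect the only genuinely delicate point to be notational rather than mathematical: one must read the uncorrelatedness hypothesis as applying to \emph{all} pairs of distinct sizes, including $\Delta_j(u_1)$ versus $\Delta_k(u_1)$, since the variance expansions also need those cross terms to vanish; beyond that, the argument is pure bilinearity bookkeeping plus one application of Cauchy--Schwarz.
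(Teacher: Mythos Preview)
Your approach is essentially the same as the paper's: expand via bilinearity of the empirical covariance, drop the off-diagonal terms by hypothesis, and substitute into the Pearson formula. Two points are worth noting. First, you correctly flag that the variance expansions require the \emph{within-utility} cross-size covariances $\Cov(\Delta_j(u_a),\Delta_k(u_a))$ to vanish for $j\ne k$, whereas the proposition as stated only imposes the \emph{cross-utility} condition $\Cov(\Delta_j(u_1),\Delta_k(u_2))=0$; the paper's proof glosses over this with a ``Similarly'' and implicitly assumes the stronger hypothesis, so your caveat is well placed. Second, your Cauchy--Schwarz treatment of the monotonicity claim is more than the paper provides: the paper's appendix stops at the interpretation of $r_j$ and does not formalize the ``correlation increases as weights concentrate on large $r_j$'' statement, so your bound $\operatorname{Corr}(\phi(u_1),\phi(u_2))\le\sum_j q_j\rho_j$ with $q_j\propto\omega_j^2\sigma_{1,j}\sigma_{2,j}$ and the identification of the equality case are genuine additions.
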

\noindent The proof is given in Appendix \ref{subsec:insights-derivation-details}.
\section{Empirical evaluation of robustness and discussion} 
\label{sec:evaluation-discussion}
\subsection{Multiple-valid utility scenario}
\label{subsec:multiple-valid-utility}
In this section, we empirically validate the $p$-robustness metric $R_p$ in the \emph{multiple‐valid-utility} scenario. We evaluate $R_p$ for three semivalues, Shapley, $(4,1)$-Beta Shapley, and Banzhaf, on several public binary classification datasets. The results in Figure \ref{fig:robustness-results} (detailed in Table \ref{tab:robustness-results}) closely track Section \ref{sec:introduction}'s correlation experiments reported in Table \ref{tab:rank-corr}: datasets and semivalues that exhibit low rank correlations between different utilities also show low $R_p$, and vice versa. 

\begin{figure}[ht]
    \centering
    \includegraphics[width=1\linewidth]{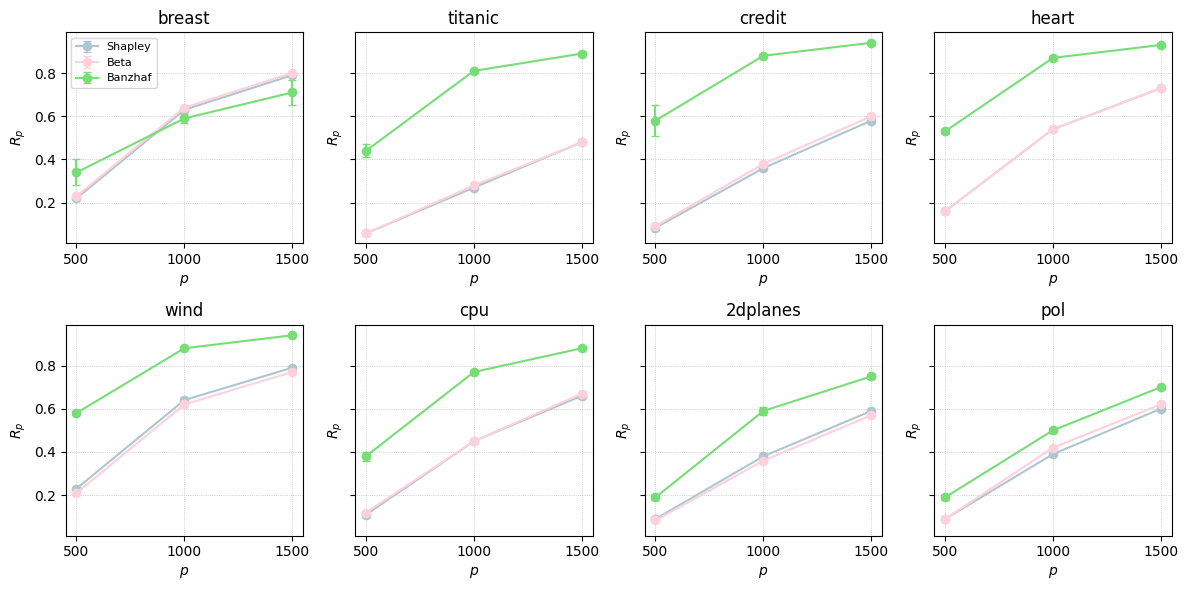}
    \caption{Mean $p$-robustness $R_p$ (error bars = standard errors over $5$ Monte Carlo approximations) plotted against $p \in \{500,1000,1500\}$ for each dataset and semivalue. Each plot corresponds to one dataset, with Shapley (blue), $(4,1)$-Beta Shapley (pink), and Banzhaf (green) curves. Higher $R_p$ indicates greater ranking stability under utility shifts.}
    \label{fig:robustness-results}
\end{figure}

\noindent We also observe that across practically every dataset and choice of $p$, using the Banzhaf weights achieves the highest $R_p$. This makes sense geometrically: Figure \ref{fig:geometry-illustration} and the analogous plots for the other datasets in Appendix \ref{subsec:additional-figures} show that the Banzhaf weighting scheme tends to \emph{collinearize} the spatial signature, i.e., push the points closer to a common line through the origin. And since the maximum possible average swap‐distance occurs when all embedded points are collinear, this near‐collinearity explains why Banzhaf yields the greatest robustness to utility shifts. This observation aligns with prior empirical findings \citep{databanzhaf, robustbanzhaf}, which reported that Banzhaf scores tend to vary less than other semivalues under changing conditions.

\noindent These geometric insights are made rigorous by Proposition \ref{prop:correlation-decomposition}, applied to the correlation between the semivalue vectors for $\lambda$ and $\gamma$, i.e., $\operatorname{Corr}(\phi(\lambda),\phi(\gamma))$. It says that under a mild assumption on cross–size correlations of marginal contributions (empirically verified on \textsc{breast} and \textsc{titanic} notably; see Appendix \ref{subsec:verif-ass-prop}), this correlation decomposes into a weighted average of size–specific alignment factors $r_j$, with weights $\omega_j^2$. Figure \ref{fig:r_j-vs-j} plots the normalized $r_j$ versus coalition size $j$ and overlays the Shapley, $(4,1)$-Beta, and Banzhaf weight profiles. On \textsc{breast}, $r_j$ is uniformly high across $j$, so all three semivalues yield similar collinearity, which is consistent with the overlapping robustness curves in Figure \ref{fig:robustness-results}. On \textsc{titanic}, $r_j$ peaks at intermediate $j$ and decays at the extremes; because Banzhaf concentrates weight in this middle region, it attains a larger weighted average (hence higher overall correlation), explaining why its robustness curve sits well above Shapley and $(4,1)$-Beta in Figure \ref{fig:robustness-results}.
\begin{figure}[ht]
    \centering \includegraphics[width=0.8\linewidth]{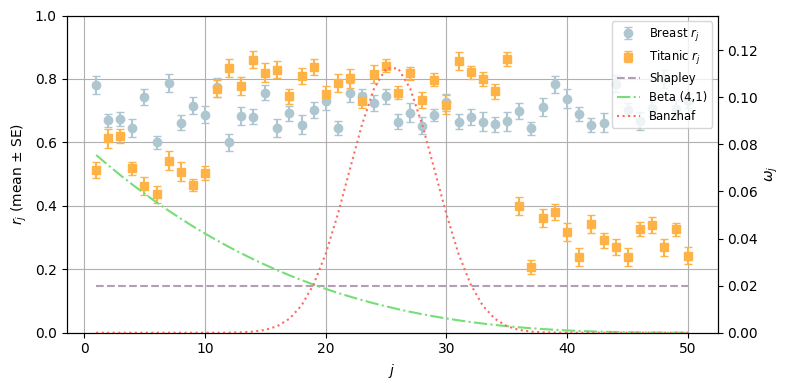}
    \caption{Mean (normalized) $r_j$ (error bars = standard errors over $5$ semivalue approximations) for \textsc{breast} (blue) and \textsc{titanic} (red) vs.  coalition size $j$, with semivalue weights $\omega$ overlaid.}
    \label{fig:r_j-vs-j}
\end{figure}
\\
Additional robustness experiments, including comparison to top-$k$ stability metrics and extensions to $K > 2$ base utilities, are reported in Appendices \ref{subsec:multiple-valid-utility-xp-extensions} and \ref{subsec:results-on-top-k-stability-metrics} and are discussed in Appendix \ref{subsec:discussion}.

\subsection{Utility trade-off scenario}
\label{subsec:utility-trade-off}
We also evaluate $R_p$ in the \emph{utility trade-off} scenario, where utility is defined as a convex combination of competing criteria. Specifically, we consider utilities of the form $u_\nu = \nu u_1 + (1-\nu)u_2$ with $\nu \in [0,1]$, and analyze how semivalue-based rankings (using Shapley, $(4,1)$-Beta Shapley, and Banzhaf) evolve as $\nu$ varies. We run this on \emph{regression} datasets (\textsc{diabetes}, \textsc{california housing}, \textsc{ames}) for utility pairs MSE/MAE, MSE/R$^2$, and MAE/R$^2$, and on \emph{multiclass classification} datasets (\textsc{digits}, \textsc{wine}, \textsc{iris}) for utility pairs Accuracy/macro-F1, Accuracy/macro-Recall, and macro-F1/macro-Recall. Across all settings, Banzhaf achieves the highest $R_p$, indicating more stable rankings. These results are consistent with the ones obtained in \emph{multiple-valid utility} scenario (see Section \ref{subsec:multiple-valid-utility}). The data sources are given in Appendix \ref{subsec:experiment-settings} while full results with experimental settings are reported in Tables \ref{tab:mse-mae}, \ref{tab:mse-r2}, \ref{tab:mae-r2}, \ref{tab:acc-macrof1}, \ref{tab:acc-recall}, and \ref{tab:macrof1-macrorecall}. Additional experiments for the case $K > 2$ base utilities are detailed in Appendix \ref{subsubsec:case-K-ge-2} and are discussed in Appendix \ref{subsec:discussion}.
\section{Conclusion}
\label{sec:conclusion}
This work studies the robustness of semivalue-based data valuation methods under utility shifts in two scenarios where it matters, by introducing a unified geometric view via the \emph{spatial signature} and a parametric robustness measure $R_p$. This yields a practical way to quantify the stability of data-value rankings as the utility varies. 
\\ \\
\textbf{Limitation.}
While the framework is general, our analysis of the \emph{multiple-valid-utility} scenario focuses on binary classification metrics in the linear–fractional family and on a subset of multiclass metrics. Non-linear-fractional binary metrics (e.g., negative log–loss) and regression utilities fall outside our scope in this scenario. 
\\ \\
\textbf{Future works.} By revealing cases in which semivalue-based data valuation fails to produce reliable scores, we aim to encourage future research to assess whether these methods genuinely solve the problem they claim to address.

\section*{Reproducibility statement}
The full codebase is publicly available at \url{https://github.com/taminemelissa/utility-impact.git}. It reproduces all tables and figures in the paper (with scripts to generate them). Full experimental protocols, including datasets, pre-processing, hyperparameters, and compute settings, are documented in Appendix \ref{sec:additional-settings-and-experiments} and are cross-referenced at the relevant points in the main text. All missing proofs and supporting theoretical results are given in Appendix \ref{sec:proofs-derivations}, where assumptions are stated, and derivations are provided.

\section*{Acknowledgments}
This work was partially supported by the French National Research Agency (ANR) through grants ANR-20-CE23-0007 and ANR-23-CE23-0002 and through the PEPR IA FOUNDRY project (ANR-23-PEIA-0003).

\bibliography{submission}
\bibliographystyle{iclr2026_conference}

\newpage
\appendix
\section{Additional settings \& experiments}
\label{sec:additional-settings-and-experiments}
For the reader’s convenience, we first outline the main points covered in this section.
\begin{itemize}
    \item[--] Appendix \ref{subsec:experiment-settings}: Experiment settings for empirical results in the main text.
    \item[--] Appendix \ref{subsec:additional-results-binary-classification-metric}: Additional results on rank correlation for more binary classification metrics.
    \item[--] Appendix \ref{subsec:additional-results-spearman}: Additional results on rank correlation using the Spearman rank correlation. 
    \item[--] Appendix \ref{subsec:table-for-rp-results}: Table for $R_p$ results in Figure \ref{fig:robustness-results}.
    \item[--] Appendix \ref{subsec:verif-ass-prop}: Empirical verification of the assumption of Proposition \ref{prop:correlation-decomposition}.
    \item[--] Appendix \ref{subsec:utility-trade-off-results}: Results for the \emph{utility-trade-off} scenario summarized in Section \ref{subsec:utility-trade-off} and extension to $K > 2$ base utilities.
    \item[--] Appendix \ref{subsec:multiple-valid-utility-xp-extensions}: Results for the \emph{multiple-valid utility} scenario extended to $K > 2$ base utilities.
    \item[--] Appendix \ref{subsec:variation-of-A}: What if we $\mathcal{A}$ varies instead of $\texttt{perf}$?
    \item[--] Appendix \ref{subsec:results-on-top-k-stability-metrics}: Empirical link between the robustness metric $R_p$ and top-$k$ stability metrics (overlap@$k$ and Jaccard@$k$).
    \item[--] Appendix \ref{subsec:discussion}: Overall discussion about empirical robustness results.
\end{itemize}
\subsection{Experiment settings for empirical results in the main text}
\label{subsec:experiment-settings}
In this section, we describe our experimental protocol for estimating semivalue scores, which serve to obtain all the tables and figures included in this paper.
\paragraph{Datasets.} Table \ref{tab:datasets} summarizes the datasets used in our experiments, all of which are standard benchmarks in the data valuation literature \citep{datashapley, betashapley, jia2023, databanzhaf, opendataval}. Due to the computational cost of repeated model retraining in our experiments, we select a subset of $100$ instances for training and $50$ instances for testing from each classification dataset and $300$ instances for training and $100$ instances for testing from regression datasets.
\begin{table}[ht]
\centering
\caption{A summary of datasets used in experiments.}
\label{tab:datasets}
\begin{tabular}{@{}cccc@{}}
\toprule
\textbf{Dataset} &  \textbf{Source}                        \\ \midrule
\textsc{breast} & \url{https://www.openml.org/d/13}\\
\textsc{titanic} & \url{https://www.openml.org/d/40945} \\
\textsc{credit} & \cite{creditpaper}\\
\textsc{heart} & \url{https://www.openml.org/d/43398} \\
\textsc{wind}              &\url{https://www.openml.org/d/847}   \\
\textsc{cpu}               &\url{https://www.openml.org/d/761}   \\
\textsc{2dplanes}          &\url{https://www.openml.org/d/727}    \\
\textsc{pol}               &\url{https://www.openml.org/d/722}   \\ 
\textsc{diabetes}               &\cite{efron2004} \\
\textsc{california housing}               &\cite{pace1997} \\
\textsc{ames}               &\cite{decock2011} \\
\textsc{iris}               &\cite{fisher1936} \\
\textsc{wine}               &\url{https://archive.ics.uci.edu/ml/datasets/Wine} \\
\textsc{digits}               &\cite{dua2019} \\
\bottomrule
\end{tabular}
\end{table}

\noindent Because our primary objective is to measure how changing the utility alone affects semivalue rankings, we must eliminate any other sources of variation, such as different train/test splits, model initialization, or Monte Carlo sampling noise, that could confound our results. To this end, we enforce two strict controls for semivalue scores computation across utilities:
\begin{enumerate}
    \item A \textit{fixed learning context} $(\mathcal{A}, \mathcal{D}_{\text{test}})$,
    \item \textit{Aligned sampling} for semivalue approximations. 
\end{enumerate}  
\subparagraph{Fixed learning context.}  
As outlined in Section \ref{sec:background}, a utility function $u$ is defined as:  
\begin{align*}
    u(S) = \texttt{perf}(\mathcal{A}(S), \mathcal{D}_{\text{test}}),
\end{align*}  
where $\mathcal{A}$ is a learning algorithm that outputs a model trained on a dataset $S$, and \texttt{perf} evaluates the model on a test set $\mathcal{D}_{\text{test}}$. The learning algorithm $\mathcal{A}$ specifies the model class, objective function, optimization procedure, and hyperparameters (e.g., learning rate, weight initialization).  

\noindent By fixing ($\mathcal{A}, \mathcal{D}_{\rm{test}}$), we ensure that swapping between two utilities, say, accuracy versus F1-score, amounts solely to changing the performance metric \texttt{perf}. Consequently, any shift in the semivalue scores' ranking (measured by rank correlation metrics) can only be attributed to the utility choice.

\paragraph{Controlling for sampling noise in semivalue estimates.} The above discussion assumes access to exact semivalue scores, but in practice, we approximate them via Monte Carlo permutation sampling, which injects random noise into each run. Without accounting for this sampling variability, differences in semivalue scores' rankings could reflect estimator noise rather than genuine sensitivity to the utility. 

\noindent To enforce this, we introduce \emph{aligned sampling} alongside the fixed learning context $(\mathcal{A}, \mathcal{D}_{\rm{test}})$.  Aligned sampling consists of pre-generating a single pool of random permutations (or sampling seeds) and reusing those same permutations when estimating semivalues for each utility. By sharing both the model-training environment and the permutation draws, we ensure that any differences in resulting rankings are driven solely by the change in utility. 

\paragraph{Fixed set of permutations.} Let $\mathcal{P}=\left\{\pi_1, \pi_2, \ldots, \pi_m\right\}$ denote a fixed set of $m$ random permutations of the data points in $\mathcal{D}$. We apply this exact set of permutations across multiple utilities $\left\{u_1, u_2, \ldots, u_K\right\}$ such that $u_k(\cdot)= \texttt{perf}_k[(\mathcal{A})(\cdot), \mathcal{D}_{\text{test}}]$ with fixed $(\mathcal{A}, \mathcal{D}_{\text{test}})$ for all $k \in [K]$.

\noindent For a given performance metric $\texttt{perf}_k$ and the set of permutations $\mathcal{P}$, we estimate the marginal contributions $\{\hat{\Delta}_j\left(z_i ; u_k\right)\}_{j=1}^{n}$ for each data point $z_i \in \mathcal{D}$ with respect to the utility $u_k$ such as
\begin{align*}
\hat{\Delta}_j\left(z_i ; u_k\right) := \frac{1}{m} \sum_{s=1}^m \left(u_k\left(S_j^{\pi_s} \cup \{z_i\}\right) - u_k\left(S_j^{\pi_s}\right)\right),
\end{align*}
where $m$ is the number of permutations used, $\pi_s$ denotes the $s$-th permutation and $S_j^{\pi_s}$ represents the subset of data points of size $j - 1$ that precedes  $z_i$ in the order defined by permutation $\pi_s$. 

\paragraph{Determining the number of permutations $m$.}
The number of permutations $m$ used in the marginal contribution estimator is determined based on a maximum limit and a convergence criterion applied across all utilities $u_1, \dots, u_K$. Formally,
\begin{align*}
m = \max\left(m_{\text{min}}, \min \left( m_{\text{max}}, m_{\text{conv}} \right)\right),
\end{align*}
where $m_{\text{min}}$ is a predefined minimum number of permutations to avoid starting convergence checks prematurely, $m_{\text{max}}$ is a predefined maximum number of permutations set to control computational feasibility, $m_{\text{conv}}$ is the smallest number of permutations required for the Gelman-Rubin (GR) \citep{gelmanrubin} statistic to converge across all utility functions $u_1, \dots, u_K$. Using the Gelman-Rubin statistic as a convergence criterion follows established practices in the literature \citep{opendataval, betashapley}.

\noindent For each data point $z_i$, the GR statistic $R_i$ is computed for every $100$ permutations across all utilities. The sampling process halts when the maximum GR statistic across all data points and all utilities falls below a threshold, indicating convergence. We adopt the conventional threshold of $1.05$ for GR convergence, consistent with prior studies in data valuation \citep{opendataval}.

\noindent In this framework, the GR statistic, $R_i^{k}$, is used to assess the convergence of marginal contribution estimates for each data point $z_i$ across $C$ independent chains of $s$ sampled permutations under each utility $u_k$. The GR statistic evaluates the agreement between chains by comparing the variability within each chain to the variability across the chains, with convergence indicated when $R_i^{k}$ approaches 1. Specifically, to compute the GR statistic for each data point $z_i$ under utility $u_k$, we determine 
\begin{enumerate}
    \item The within-chain variance $W_i^{k}$ which captures the variability of marginal contributions for $z_i$ within each chain. Specifically, if there are $c$ independent chains, $ W_i^{k}$ is calculated as the average of the sample variances within each chain
    $$
    W_i^{k} = \frac{1}{C} \sum_{c=1}^C s_{i,c}^2,
    $$
    where $s_{i,c}^2$ is the sample variance of marginal contributions for $z_i$ within chain $c$. This term reflects the dispersion of estimates within each chain,
    \item And the between-chain variance $B_i^{k}$, which measures the variability between the mean marginal contributions across the chains. It indicates how much the chains differ from each other. The between-chain variance is defined as
    \begin{align*}
    B_i^{k} = \frac{s}{C - 1} \sum_{c=1}^C \left(\bar{\Delta}_{c}(z_i; u_k) - \bar{\Delta}(z_i; u_k) \right)^2,
    \end{align*}
    where $\bar{\Delta}_{c}(z_i; u_k)$ is the mean marginal contribution for $z_i$ in chain $c$, and $\bar{\Delta}(z_i; u_k)$ is the overall mean across all chains
    \begin{align*}
    \bar{\Delta}(z_i; u_k) = \frac{1}{C} \sum_{c=1}^C \bar{\Delta}_{c}(z_i; u_k).
    \end{align*}
    The term $B_i^{k}$ quantifies the extent of disagreement among the chain means.
\end{enumerate}
Combining both $W_i^{k}$ and $B_i^{k}$, the GR statistic $R_i^{k}$ for data point $z_i$ under utility $u_k$ is defined as:
\begin{align*}
   R_i^{k} = \sqrt{\frac{(s - 1)}{s} + \frac{B_i^{k}}{W_i^{k} \cdot s}}.
\end{align*}

\paragraph{Intra-permutation truncation.} Building on existing literature \citep{datashapley, opendataval}, we further improve computational efficiency by implementing an intra-permutation truncation criterion that restricts coalition growth once contributions stabilize. Given a random permutation $\pi_s \in \mathcal{P}$, the marginal contribution for each data point $z_{\pi_{s,j}}$ (the $j$-th point in the permutation $\pi_s$) is calculated incrementally as the coalition size $j$ increases from $1$ up to $n$. However, instead of expanding the coalition size through all $n$ elements, the algorithm stops increasing $j$ when the marginal contributions become stable based on a relative change threshold.

\noindent For each step $l \in [n]$ within a permutation, the relative change $V_l^{k}$ in the utility $u_k$ is calculated as:
\begin{align*}
V_l^{k} := \frac{\left| u_k\left(\{z_{\pi_{s,j}}\}_{j=1}^l \cup \{z_{\pi_{s,l+1}}\}\right) - u_k\left(\{z_{\pi_{s,j}}\}_{j=1}^l\right) \right|}{u_k\left(\{z_{\pi_{s,j}}\}_{j=1}^l\right)}.
\end{align*}
where $\{z_{\pi_{s,j}}\}_{j=1}^l$ represents the coalition formed by the first $l$ data points in $\pi_s$. This measures the relative change in the utility $u_k$ when adding the next data point to the coalition. The truncation criterion stops increasing the coalition size at the smallest value $j$ satisfying the following condition:
\begin{align*}
j^{*} = \arg \min \left\{ j \in [n] : \left| \{ l \leq j : V_l \leq 10^{-8} \} \right| \geq 10 \right\}.
\end{align*}
This means that the coalition size $j^{*}$ is fixed at the smallest $j$ for which there are at least $10$ prior values of $V_l$ (for $l \leq j$) that are smaller than a threshold of $10^{-8}$. This condition ensures that the utility $u_k$ has stabilized, indicating convergence within the permutation. This intra-permutation truncation reduces computational cost by avoiding unnecessary calculations once marginal contributions stabilize.

\paragraph{Aggregating marginal contributions for semivalues estimation.}
Once the marginal contributions have been estimated consistently across all permutations and utilities, they are aggregated to compute various semivalues, such as the Shapley, Banzhaf, and $(4,1)$-Beta Shapley values. Each semivalue method applies a specific weighting scheme (see Definition \ref{def:shapley}, \ref{def:beta}, \ref{def:banzhaf}) to the marginal contributions to reflect the intended measure of data point importance.

\noindent For a data point $z_i$ under utility $u_k$, its approximated data value $ \hat{\phi}(z_i;\omega, u_k)$ is computed by applying a weighting scheme $\omega$ to the marginal contributions across coalition sizes
\begin{align*}
\hat{\phi}(z_i; \omega, u_k) = \sum_{j=1}^n \omega_j \, \hat{\Delta}_j(z_i; u_k),
\end{align*}
where $\hat{\Delta}_j(z_i; u_k) $ is the estimated marginal contribution for coalition size $j-1$, and $\omega_j$ is the weight assigned to coalition size $j-1$.

\paragraph{Learning algorithm $\mathcal{A}$.} For binary classification experiments, $\mathcal{A}$ is a logistic‐regression classifier (binary cross‐entropy loss) trained via L-BFGS with $\ell_2$ regularization ($\lambda=1.0$). For multiclass classification experiments, $\mathcal{A}$ is a feed-forward MLP (ReLU hidden layers, softmax output) trained with cross-entropy via L-BFGS and $\ell_2$ regularization ($\lambda=1.0$).
For regression experiments, $\mathcal{A}$ is a linear ridge model (squared-error loss, $\ell_2$ regularization $\lambda=1.0$) trained with L-BFGS. We initialize all weights from $\mathcal{N}(0,1)$ with a fixed random seed, disable early stopping, and fix the maximum number of training epochs to 100. The optimizer’s step size is $1.0$.

\paragraph{Decision-threshold calibration for binary classification.} 
Because we compare multiple binary classification utilities (accuracy, F1-score, etc.), using a fixed probability cutoff (e.g., 0.5) can unfairly favor some metrics over others, especially under class imbalance. To ensure that differences in semivalue scores' rankings arise from the utility definition (and not an arbitrary threshold), we calibrate the decision boundary to the empirical class prevalence. Concretely, if $p$ is the fraction of positive labels in the training set, we set the cutoff at the $(1-p)$-quantile of the model’s predicted probabilities. This way, each trained model makes exactly $p$\% positive predictions, aligning base‐rate assumptions across utilities and isolating the effect of the performance metric itself. 

\paragraph{Computational resources and runtime.} All experiments ran on a single machine (Apple M1 (8-core CPU) with 16 GB RAM) without parallelization. A full semivalue estimation, consisting of $5$ independent Monte Carlo approximations, for one dataset of $100$ data points takes approximately $15$ minutes. 

\subsection{Additional results on rank correlation for more binary classification metrics}
\label{subsec:additional-results-binary-classification-metric}
In Table \ref{tab:rank-corr}, we compare semivalue score rankings under accuracy versus F1‐score. Here, we broaden this analysis to include other widely used binary classification utilities (recall, negative log‐loss, and arithmetic mean). Tables \ref{tab:kendall-correlation-nll-acc-f1} and \ref{tab:kendall-correlation-rec-acc-am} show that ranking variability persists across datasets and semivalue choices when using these additional metrics.
\begin{table}[ht]
\centering
\scriptsize
\caption{Mean Kendall rank correlations (standard error in parentheses rounded to one significant figure for clarity) between accuracy (\texttt{acc}) and negative log-loss (\texttt{nll}), and between F1-score (\texttt{f1}) and negative log loss, for three semivalues (Shapley, Beta (4,1), Banzhaf). Values are averaged over $5$ estimations.}
\vspace{2mm}
\label{tab:kendall-correlation-nll-acc-f1}
\resizebox{\textwidth}{!}{%
\begin{tabular}{@{} l
    *{2}{S[table-format=1.2(2)]}
    *{2}{S[table-format=1.2(2)]}
    *{2}{S[table-format=1.2(2)]}
    @{}}
\toprule
Dataset
  & \multicolumn{2}{c}{Shapley}
  & \multicolumn{2}{c}{(4,1)-Beta Shapley}
  & \multicolumn{2}{c}{Banzhaf} \\
\cmidrule(lr){2-3}\cmidrule(lr){4-5}\cmidrule(lr){6-7}
 & {\texttt{acc-nll}} & {\texttt{f1-nll}}
 & {\texttt{acc-nll}} & {\texttt{f1-nll}}
 & {\texttt{acc-nll}} & {\texttt{f1-nll}} \\
\midrule
\textsc{Breast}
  & \rm{-0.59 (0.02)} & \rm{-0.60 (0.02)}
  & \rm{-0.65 (0.01)} & \rm{-0.66 (0.01)}
  & \rm{0.18 (0.01)} & \rm{0.18 (0.01)} \\
\textsc{Titanic}
  & \rm{-0.53 (0.01)} & \rm{0.54 (0.01)}
  & \rm{-0.60 (0.01)} & \rm{-0.61 (0.01)}
  & \rm{0.14 (0.02)} & \rm{-0.07 (0.01)} \\
\textsc{Credit}
  & \rm{-0.59 (0.02)} & \rm{-0.43 (0.01)}
  & \rm{-0.66 (0.01)} & \rm{-0.49 (0.01)}
  & \rm{0.38 (0.01)} & \rm{0.28 (0.03)} \\
\textsc{Heart}
  & \rm{-0.04 (0.02)} & \rm{0.01 (0.02)}
  & \rm{-0.20 (0.02)} & \rm{-0.17 (0.03)}
  & \rm{-0.07 (0.01)} & \rm{-0.05 (0.01)} \\
\textsc{Wind}
  & \rm{0.67 (0.02)} & \rm{0.69 (0.01)}
  & \rm{0.74 (0.02)} & \rm{0.73 (0.01)}
  & \rm{0.26 (0.01)} & \rm{0.44 (0.01)} \\
\textsc{Cpu}
  & \rm{0.55 (0.01)} & \rm{0.68 (0.01)}
  & \rm{0.59 (0.01)} & \rm{0.69 (0.01)}
  & \rm{-0.53 (0.01)} & \rm{0.52 (0.01)} \\
\textsc{2dplanes}
  & \rm{0.22 (0.02)} & \rm{0.98 (0.01)}
  & \rm{0.41 (0.01)} & \rm{0.98 (0.01)}
  & \rm{-0.03 (0.01)} & \rm{0.18 (0.01)} \\
\textsc{Pol}
  & \rm{0.58 (0.01)} & \rm{0.79 (0.01)}
  & \rm{0.74 (0.01)} & \rm{0.81 (0.01)}
  & \rm{-0.01 (0.02)} & \rm{0.13 (0.02)} \\
\bottomrule
\end{tabular}%
}
\end{table}

\begin{table}[ht]
\centering
\caption{Mean Kendall rank correlations (standard error in parentheses rounded to one significant figure for clarity) between recall (\texttt{rec}) and accuracy (\texttt{acc}) for three semivalues (Shapley, Beta (4,1), Banzhaf). Values are averaged over $5$ estimations.}
\vspace{2mm}
\label{tab:kendall-correlation-rec-acc-am}
\resizebox{\textwidth}{!}{%
\begin{tabular}{@{} l
    *{3}{S[table-format=1.2(2)]}
    *{3}{S[table-format=1.2(2)]}
    *{3}{S[table-format=1.2(2)]}
    @{}}
\toprule
Dataset
  & \multicolumn{3}{c}{Shapley}
  & \multicolumn{3}{c}{(4,1)-Beta Shapley}
  & \multicolumn{3}{c}{Banzhaf} \\
\cmidrule(lr){2-4}\cmidrule(lr){5-7}\cmidrule(lr){8-10}
 & {\texttt{acc-am}} & {\texttt{acc-rec}} & {\texttt{am-rec}}
 & {\texttt{acc-am}} & {\texttt{acc-rec}} & {\texttt{am-rec}}
 & {\texttt{acc-am}} & {\texttt{acc-rec}} & {\texttt{am-rec}} \\
\midrule
\textsc{Breast}
  & \rm{0.93 (0.01)} & \rm{0.98 (0.01)} & \rm{0.92 (0.01)}
  & \rm{0.94 (0.01)} & \rm{0.98 (0.01)} & \rm{0.92 (0.01)}
  & \rm{0.82 (0.03)} & \rm{0.99 (0.01)} & \rm{0.81 (0.03)} \\
\textsc{Titanic}
  & \rm{-0.25 (0.04)} & \rm{0.77 (0.02)} & \rm{-0.05 (0.05)}
  & \rm{-0.27 (0.03)} & \rm{0.62 (0.04)} & \rm{0.08 (0.05)}
  & \rm{0.46 (0.02)} & \rm{0.81 (0.01)} & \rm{0.65 (0.01)} \\
\textsc{Credit}
  & \rm{-0.31 (0.01)} & \rm{0.07 (0.01)} & \rm{0.60 (0.02)}
  & \rm{-0.31 (0.02)} & \rm{0.12 (0.04)} & \rm{0.62 (0.01)}
  & \rm{0.35 (0.01)} & \rm{0.58 (0.01)} & \rm{0.76 (0.01)} \\
\textsc{Heart}
  & \rm{0.19 (0.02)} & \rm{0.98 (0.01)} & \rm{0.18 (0.02)}
  & \rm{0.22 (0.01)} & \rm{0.98 (0.01)} & \rm{0.19 (0.01)}
  & \rm{0.61 (0.01)} & \rm{0.98 (0.01)} & \rm{0.59 (0.02)} \\
\textsc{Wind}
  & \rm{0.08 (0.03)} & \rm{0.98 (0.01)} & \rm{0.07 (0.03)}
  & \rm{0.10 (0.02)} & \rm{0.98 (0.01)} & \rm{0.08 (0.04)}
  & \rm{0.77 (0.01)} & \rm{0.98 (0.01)} & \rm{0.75 (0.01)} \\
\textsc{Cpu}
  & \rm{0.19 (0.04)} & \rm{0.75 (0.02)} & \rm{0.18 (0.01)}
  & \rm{0.22 (0.03)} & \rm{0.78 (0.02)} & \rm{0.22 (0.02)}
  & \rm{0.79 (0.01)} & \rm{0.93 (0.01)} & \rm{0.86 (0.01)} \\
\textsc{2dplanes}
  & \rm{0.31 (0.02)} & \rm{0.99 (0.01)} & \rm{0.31 (0.02)}
  & \rm{0.33 (0.02)} & \rm{0.99 (0.01)} & \rm{0.33 (0.02)}
  & \rm{0.037 (0.01)} & \rm{0.99 (0.01)} & \rm{0.37 (0.01)} \\
\textsc{Pol}
  & \rm{0.56 (0.01)} & \rm{0.73 (0.01)} & \rm{0.29 (0.01)}
  & \rm{0.56 (0.01)} & \rm{0.79 (0.01)} & \rm{0.34 (0.01)}
  & \rm{0.67 (0.01)} & \rm{0.69 (0.01)} & \rm{0.36 (0.01)} \\
\bottomrule
\end{tabular}%
}
\end{table}

\subsection{Additional results on rank correlation using the Spearman rank correlation}
\label{subsec:additional-results-spearman}
For completeness, we re‐evaluate all of our pairwise semivalue ranking comparisons using Spearman rank correlation instead of Kendall rank correlation. As shown in Tables \ref{tab:spearman-correlation-nll-acc-f1} and \ref{tab:spearman-correlation-rec-acc-am}, datasets and semivalues that exhibit low Kendall correlations between different utilities also yield low Spearman correlations, and vice versa.
\begin{table}[!htbp]
\centering
\caption{Mean Spearman rank correlations (standard error in parentheses rounded to one significant figure for clarity) between accuracy (\texttt{acc}) and negative log-loss (\texttt{nll}), and between F1-score (\texttt{f1}) and negative log loss, for three semivalues (Shapley, Beta (4,1), Banzhaf). Values are averaged over $5$ estimations.}
\vspace{2mm}
\label{tab:spearman-correlation-nll-acc-f1}
\resizebox{\textwidth}{!}{%
\begin{tabular}{@{} l
    *{3}{S[table-format=1.2(2)]}
    *{3}{S[table-format=1.2(2)]}
    *{3}{S[table-format=1.2(2)]}
    @{}}
\toprule
Dataset
  & \multicolumn{3}{c}{Shapley}
  & \multicolumn{3}{c}{(4,1)-Beta Shapley}
  & \multicolumn{3}{c}{Banzhaf} \\
\cmidrule(lr){2-4}\cmidrule(lr){5-7}\cmidrule(lr){8-10}
 & {\texttt{acc-f1}} & {\texttt{acc-nll}} & {\texttt{f1-nll}}
 & {\texttt{acc-f1}} & {\texttt{acc-nll}} & {\texttt{f1-nll}}
 & {\texttt{acc-f1}} & {\texttt{acc-nll}} & {\texttt{f1-nll}} \\
\midrule
\textsc{Breast}
  & {0.99 (0.01)} & \rm{-0.76 (0.02)} & \rm{-0.78 (0.02)}
  & {0.99 (0.01)} & \rm{-0.82 (0.01)} & \rm{-0.83 (0.01)}
  & {0.98 (0.01)} & \rm{0.22 (0.01)} & \rm{0.23 (0.01)} \\
\textsc{Titanic}
  & {-0.20 (0.01)} & \rm{-0.71 (0.01)} & \rm{0.74 (0.01)}
  & {-0.18 (0.01)} & \rm{-0.79 (0.01)} & \rm{-0.80 (0.01)}
  & {0.95 (0.01)} & \rm{0.18 (0.02)} & \rm{-0.20 (0.01)} \\
\textsc{Credit}
  & {-0.50 (0.02)} & \rm{-0.76 (0.02)} & \rm{-0.61 (0.02)}
  & {-0.52 (0.01)} & \rm{-0.83 (0.01)} & \rm{-0.68 (0.02)}
  & {0.90 (0.01)} & \rm{0.53 (0.01)} & \rm{0.40 (0.03)} \\
\textsc{Heart}
  & {0.71 (0.01)} & \rm{-0.04 (0.02)} & \rm{0.03 (0.03)}
  & {0.67 (0.01)} & \rm{-0.28 (0.04)} & \rm{-0.23 (0.04)}
  & {0.96 (0.01)} & \rm{-0.10 (0.02)} & \rm{-0.08 (0.02)} \\
\textsc{Wind}
  & {0.85 (0.01)} & \rm{0.84 (0.01)} & \rm{0.86 (0.01)}
  & {0.85 (0.01)} & \rm{0.90 (0.01)} & \rm{0.89 (0.01)}
  & {0.97 (0.01)} & \rm{0.34 (0.01)} & \rm{0.62 (0.01)} \\
\textsc{Cpu}
  & {0.47 (0.02)} & \rm{0.73 (0.01)} & \rm{0.85 (0.01)}
  & {0.45 (0.01)} & \rm{0.77 (0.01)} & \rm{0.86 (0.01)}
  & {0.87 (0.01)} & \rm{-0.71 (0.01)} & \rm{0.70 (0.01)} \\
\textsc{2dplanes}
  & {0.24 (0.01)} & \rm{0.33 (0.02)} & \rm{0.99 (0.01)}
  & {0.28 (0.02)} & \rm{0.58 (0.01)} & \rm{0.99 (0.01)}
  & {0.75 (0.01)} & \rm{-0.04 (0.02)} & \rm{0.24 (0.05)} \\
\textsc{Pol}
  & {0.70 (0.01)} & \rm{0.77 (0.01)} & \rm{0.92 (0.01)}
  & {0.69 (0.01)} & \rm{0.90 (0.01)} & \rm{0.93 (0.01)}
  & {0.53 (0.01)} & \rm{-0.01 (0.03)} & \rm{0.21 (0.02)} \\
\bottomrule
\end{tabular}%
}
\end{table}

\begin{table}[!htbp]
\centering
\caption{Mean Spearman rank correlations (standard error in parentheses rounded to one significant figure for clarity) between recall (\texttt{rec}) and accuracy (\texttt{acc}) for three semivalues (Shapley, Beta (4,1), Banzhaf). Values are averaged over $5$ estimations.}
\vspace{2mm}
\label{tab:spearman-correlation-rec-acc-am}
\resizebox{\textwidth}{!}{%
\begin{tabular}{@{} l
    *{3}{S[table-format=1.2(2)]}
    *{3}{S[table-format=1.2(2)]}
    *{3}{S[table-format=1.2(2)]}
    @{}}
\toprule
Dataset
  & \multicolumn{3}{c}{Shapley}
  & \multicolumn{3}{c}{(4,1)-Beta Shapley}
  & \multicolumn{3}{c}{Banzhaf} \\
\cmidrule(lr){2-4}\cmidrule(lr){5-7}\cmidrule(lr){8-10}
 & {\texttt{acc-am}} & {\texttt{acc-rec}} & {\texttt{am-rec}}
 & {\texttt{acc-am}} & {\texttt{acc-rec}} & {\texttt{am-rec}}
 & {\texttt{acc-am}} & {\texttt{acc-rec}} & {\texttt{am-rec}} \\
\midrule
\textsc{Breast}
  & \rm{0.99 (0.01)} & \rm{0.99 (0.01)} & \rm{0.99 (0.01)}
  & \rm{0.99 (0.01)} & \rm{0.99 (0.01)} & \rm{0.99 (0.01)}
  & \rm{0.90 (0.02)} & \rm{0.99 (0.01)} & \rm{0.89 (0.03)} \\
\textsc{Titanic}
  & \rm{-0.37 (0.05)} & \rm{0.91 (0.02)} & \rm{-0.08 (0.08)}
  & \rm{-0.37 (0.04)} & \rm{0.89 (0.02)} & \rm{0.10 (0.08)}
  & \rm{0.62 (0.03)} & \rm{0.93 (0.01)} & \rm{0.84 (0.01)} \\
\textsc{Credit}
  & \rm{-0.45 (0.01)} & \rm{0.09 (0.02)} & \rm{0.79 (0.01)}
  & \rm{-0.40 (0.03)} & \rm{0.11 (0.02)} & \rm{0.83 (0.01)}
  & \rm{0.5 (0.01)} & \rm{0.75 (0.01)} & \rm{0.92 (0.01)} \\
\textsc{Heart}
  & \rm{0.29 (0.02)} & \rm{0.99 (0.01)} & \rm{0.27 (0.02)}
  & \rm{0.28 (0.02)} & \rm{0.89 (0.02)} & \rm{0.27 (0.01)}
  & \rm{0.80 (0.02)} & \rm{0.99 (0.01)} & \rm{0.78 (0.02)} \\
\textsc{Wind}
  & \rm{0.12 (0.04)} & \rm{0.99 (0.01)} & \rm{0.11 (0.04)}
  & \rm{0.12 (0.03)} & \rm{0.97 (0.02)} & \rm{0.10 (0.01)}
  & \rm{0.92 (0.01)} & \rm{0.99 (0.01)} & \rm{0.92 (0.01)} \\
\textsc{Cpu}
  & \rm{0.27 (0.01)} & \rm{0.90 (0.01)} & \rm{0.27 (0.01)}
  & \rm{0.27 (0.02)} & \rm{0.92 (0.03)} & \rm{0.31 (0.03)}
  & \rm{0.93 (0.01)} & \rm{0.99 (0.01)} & \rm{0.97 (0.01)} \\
\textsc{2dplanes}
  & \rm{0.44 (0.03)} & \rm{0.99 (0.01)} & \rm{0.44 (0.03)}
  & \rm{0.47 (0.03)} & \rm{0.99 (0.01)} & \rm{0.47 (0.03)}
  & \rm{0.52 (0.01)} & \rm{0.99 (0.01)} & \rm{0.52 (0.01)} \\
\textsc{Pol}
  & \rm{0.75 (0.01)} & \rm{0.90 (0.01)} & \rm{0.42 (0.01)}
  & \rm{0.74 (0.01)} & \rm{0.93 (0.01)} & \rm{0.48 (0.02)}
  & \rm{0.85 (0.01)} & \rm{0.87 (0.01)} & \rm{0.52 (0.01)} \\
\bottomrule
\end{tabular}%
}
\end{table}
\subsection{Table for \texorpdfstring{$R_p$}{R\_p} results in Figure \ref{fig:robustness-results}}
\label{subsec:table-for-rp-results}
In support of Figure \ref{fig:robustness-results} displayed in Section \ref{sec:evaluation-discussion}, Table \ref{tab:robustness-results} below reports the mean and standard error of the $p$-robustness metric $R_p$ for $p\in\{500,1000,1500\}$ on each dataset and semivalue.

\begin{table}[ht]
\centering
\caption{Mean $p$-robustness $R_p$
(standard error in parentheses) for $p \in \{500, 1000, 1500\}$ estimated over $5$ Monte Carlo trials (each trial corresponding to approximating the semivalue scores). Boldface marks the semivalue with the highest $R_p$ for each dataset and $p$. Higher $R_p$ indicates greater stability of the induced ranking under utility shifts.}
\vspace{2mm}
\label{tab:robustness-results}
\resizebox{\textwidth}{!}{%
\begin{tabular}{@{} l
   *{3}{S[table-format=1.3(3)]}
   *{3}{S[table-format=1.3(3)]}  
   *{3}{S[table-format=1.3(3)]} 
   @{}}
\toprule
Dataset 
  & \multicolumn{3}{c}{$R_{500}$} 
  & \multicolumn{3}{c}{$R_{1000}$} 
  & \multicolumn{3}{c}{$R_{1500}$} \\
\cmidrule(lr){2-4}\cmidrule(lr){5-7}\cmidrule(lr){8-10}
 & {Shapley} & {(4,1)-Beta Shapley} & {Banzhaf}
 & {Shapley} & {(4,1)-Beta Shapley} & {Banzhaf}
 & {Shapley} & {(4,1)-Beta Shapley} & {Banzhaf} \\
\midrule
\textsc{Breast} 
  & \rm{0.22 (0.004)} & \rm{0.23 (0.004)} & \bf{0.34 (0.06)}
  & \rm{0.63 (0.004)} & \bf{0.64 (0.003)} & \rm{0.59 (0.02)}
  & \rm{0.79 (0.003)} & \bf{0.80 (0.002)} & \rm{0.71 (0.06)} \\
\textsc{Titanic} 
  & \rm{0.058 (0.001)} & \rm{0.058 (0.001)} & \bf{0.44 (0.03)}
  & \rm{0.27 (0.004)} & \rm{0.28 (0.004)} & \bf{0.81 (0.01)}
  & \rm{0.48 (0.004)} & \rm{0.48 (0.004)} & \bf{0.89 (0.007)} \\
\textsc{Credit}  
& \rm{0.084 (0.005)} & \rm{0.091 (0.005)} & \bf{0.82 (0.07)}
  & \rm{0.36 (0.01)} & \rm{0.38 (0.01)} & \bf{0.97 (0.01)}
  & \rm{0.58 (0.01)} & \rm{0.60 (0.01)} & \bf{0.99 (0.002)} \\
\textsc{Heart}         
   & \rm{0.16 (0.003)} & \rm{0.16 (0.003)} & \bf{0.53 (0.01)}
  & \rm{0.54 (0.008)} & \rm{0.54 (0.009)} & \bf{0.87 (0.007)}
  & \rm{0.73 (0.006)} & \rm{0.73 (0.006)} & \bf{0.93 (0.003)} \\
\textsc{Wind}          
  & \rm{0.23 (0.009)} & \rm{0.21 (0.01)} & \bf{0.58 (0.01)}
  & \rm{0.64 (0.01)} & \rm{0.62 (0.009)} & \bf{0.88 (0.005)}
  & \rm{0.79 (0.005)} & \rm{0.77 (0.007)} & \bf{0.94 (0.004)} \\
\textsc{Cpu}            
  & \rm{0.11 (0.003)} & \rm{0.12 (0.003)} & \bf{0.38 (0.02)}
  & \rm{0.45 (0.009)} & \rm{0.45 (0.009)} & \bf{0.77 (0.009)}
  & \rm{0.66 (0.009)} & \rm{0.67 (0.009)} & \bf{0.88 (0.004)} \\
\textsc{2dplanes}       
  & \rm{0.090 (0.001)} & \rm{0.084 (0.002)} & \bf{0.19 (0.012)}
  & \rm{0.38 (0.004)} & \rm{0.36 (0.006)} & \bf{0.59 (0.02)}
  & \rm{0.59 (0.004)} & \rm{0.57 (0.006)} & \bf{0.75 (0.01)} \\
\textsc{Pol}            
  & \rm{0.090 (0.003)} & \rm{0.09 (0.003)} & \bf{0.19 (0.01)}
  & \rm{0.39 (0.008)} & \rm{0.42 (0.007)} & \bf{0.50 (0.01)}
  & \rm{0.60 (0.006)} & \rm{0.62 (0.006)} & \bf{0.70 (0.01)} \\
\bottomrule
\end{tabular}%
}
\end{table}
\subsection{Empirical verification of the assumption of Proposition \ref{prop:correlation-decomposition}}
\label{subsec:verif-ass-prop}
In this section, we verify empirically that the assumption of Proposition \ref{prop:correlation-decomposition} holds for the two datasets we take as examples in Figure \ref{fig:r_j-vs-j}, namely \textsc{Breast} and \textsc{Titanic}. For $(u_1,u_2)=(\lambda,\gamma)$ we compute the cross–size covariance matrix
\begin{align*}
\widehat{\Sigma}^{u_1u_2}_{jk} := \Cov\big(\Delta_j(u_1),\Delta_k(u_2)\big),\qquad j,k\in\{1,\dots,n\},
\end{align*}
using the same Monte Carlo runs as for the semivalues. We then check that off–diagonal terms are negligible compared to the diagonal by computing two metrics:
\begin{align*}
\hat\varepsilon\ :=\ \max_{j}\ \frac{\sum_{k\neq j}\big|\widehat{\Sigma}^{u_1u_2}_{jk}\big|}{\widehat{\Sigma}^{u_1u_2}_{jj}}
\quad\text{and}\quad
\hat\delta\ :=\ \frac{\big|\mathrm{Corr}(\phi(u_1),\phi(u_2))-\mathrm{Corr}_{\mathrm{diag}}(\phi(u_1),\phi(u_2))\big|}{\big|\mathrm{Corr}(\phi(u_1),\phi(u_2))\big|},
\end{align*}
where $\operatorname{Corr}_{\mathrm{diag}}$ keeps only the diagonal entries $\widehat{\Sigma}^{u_1u_2}_{jj}$. 
On \textsc{breast} and \textsc{titanic}, we find $\hat\varepsilon<0.12$ meaning that, row-wise, the total magnitude of off–diagonal covariances 
$\sum_{k\neq j}|\widehat{\Sigma}^{u_1u_2}_{jk}|$ is at most 12\% of the corresponding diagonal term $\widehat{\Sigma}^{u_1u_2}_{jj}$, i.e., off–diagonal cross–size effects are negligible. Moreover, we find that $\hat\delta\le 7\%$ showing that using only the diagonal of $\widehat{\Sigma}^{u_1u_2}$ reproduces the full correlation within a few percent, which is exactly what one would expect if $\Cov(\Delta_j(u_1),\Delta_k(u_2))\approx 0$ for $j\neq k$. Exact means $\pm$ 95\% CIs are reported in Table \ref{tab:verif-prop}.
\begin{table}[ht]
    \centering
    \caption{Verification of the cross–size independence assumption (Proposition \ref{prop:correlation-decomposition}): 
    $\hat\varepsilon:=\max_j \sum_{k\neq j}|\widehat{\Sigma}^{u_1u_2}_{jk}|/\widehat{\Sigma}^{u_1u_2}_{jj}$ (smaller is better) and $\hat\delta:=\big|\operatorname{Corr}(\phi(u_1),\phi(u_2))-\operatorname{Corr}_{\mathrm{diag}}(\phi(u_1),\phi(u_2))\big|/\big|\operatorname{Corr}(\phi(u_1),\phi(u_2))\big|$ (smaller is better). 
    Mean $\pm$ 95\% CI over $R{=}5$ seeds.}
    \label{tab:verif-prop}
    \small
    \begin{tabular}{lcc}
        \toprule
        \textbf{Dataset} & $\boldsymbol{\hat\varepsilon}$ (mean $\pm$ 95\% CI) & $\boldsymbol{\hat\delta}$ (mean $\pm$ 95\% CI) \\
        \midrule
        \textsc{breast}  & $0.08 \pm 0.03$ & $0.03 \pm 0.01$ \\
        \textsc{titanic} & $0.10 \pm 0.02$ & $0.05 \pm 0.02$ \\
        \bottomrule
    \end{tabular}
\end{table}
\subsection{\texorpdfstring{Results for the \emph{utility-trade-off} scenario summarized in Section \ref{subsec:utility-trade-off} and extension to $K > 2$ base utilities}{Results for the \emph{utility-trade-off} scenario summarized in Section \ref{subsec:utility-trade-off} and extension to K > 2 base utilities}}
\label{subsec:utility-trade-off-results}
In this section, we evaluate robustness in the \emph{utility trade-off} setting for regression, binary classification, and multiclass classification.
\subsubsection{\texorpdfstring{Case where $K = 2$ base utilities}{Case where K >2 base utilities}}
In this setting, the utility is a convex combination of two task-relevant metrics,
\begin{align*}
u_\nu = \nu\,u_1 + (1-\nu)u_2,\qquad \nu\in[0,1].
\end{align*}
We consider the following utility pairs:
\begin{itemize}
  \item[--] \emph{Regression.} MSE/MAE (Table \ref{tab:mse-mae}), MSE/R$^2$ (Table \ref{tab:mse-r2}), and MAE/R$^2$ (Table \ref{tab:mae-r2}).
  \item[--] \emph{Multiclass classification.} Accuracy/macro-F1 (Table \ref{tab:acc-macrof1}), Accuracy/macro-Recall (Table \ref{tab:acc-recall}), and macro-F1/macro-Recall (Table \ref{tab:macrof1-macrorecall}).
\end{itemize}
For each pair, we compute semivalue-based rankings (Shapley, $(4,1)$-Beta Shapley, Banzhaf) and evaluate robustness along the convex path using $R_{500}$. 
\begin{table}[ht]
\centering
\caption{\textbf{Regression.} Robustness scores $R_{500}$ along the MSE-MAE convex path. Semivalues are approximated over $5$ runs using a linear regression model trained with L-BFGS. Datasets: \textsc{diabetes} $(n=442, d=10)$, \textsc{california housing} $(n=20{,}640, d=8)$, \textsc{ames housing} $(n=2{,}930, d=10)$; each subsampled to $300$ training points. $R_{500}$ is reported as mean $\pm$ standard error across the 5 semivalue approximations.}
\label{tab:mse-mae}
\small
\begin{tabular}{l l c}
\toprule
\textbf{Dataset} & \textbf{Semivalue} & $\mathbf{R_{500}}$ (mean $\pm$ SE) \\
\midrule
\textsc{diabetes}  & Shapley             & $0.99 \pm 0.01$ \\
                   & (4,1)-Beta Shapley  & $0.99 \pm 0.01$ \\
                   & Banzhaf             & $0.99 \pm 0.01$ \\
\addlinespace
\textsc{california} & Shapley            & $0.72 \pm 0.01$ \\
                    & (4,1)-Beta Shapley & $0.71 \pm 0.01$ \\
                    & Banzhaf            & $0.75 \pm 0.01$ \\
\addlinespace
\textsc{ames}       & Shapley            & $0.99 \pm 0.01$ \\
                    & (4,1)-Beta Shapley & $0.99 \pm 0.01$ \\
                    & Banzhaf            & $0.99 \pm 0.01$ \\
\bottomrule
\end{tabular}
\end{table}
\begin{table}[ht]
\centering
\caption{\textbf{Regression.} Robustness scores $R_{500}$ along the MSE-R$^2$ convex path, reported as mean $\pm$ standard error across $5$ semivalue approximations.}
\label{tab:mse-r2}
\small
\begin{tabular}{l l c}
\toprule
\textbf{Dataset} & \textbf{Semivalue} & $\mathbf{R_{500}}$ (mean $\pm$ SE) \\
\midrule
\textsc{diabetes}  & Shapley             & $0.89 \pm 0.01$ \\
                   & (4,1)-Beta Shapley  & $0.89 \pm 0.02$ \\
                   & Banzhaf             & $0.91 \pm 0.01$ \\
\addlinespace
\textsc{california} & Shapley            & $0.70 \pm 0.01$ \\
                    & (4,1)-Beta Shapley & $0.67 \pm 0.01$ \\
                    & Banzhaf            & $0.81 \pm 0.01$ \\
\addlinespace
\textsc{ames}       & Shapley            & $0.99 \pm 0.01$ \\
                    & (4,1)-Beta Shapley & $0.99 \pm 0.01$ \\
                    & Banzhaf            & $0.99 \pm 0.01$ \\
\bottomrule
\end{tabular}
\end{table}
\begin{table}[!ht]
\centering
\caption{\textbf{Regression.} Robustness scores $R_{500}$ along the MAE-R$^2$ convex path. Mean $\pm$ standard error over the $5$ approximations.}
\label{tab:mae-r2}
\small
\begin{tabular}{l l c}
\toprule
\textbf{Dataset} & \textbf{Semivalue} & $\mathbf{R_{500}}$ (mean $\pm$ se) \\
\midrule
\textsc{diabetes}  & Shapley             & $0.90 \pm 0.02$ \\
                   & (4,1)-Beta Shapley  & $0.89 \pm 0.02$ \\
                   & Banzhaf             & $0.94 \pm 0.01$ \\
\addlinespace
\textsc{california} & Shapley            & $0.66 \pm 0.01$ \\
                    & (4,1)-Beta Shapley & $0.65 \pm 0.01$ \\
                    & Banzhaf            & $0.72 \pm 0.02$ \\
\addlinespace
\textsc{ames}       & Shapley            & $0.98 \pm 0.01$ \\
                    & (4,1)-Beta Shapley & $0.98 \pm 0.01$ \\
                    & Banzhaf            & $0.98 \pm 0.01$ \\
\bottomrule
\end{tabular}
\end{table}
\begin{table}[ht]
\centering
\caption{\textbf{Multiclass.} Robustness scores $R_{500}$ along the Accuracy-macro-F1 convex path. Semivalues are approximated over 5 runs using an MLP (SGD, fixed seeds). Datasets: \textsc{digits} $(n=1{,}797, d=64, 10 \text{ classes})$, \textsc{wine} $(n=178, d=13, 3 \text{ classes})$, \textsc{iris} $(n=150, d=4, 3 \text{ classes})$; each subsampled to $100$ training points. Mean $\pm$ standard error over the 5 approximations.}
\label{tab:acc-macrof1}
\small
\begin{tabular}{l l c}
\toprule
\textbf{Dataset} & \textbf{Semivalue} & $\mathbf{R_{500}}$ (mean $\pm$ se) \\
\midrule
\textsc{digits} & Shapley             & $0.78 \pm 0.03$ \\
                & (4,1)-Beta Shapley  & $0.75 \pm 0.04$ \\
                & Banzhaf             & $0.82 \pm 0.04$ \\
\addlinespace
\textsc{wine}   & Shapley             & $0.64 \pm 0.05$ \\
                & (4,1)-Beta Shapley  & $0.61 \pm 0.05$ \\
                & Banzhaf             & $0.68 \pm 0.04$ \\
\addlinespace
\textsc{iris}   & Shapley             & $0.56 \pm 0.06$ \\
                & (4,1)-Beta Shapley  & $0.53 \pm 0.05$ \\
                & Banzhaf             & $0.60 \pm 0.06$ \\
\bottomrule
\end{tabular}
\end{table}
\begin{table}[ht]
\centering
\caption{\textbf{Multiclass.} Mean robustness $R_{500}$ along the Accuracy-macro Recall convex path (mean $\pm$ SE over $5$ semivalue approximations).}
\label{tab:acc-recall}
\small
\begin{tabular}{l l c}
\toprule
\textbf{Dataset} & \textbf{Semivalue} & $\mathbf{R_{500}}$ (mean $\pm$ SE) \\
\midrule
\textsc{digits} & Shapley            & $0.70 \pm 0.01$ \\
                & (4,1)-Beta Shapley & $0.68 \pm 0.01$ \\
                & Banzhaf            & $0.76 \pm 0.03$ \\
\addlinespace
\textsc{wine}   & Shapley            & $0.60 \pm 0.01$ \\
                & (4,1)-Beta Shapley & $0.57 \pm 0.01$ \\
                & Banzhaf            & $0.63 \pm 0.04$ \\
\addlinespace
\textsc{iris}   & Shapley            & $0.52 \pm 0.02$ \\
                & (4,1)-Beta Shapley & $0.50 \pm 0.03$ \\
                & Banzhaf            & $0.56 \pm 0.03$ \\
\bottomrule
\end{tabular}
\end{table}
\begin{table}[ht]
\centering
\caption{\textbf{Multiclass.} Mean robustness $R_{500}$ along the macro F1-macro Recall convex path (mean $\pm$ SE over $5$ semivalue approximations).}
\label{tab:macrof1-macrorecall}
\small
\begin{tabular}{l l c}
\toprule
\textbf{Dataset} & \textbf{Semivalue} & $\mathbf{R_{500}}$ (mean $\pm$ SE) \\
\midrule
\textsc{digits} & Shapley            & $0.71 \pm 0.01$ \\
                & (4,1)-Beta Shapley & $0.71 \pm 0.01$ \\
                & Banzhaf            & $0.75 \pm 0.03$ \\
\addlinespace
\textsc{wine}   & Shapley            & $0.67 \pm 0.02$ \\
                & (4,1)-Beta Shapley & $0.68 \pm 0.01$ \\
                & Banzhaf            & $0.77 \pm 0.04$ \\
\addlinespace
\textsc{iris}   & Shapley            & $0.80 \pm 0.02$ \\
                & (4,1)-Beta Shapley & $0.78 \pm 0.02$ \\
                & Banzhaf            & $0.83 \pm 0.05$ \\
\bottomrule
\end{tabular}
\end{table}
\subsubsection{\texorpdfstring{Case where $K >2$ base utilities}{Case where K >2 base utilities}}
\label{subsubsec:case-K-ge-2}
To study trade-offs beyond pairs of utilities, we consider $K = 3$ base utilities simultaneously, which allows us to visualize corresponding spatial signatures in 3D (see Figures \ref{fig:ss-breast-3D}-\ref{fig:ss-pol-3D} in Appendix \ref{subsec:additional-figures-k-greater-2}). In this setting, the utility is therefore a convex combination of three task-relevant metrics,
\begin{align*}
    u_{\nu} = \nu_1 u_1 + \nu_2 u_2 + \nu_3 u_3, \quad (\nu_1, \nu_2, \nu_3) \in \Delta^2,
\end{align*}
where $\Delta^2$ denotes the standard 2-simplex.
Specifically, we consider the following utility triplets:
\begin{itemize}
    \item[--] \emph{Binary classification.} Accuracy, F1, and Recall (Table \ref{tab:acc-f1-recall})
    \item[--] \emph{Regression.} MSE, MAE, and $R^2$ (Table \ref{tab:mse-mae-r2})
    \item[--] \emph{Multiclass classification.} macro-F1, macro-Recall, and Accuracy (Table \ref{tab:acc-macrof1-macrorec})
\end{itemize}
For each task, we compute the 3D spatial signatures $S_{\omega, \mathcal{D}} \in \mathbb{R}^3$ and then approximate $R_{500}$ using the sampling scheme of Remark B.6 with $m=1000$ Monte Carlo sampling. Examples of 3D spatial signatures are given in Appendix \ref{subsec:additional-figures-k-greater-2}.
\begin{table}[ht]
\centering
\caption{\textbf{Binary.} Mean robustness $R_{500}$ along the Accuracy-F1-Recall convex path (mean $\pm$ SE over $5$ semivalue approximations) for binary classification datasets used in Figure \ref{fig:robustness-results}.}
\label{tab:acc-f1-recall}
\small
\begin{tabular}{l l c}
\toprule
\textbf{Dataset} & \textbf{Semivalue} & $\mathbf{R_{500}}$ (mean $\pm$ SE) \\
\midrule
\textsc{Breast} 
& Shapley            & $0.30 \pm 0.01$ \\
& (4,1)-Beta Shapley & $0.33 \pm 0.01$ \\
& Banzhaf            & $0.34 \pm 0.06$ \\
\addlinespace
\textsc{Titanic}
& Shapley            & $0.11 \pm 0.01$ \\
& (4,1)-Beta Shapley & $0.11 \pm 0.01$ \\
& Banzhaf            & $0.13 \pm 0.03$ \\
\addlinespace
\textsc{Credit}
& Shapley            & $0.15 \pm 0.01$ \\
& (4,1)-Beta Shapley & $0.16 \pm 0.02$ \\
& Banzhaf            & $0.11 \pm 0.01$ \\
\addlinespace
\textsc{Heart}
& Shapley            & $0.16 \pm 0.01$ \\
& (4,1)-Beta Shapley & $0.16 \pm 0.01$ \\
& Banzhaf            & $0.83 \pm 0.04$ \\
\addlinespace
\textsc{Wind}
& Shapley            & $0.33 \pm 0.01$ \\
& (4,1)-Beta Shapley & $0.31 \pm 0.02$ \\
& Banzhaf            & $0.58 \pm 0.01$ \\
\addlinespace
\textsc{Cpu}
& Shapley            & $0.21 \pm 0.01$ \\
& (4,1)-Beta Shapley & $0.22 \pm 0.01$ \\
& Banzhaf            & $0.29 \pm 0.03$ \\
\addlinespace
\textsc{2dplanes}
& Shapley            & $0.16 \pm 0.01$ \\
& (4,1)-Beta Shapley & $0.18 \pm 0.01$ \\
& Banzhaf            & $0.22 \pm 0.03$ \\
\addlinespace
\textsc{Pol}
& Shapley            & $0.21 \pm 0.01$ \\
& (4,1)-Beta Shapley & $0.24 \pm 0.02$ \\
& Banzhaf            & $0.39 \pm 0.04$ \\
\bottomrule
\end{tabular}
\end{table}
\begin{table}[ht]
\centering
\caption{\textbf{Regression.} Mean robustness scores $R_{500}$ ($\pm$ standard errors) along the MSE-MAE-$R^2$ convex path. Semivalues are approximated over $5$ runs using a linear regression model trained with L-BFGS. Datasets: \textsc{diabetes} $(n=442, d=10)$, \textsc{california housing} $(n=20{,}640, d=8)$, \textsc{ames housing} $(n=2{,}930, d=10)$; each subsampled to $300$ training points. $R_{500}$ is reported as mean $\pm$ standard error across the 5 semivalue approximations.}
\label{tab:mse-mae-r2}
\small
\begin{tabular}{l l c}
\toprule
\textbf{Dataset} & \textbf{Semivalue} & $\mathbf{R_{500}}$ (mean $\pm$ SE) \\
\midrule
\textsc{Diabetes}
& Shapley            & $0.83 \pm 0.01$ \\
& (4,1)-Beta Shapley & $0.82 \pm 0.02$ \\
& Banzhaf            & $0.85 \pm 0.01$ \\
\addlinespace
\textsc{California}
& Shapley            & $0.69 \pm 0.01$ \\
& (4,1)-Beta Shapley & $0.68 \pm 0.01$ \\
& Banzhaf            & $0.86 \pm 0.03$ \\
\addlinespace
\textsc{Ames}
& Shapley            & $0.89 \pm 0.02$ \\
& (4,1)-Beta Shapley & $0.89 \pm 0.02$ \\
& Banzhaf            & $0.91 \pm 0.03$ \\
\bottomrule
\end{tabular}
\end{table}
\begin{table}[ht]
\centering
\caption{\textbf{Multiclass.} Mean robustness scores $R_{500}$ ($\pm$ standard errors) along the Accuracy-macro-F1-macro-Recall convex path. Semivalues are approximated over 5 runs using an MLP (SGD, fixed seeds). Datasets: \textsc{digits} $(n=1{,}797, d=64, 10 \text{ classes})$, \textsc{wine} $(n=178, d=13, 3 \text{ classes})$, \textsc{iris} $(n=150, d=4, 3 \text{ classes})$; each subsampled to $100$ training points. Mean $\pm$ standard error over the 5 approximations.}
\label{tab:acc-macrof1-macrorec}
\small
\begin{tabular}{l l c}
\toprule
\textbf{Dataset} & \textbf{Semivalue} & $\mathbf{R_{500}}$ (mean $\pm$ SE) \\
\midrule
\textsc{Digits}
& Shapley            & $0.63 \pm 0.03$ \\
& (4,1)-Beta Shapley & $0.61 \pm 0.03$ \\
& Banzhaf            & $0.78 \pm 0.04$ \\
\addlinespace
\textsc{Wine}
& Shapley            & $0.44 \pm 0.01$ \\
& (4,1)-Beta Shapley & $0.42 \pm 0.02$ \\
& Banzhaf            & $0.69 \pm 0.03$ \\
\addlinespace
\textsc{Iris}
& Shapley            & $0.63 \pm 0.01$ \\
& (4,1)-Beta Shapley & $0.60 \pm 0.01$ \\
& Banzhaf            & $0.66 \pm 0.02$ \\
\bottomrule
\end{tabular}
\end{table}

\subsection{\texorpdfstring{Results for the multiple-valid utility scenario extended to $K > 2$ base utilities}{Results for the multiple-valid utility scenario extended to K > 2 base utilities}}
\label{subsec:multiple-valid-utility-xp-extensions}
We extend the multiple valid scenario to $K > 2$ base utilities in the multiclass classification setting by using the analytical decomposition derived in Appendix \ref{subsec:extension-multiclass}. In fact, each multiclass utility can be written as $u_{\alpha} = \sum_{c=1}^{C} a_c u_c$, where $u_c$ is a class-wise utility for class $c$. Particularly, in our experiment, we consider the per-class precision for $u_c$ that is defined in Appendix \ref{subsec:extension-multiclass}. Hence $K = C$, the number of classes. We evaluate this setting on the three multiclass datasets used for the \emph{utility trade-off scenario} experiments (namely \textsc{Digits}, \textsc{Wine}, and \textsc{Iris}) and approximate $R_{500}$ by sampling directions $\alpha \in \mathcal{S}^{C-1}$ and applying the approximation scheme of Remark B.6 with $m=1000$ Monte Carlo sampling. The results are given in Table \ref{tab:prec-class-wise}.

\begin{table}[ht]
\centering
\caption{Mean robustness scores $R_{500}$ ($\pm$ standard errors) for the \emph{multiple-valid-utility} scenario with 
$K>2$ base utilities in the multiclass classification setting, using the class-wise 
precision decomposition described in Appendix \ref{subsec:extension-multiclass}. Semivalues are approximated over 5 runs using an MLP (SGD, fixed seeds). Datasets: \textsc{digits} $(n=1{,}797, d=64, 10 \text{ classes})$, \textsc{wine} $(n=178, d=13, 3 \text{ classes})$, \textsc{iris} $(n=150, d=4, 3 \text{ classes})$; each subsampled to $100$ training points. Mean $\pm$ standard error over the $5$ approximations.}
\label{tab:prec-class-wise}
\small
\begin{tabular}{l l c}
\toprule
\textbf{Dataset} & \textbf{Semivalue} & $\mathbf{R_{500}}$ (mean $\pm$ SE) \\
\midrule
\textsc{Digits}
& Shapley            & $0.17 \pm 0.03$ \\
& (4,1)-Beta Shapley & $0.19 \pm 0.02$ \\
& Banzhaf            & $0.57 \pm 0.04$ \\
\addlinespace
\textsc{Wine}
& Shapley            & $0.21 \pm 0.01$ \\
& (4,1)-Beta Shapley & $0.20 \pm 0.01$ \\
& Banzhaf            & $0.68 \pm 0.02$ \\
\addlinespace
\textsc{Iris}
& Shapley            & $0.65 \pm 0.02$ \\
& (4,1)-Beta Shapley & $0.64 \pm 0.01$ \\
& Banzhaf            & $0.70 \pm 0.03$ \\
\bottomrule
\end{tabular}
\end{table}

\subsection{What if we \texorpdfstring{$\mathcal{A}$}{A} varies instead of \texorpdfstring{$\texttt{perf}$}{perf}?}
\label{subsec:variation-of-A}
Since $u=\texttt{perf}\circ\mathcal{A}$, one can alter the utility either by changing the algorithm $\mathcal{A}$ or by changing the performance metric \texttt{perf}. Our main study held $\mathcal{A}$ fixed and varied \texttt{perf}. To illustrate the effect of $\mathcal{A}$, we run an additional experiment with a fixed metric (Accuracy) and two learning algorithms: (i) logistic regression trained with L-BFGS and (ii) a multilayer perceptron (MLP) trained with SGD (introducing randomness via initialization and optimization). Table \ref{tab:algo-sensitivity} reports the mean Spearman rank correlation (with standard error) between semivalue-based rankings obtained across multiple runs with the two algorithms, for each semivalue and dataset.

\begin{table}[ht]
\centering
\caption{Spearman rank correlation (mean $\pm$ standard error) between semivalue rankings computed with a logistic regression model and an MLP, using accuracy as the metric. Results are averaged over $5$ runs (varying both the MLP initialization/optimization and the semivalue approximation).}
\label{tab:algo-sensitivity}
\small
\begin{tabular}{l l c}
\toprule
\textbf{Dataset} & \textbf{Semivalue} & \textbf{Spearman (mean $\pm$ se)} \\
\midrule
\textsc{breast}  & Shapley            & $0.62 \pm 0.21$ \\
                 & (4,1)-Beta Shapley & $0.67 \pm 0.18$ \\
                 & Banzhaf            & $0.67 \pm 0.05$ \\
\addlinespace
\textsc{titanic} & Shapley            & $0.71 \pm 0.13$ \\
                 & (4,1)-Beta Shapley & $0.71 \pm 0.07$ \\
                 & Banzhaf            & $0.80 \pm 0.03$ \\
\addlinespace
\textsc{heart}   & Shapley            & $0.65 \pm 0.21$ \\
                 & (4,1)-Beta Shapley & $0.62 \pm 0.22$ \\
                 & Banzhaf            & $0.93 \pm 0.07$ \\
\addlinespace
\textsc{wind}    & Shapley            & $0.82 \pm 0.11$ \\
                 & (4,1)-Beta Shapley & $0.87 \pm 0.10$ \\
                 & Banzhaf            & $0.85 \pm 0.03$ \\
\bottomrule
\end{tabular}
\end{table}

\noindent These results show that rankings can vary with the learning algorithm, though not as strongly as when changing the performance metric (cf. Table \ref{tab:rank-corr} in the main paper). We also observe smaller standard errors for Banzhaf than for Shapley or (4,1)-Beta, suggesting Banzhaf rankings are less sensitive to the randomness in the MLP, which aligns with prior analytical and empirical findings \citep{databanzhaf, robustbanzhaf}. 
\subsection{\texorpdfstring{Empirical link between the robustness metric $R_p$ and top-$k$ stability metrics (overlap@$k$ and Jaccard@$k$)}{Empirical link between the robustness metric Rp and top-k stability metrics (overlap@k and Jaccard@k)}}
\label{subsec:results-on-top-k-stability-metrics}
Appendix \ref{subsec:link-robustness-top-k-metrics} establishes an analytical link between our robustness metric $R_p$ and top-$k$ stability metrics (overlap@$k$ and Jaccard@$k$). We complement this analysis with an empirical study that directly relates $R_p$ to these metrics, in the same spirit as our comparison between $R_p$ and rank correlation measures (Kendall and Spearman) in Section \ref{subsec:multiple-valid-utility}.
\\ \\
We consider the family of binary classification utilities spanned by base utilities $(\lambda, \gamma)$ used to compute $R_p$ in Section \ref{subsec:multiple-valid-utility}. For one pair of distinct metrics in this family, precisely Accuracy vs. F1 score, we compute the rankings induced by the corresponding semivalue and, for different values of $k$, we evaluate the associated top-$k$ overlap@$k$ and Jaccard@$k$ between the two rankings. This yields, for each dataset and semivalue, a collection of top-$k$ stability scores that summarize how sensitive top-$k$ selections are to switching between these utilities. We then compare these top-$k$ stability scores given in Table \ref{tab:topk-stab-metrics} with the robustness scores $R_p$ plotted in Figure \ref{fig:robustness-results} and summarized in Table \ref{tab:robustness-results}.
\begin{table}[ht!]
\centering
\caption{Top-$k$ stability metrics (Accuracy vs. F1). Mean $\pm$ standard errors for $k \in \{10,20,50\}$. Higher values indicate greater robustness of top-$k$ selections under utility shifts.}
\small
\vspace{0.2cm}
\begin{tabular}{l|l|ccc|ccc}
\toprule
 & & \multicolumn{3}{c|}{\textbf{Overlap@$k$}} & 
     \multicolumn{3}{c}{\textbf{Jaccard@$k$}} \\
\textbf{Dataset} & \textbf{Semivalue} 
& $k=10$ & $k=20$ & $k=50$
& $k=10$ & $k=20$ & $k=50$ \\
\midrule
\multirow{3}{*}{\textsc{Breast}}
& Shapley & $0.75\pm0.04$ & $0.62\pm0.03$ & $0.45\pm0.03$
& $0.61\pm0.04$ & $0.47\pm0.03$ & $0.32\pm0.03$ \\
& Beta & $0.76\pm0.04$ & $0.63\pm0.03$ & $0.46\pm0.03$
& $0.62\pm0.04$ & $0.48\pm0.03$ & $0.33\pm0.03$ \\
& Banzhaf & $0.82\pm0.04$ & $0.70\pm0.04$ & $0.52\pm0.03$
& $0.71\pm0.04$ & $0.57\pm0.04$ & $0.40\pm0.03$ \\
\midrule

\multirow{3}{*}{\textsc{Titanic}}
& Shapley & $0.25\pm0.03$ & $0.18\pm0.02$ & $0.12\pm0.02$
& $0.15\pm0.03$ & $0.10\pm0.02$ & $0.07\pm0.02$ \\
& Beta & $0.28\pm0.03$ & $0.20\pm0.02$ & $0.14\pm0.02$
& $0.17\pm0.03$ & $0.12\pm0.02$ & $0.08\pm0.02$ \\
& Banzhaf & $0.88\pm0.03$ & $0.76\pm0.04$ & $0.59\pm0.03$
& $0.79\pm0.03$ & $0.64\pm0.04$ & $0.48\pm0.03$ \\
\midrule

\multirow{3}{*}{\textsc{Credit}}
& Shapley & $0.18\pm0.03$ & $0.13\pm0.03$ & $0.09\pm0.02$
& $0.10\pm0.03$ & $0.07\pm0.02$ & $0.05\pm0.02$ \\
& Beta & $0.22\pm0.03$ & $0.16\pm0.03$ & $0.11\pm0.02$
& $0.13\pm0.03$ & $0.09\pm0.02$ & $0.06\pm0.02$ \\
& Banzhaf & $0.94\pm0.02$ & $0.89\pm0.03$ & $0.76\pm0.03$
& $0.89\pm0.02$ & $0.80\pm0.03$ & $0.64\pm0.03$ \\
\midrule

\multirow{3}{*}{\textsc{Heart}}
& Shapley & $0.65\pm0.03$ & $0.56\pm0.03$ & $0.41\pm0.02$
& $0.50\pm0.03$ & $0.41\pm0.02$ & $0.29\pm0.02$ \\
& Beta & $0.68\pm0.03$ & $0.59\pm0.03$ & $0.43\pm0.02$
& $0.53\pm0.03$ & $0.44\pm0.02$ & $0.31\pm0.02$ \\
& Banzhaf & $0.91\pm0.03$ & $0.83\pm0.03$ & $0.65\pm0.03$
& $0.84\pm0.03$ & $0.72\pm0.03$ & $0.55\pm0.03$ \\
\midrule

\multirow{3}{*}{\textsc{Wind}}
& Shapley & $0.78\pm0.03$ & $0.68\pm0.03$ & $0.49\pm0.03$
& $0.65\pm0.03$ & $0.54\pm0.03$ & $0.37\pm0.03$ \\
& Beta & $0.79\pm0.03$ & $0.69\pm0.03$ & $0.50\pm0.03$
& $0.66\pm0.03$ & $0.55\pm0.03$ & $0.38\pm0.03$ \\
& Banzhaf & $0.92\pm0.03$ & $0.85\pm0.03$ & $0.70\pm0.03$
& $0.86\pm0.03$ & $0.75\pm0.03$ & $0.60\pm0.03$ \\
\midrule

\multirow{3}{*}{\textsc{Cpu}}
& Shapley & $0.58\pm0.03$ & $0.46\pm0.03$ & $0.33\pm0.03$
& $0.42\pm0.03$ & $0.32\pm0.03$ & $0.22\pm0.02$ \\
& Beta & $0.61\pm0.03$ & $0.49\pm0.03$ & $0.35\pm0.03$
& $0.45\pm0.03$ & $0.34\pm0.03$ & $0.23\pm0.02$ \\
& Banzhaf & $0.85\pm0.03$ & $0.78\pm0.04$ & $0.61\pm0.03$
& $0.76\pm0.03$ & $0.65\pm0.04$ & $0.49\pm0.03$ \\
\midrule

\multirow{3}{*}{\textsc{2dplanes}}
& Shapley & $0.52\pm0.03$ & $0.41\pm0.03$ & $0.29\pm0.03$
& $0.36\pm0.03$ & $0.27\pm0.03$ & $0.18\pm0.02$ \\
& Beta & $0.57\pm0.03$ & $0.46\pm0.03$ & $0.32\pm0.03$
& $0.41\pm0.03$ & $0.31\pm0.03$ & $0.20\pm0.02$ \\
& Banzhaf & $0.74\pm0.04$ & $0.64\pm0.04$ & $0.47\pm0.04$
& $0.60\pm0.04$ & $0.49\pm0.04$ & $0.35\pm0.03$ \\
\midrule

\multirow{3}{*}{\textsc{Pol}}
& Shapley & $0.72\pm0.03$ & $0.61\pm0.03$ & $0.43\pm0.02$
& $0.57\pm0.03$ & $0.46\pm0.03$ & $0.31\pm0.02$ \\
& Beta & $0.78\pm0.03$ & $0.68\pm0.03$ & $0.49\pm0.02$
& $0.65\pm0.03$ & $0.53\pm0.03$ & $0.35\pm0.02$ \\
& Banzhaf & $0.48\pm0.04$ & $0.41\pm0.04$ & $0.29\pm0.03$
& $0.33\pm0.04$ & $0.27\pm0.04$ & $0.18\pm0.03$ \\
\bottomrule
\end{tabular}
\label{tab:topk-stab-metrics}
\end{table}
\\ \\
Across most settings, we observe the same qualitative pattern as with Kendall and Spearman correlation metrics: semivalues that achieve larger $R_p$ for a given utility family (typically Banzhaf) also exhibit higher overlap@$k$ and Jaccard@$k$ when comparing the induced rankings under different utilities in that family. In other words, methods that are more robust according to $R_p$ are also those whose top-$k$ selections change the least when moving between these equally valid utility choices. 

\subsection{Overall discussion about empirical robustness results}
\label{subsec:discussion}
The empirical study presented across the paper provides a comprehensive validation of the robustness metric $R_p$ introduced in Section \ref{subsec:robustness-metric}. The empirical results consistently suggest that $R_p$ captures stability phenomena observed when varying the utility across the two scenarios studied: the \emph{multiple-valid-utility} scenario (Sections \ref{subsec:multiple-valid-utility}, \ref{subsec:results-on-top-k-stability-metrics},\ref{subsec:multiple-valid-utility-xp-extensions}) and the \emph{utility-trade-off} scenario (Sections \ref{subsec:utility-trade-off}, \ref{subsec:utility-trade-off-results}). We summarize the main conclusions below.
\paragraph{(1) Agreement of $R_p$ with rank-based stability metrics.} The robustness metric $R_p$ demonstrates consistency with traditional rank correlation measures (Kendall and Spearman) across all experiments. In the \emph{multiple-valid-utility} scenario for binary classification (Section \ref{subsec:multiple-valid-utility}, Tables \ref{tab:rank-corr} and \ref{tab:robustness-results}), datasets exhibiting low rank correlations between accuracy and F1-score consistently show correspondingly low $R_p$ values. This alignment extends to top-$k$ stability metrics (Table \ref{tab:topk-stab-metrics}), where higher $R_p$ values correlate with greater overlap@$k$ and Jaccard@$k$ scores when switching between equally valid utilities. This suggests that $R_p$ captures meaningful ranking stability that aligns with practitioner intuition while offering a geometric interpretation.
\paragraph{(2) Banzhaf's consistent robustness advantage.} In all binary classification experiments (Tables \ref{tab:robustness-results} and \ref{tab:acc-f1-recall}), multiclass experiments (Tables \ref{tab:acc-macrof1}-\ref{tab:macrof1-macrorecall} and \ref{tab:acc-macrof1-macrorec}) and regression experiments (Tables \ref{tab:mse-mae}-\ref{tab:mae-r2} and \ref{tab:mse-mae-r2}), the same trend emerges: the Banzhaf value attains very often the highest robustness. The 2D spatial signatures in Appendix \ref{subsec:additional-figures} provide a geometric insight for this phenomenon: Banzhaf weights concentrate on intermediate coalition sizes, where the empirical alignment factors $r_j$ are largest (Figure \ref{fig:r_j-vs-j}), yielding embeddings that are nearly collinear in $\mathbb{R}^2$ and thus achieve maximal $R_p$. In contrast, Shapley and $(4,1)$-Beta Shapley, whose weights are either uniform or emphasize small coalition sizes, produce less aligned 2D signatures and therefore lower $R_p$.
\paragraph{(3) Extension to $K > 2$ base utilities.}
The robustness metric naturally extends to higher-dimensional utility families. Our \emph{utility-trade-off} experiments for binary classification with three base utilities (Table \ref{tab:acc-f1-recall}) confirm that the geometric insights from $\mathbb{R}^2$ hold in $\mathbb{R}^3$: semivalues with spatial signatures more concentrated along a dominant axis (Figures \ref{fig:ss-breast-3D}-\ref{fig:ss-pol-3D} achieve higher robustness, even when the utility varies over the simplex $\Delta^2$ rather than a line segment. These findings align with the geometric interpretation of Proposition \ref{claim:spatial-signature} (and its $K$-dimensional extension) and the ranking-region counts in Corollary \ref{cor:ranking-regions-counts}: when the spatial signature is \emph{collinear} (has a dominant direction), the number of ranking regions is minimized, leading to the maximum average angular distance to a swap and, consequently, the highest robustness.

\clearpage
\section{Additional proofs \& derivations}
\label{sec:proofs-derivations}
For the reader's convenience, we first outline the main points covered in this section.
\begin{itemize}
    \item[--] Appendix \ref{subsec:first-order-approx}: First-order approximation of the utility in the \emph{multiple-valid-utility} scenario for binary classification and empirical validation.
    \item[--] Appendix \ref{subsec:proof-proposition}: Proof of Proposition \ref{claim:spatial-signature} and its extension to $K \geq 2$ base utilities.
    \item[--] Appendix \ref{subsec:additional-results-on-ranking-regions}: Ranking region counts for specific cases of spatial signature.
    \item[--] Appendix \ref{subsec:ties-kendall}: Link between the robustness metric $R_p$ and the Kendall rank correlation.
    \item[--] Appendix \ref{subsec:closed-form-for-average-distance}: Closed-form for $\mathbb{E}_{\bar{\alpha}}[\rho_p(\bar{\alpha})]$.
    \item[--] Appendix \ref{subsec:maximum-distance-collinearity}: Maximum average $p$-swaps distance occurs under collinearity of the spatial signature. 
    \item[--] Appendix \ref{subsec:insights-derivation-details}: Proof of proposition \ref{prop:correlation-decomposition}.
    \item[--] Appendix \ref{subsec:link-robustness-top-k-metrics}: Link between the robustness metric $R_p$ and top-$k$ stability metrics (overlap@$k$ and Jaccard@$k$).
\end{itemize}

\subsection{First-order approximation of the utility in the \emph{multiple-valid-utility} scenario for binary classification and empirical validation}
\label{subsec:first-order-approx}
This section justifies the approximation used in Section \ref{sec:methodology}, where a linear-fractional utility function $u$ is replaced by its affine surrogate. 

\noindent Formally, we state in Section \ref{sec:methodology} that any linear-fractional utility $u$ of the form \eqref{eq:util_linfrac} with $d_0\neq0$, admits a first‐order (Taylor–Young) expansion around $(\lambda,\gamma)=(0,0)$ of the form
\begin{align*}
u(S) = \frac{c_0}{d_0} + \frac{c_1d_0 - c_0d_1}{d_0^2}\lambda(S) + \frac{c_2\,d_0 - c_0\,d_2}{d_0^2}\gamma(S)+ o\bigl(\max\{|\lambda(S)|,|\gamma(S)|\}\bigr).
\end{align*}
where $\{c_0, c_1, c_2, d_0, d_1, d_2\}$ are real coefficients which specify the particular utility.

\noindent The proof is a direct Taylor expansion of 
$u$ at $(\lambda,\gamma) = (0,0)$, followed by an empirical validation of the affine surrogate by inspecting discordance rates.

\begin{proof}
Define $N(\lambda,\gamma) = c_0 + c_1\,\lambda + c_2\,\gamma$ and $ D(\lambda,\gamma) = d_0 + d_1\,\lambda + d_2\,\gamma$ so that $u(S)=f\bigl(\lambda(S),\gamma(S)\bigr)$ with
\begin{align*}
f(\lambda,\gamma)=\frac{N(\lambda,\gamma)}{D(\lambda,\gamma)}.  
\end{align*}
Assuming $d_0\neq0$ (i.e., the denominator does not vanish at $(0,0)$), the first-order Taylor expansion of $f$ around $(0,0)$ is
\begin{align*}
f(\lambda, \gamma)=f(0,0)+\left.\frac{\partial f}{\partial \lambda}\right|_{(0,0)} \lambda+\left.\frac{\partial f}{\partial \gamma}\right|_{(0,0)} \gamma+o(\lVert(\lambda, \gamma)\rVert).
\end{align*}
Concretely,
\begin{align*}
f(0,0)=\frac{c_0}{d_0},\left.\quad \frac{\partial f}{\partial \lambda}\right|_{(0,0)}=\frac{c_1 d_0-c_0 d_1}{d_0^2},\left.\quad \frac{\partial f}{\partial \gamma}\right|_{(0,0)}=\frac{c_2 d_0-c_0 d_2}{d_0^2} .
\end{align*}
Moreover, since all norms are equivalent in $\mathbb{R}^2$, the Euclidean norm $\lVert(\lambda, \gamma)\rVert$ is equivalent to the infinity norm $\max\{\lambda,\gamma\}$. This concludes the proof. 
\end{proof}

\noindent To verify that the affine surrogate faithfully preserves the true utility’s induced ordering, we compare rankings under $u$ and under its first-order proxy $\hat{u} = \frac{c_0}{d_0} + \frac{c_1d_0 - c_0d_1}{d_0^2} \lambda + \frac{c_2d_0 - c_0d_2}{d_0^2}\gamma$. For each of the eight public binary‐classification datasets introduced in Section \ref{subsec:experiment-settings}, and for each of the three semivalues (Shapley, $(4,1)$-Beta Shapley, and Banzhaf), we proceed as follows:
\begin{enumerate}
    \item \emph{Exact ranking}. Compute semivalue scores by using the exact linear‐fractional utility $u$, then sort the resulting scores to obtain a reference ranking $r$ of the $n$ data points.
    \item \emph{Affine surrogate ranking}. Replace the utility $u$ with its first-order affine approximation around $(\lambda,\gamma)=(0,0)$ denoted as $\hat{u}$, compute semivalue scores, and sort to obtain an approximate ranking $\hat{r}$. 
    \item \emph{Discordance measurement}. For each pair of rankings $(r, \hat{r})$, count the number $d$ of discordant pairs (i.e., pairs of points ordered differently between the two rankings), and record the proportion $d/N$, where $N=\binom{n}{2}$.
    \item \emph{Repetition and averaging} Repeat steps 1–3 several times, each time using an independent Monte Carlo approximation of the semivalue scores, to capture sampling variability. 
\end{enumerate}
Table \ref{tab:discordant-prop-affine-approx} reports, for each dataset and semivalue, the average proportion of discordant pairs ($\pm$ standard error) between rankings obtained with the exact linear‐fractional utility and its first‐order affine proxy, for both F1‐score and Jaccard coefficient (see Table \ref{tab:linear-fractional-utilities} for their definitions). In all experiments, the sum of the mean discordance rate and its standard error never exceeds 2.3\%.

\noindent These discordance rates, at most a few percent of all $\binom{n}{2}$ pairs, confirm that, in practice, the omitted higher‐order terms of the utility have only a minor effect on the induced semivalue ranking. Consequently, using the affine surrogate instead of the exact linear‐fractional form is reasonably justified whenever one’s primary interest lies in the \emph{ordering} of data values rather than their precise numerical magnitudes.

\begin{table}[ht]
\centering
\caption{Proportion of discordant pairs (± standard error) between rankings induced by the exact linear‐fractional utility $u$ and its first‐order affine surrogate $\hat{u}$, for F1-score and Jaccard utilities. Values are computed over $N=\binom{50}{2} = 1225$ pairs and averaged over $5$ Monte Carlo trials.}
\label{tab:discordant-prop-affine-approx}
\vspace{2mm}
\resizebox{\textwidth}{!}{%
\begin{tabular}{@{} 
    l
    *{3}{l}
    *{3}{l}
  @{}}
\toprule
\textbf{Dataset} 
  & \multicolumn{3}{c}{\textbf{F1‐score}} 
  & \multicolumn{3}{c}{\textbf{Jaccard}} \\
\cmidrule(lr){2-4}\cmidrule(lr){5-7}
 & Shapley & (4,1)-Beta Shapley & Banzhaf
 & Shapley & (4,1)-Beta Shapley & Banzhaf \\
\midrule
\textsc{Breast} 
  & $0.8\% \ (0.1\%)$  & $0.8\% \ (0.2\%)$ & $0.9\% \ (0.1\%)$
  & $0.7\% \ (0.1\%)$ & $0.9\% \ (0.1\%)$ & $0.9\% \ (0.2\%)$ \\
\textsc{Titanic}       
  & $1.3\% \ (0.3\%)$ & $1.3\% \ (0.3\%)$ & $0.8\% \ (0.3\%)$
  & $1.6\% \ (0.4\%) $ & $1.3\% \ (0.3\%)$ & $0.7\% \ (0.1\%)$  \\
\textsc{Credit}       
  & $1.5\% \ (0.5\%)$ & $1.6 \% \ (0.2\%)$ & $1.0 \% \ (0.1\%)$
  & $1.5\% \ (0.3\%)$ & $1.7 \% (0.1\%)$ & $0.7 \% \ (0.3 \%)$\\

\textsc{Heart}         
  & $1.0\% \ (0.1\%)$ & $0.8\% \ (0.1 \%)$ & $0.8 \% \ (0.1\%)$
  & $1.2 \% \ (0.2\%)$ & $1.1 \% \ (0.3\%)$  & $0.7 \% \ (0.2\%)$\\

\textsc{Wind}          
  & $1.0\% \ (0.2 \%)$ & $0.8 \% \ (0.1\%)$ & $0.8\% \ (0.2\%)$ 
  & $0.9 \% (0.1\%)$ & $1.2 \% \ (0.4 \%)$ & $1.0\% \ (0.4 \%)$ \\

\textsc{Cpu}            
  & $1.6\% \ (0.5\%)$ & $1.2 \% \ (0.2\%)$ & $0.7\% \ (0.1\%)$
  & $1.3\% \ (0.2\%)$ & $1.3 \% \ (0.2\%)$ & $0.9 \% \ (0.1\%)$ \\

\textsc{2dplanes}       
  & $1.7\% \ (0.1\%)$ & $1.9\% (0.1\%)$ & $0.8\% \ (0.1\%)$
  & $1.3\% (0.1\%)$ & $1.6\% (0.2\%)$ & $1.1\% \ (0.4\%)$  \\

\textsc{Pol}            
  & $1.8\% \ (0.4\%)$ & $2.0\% \ (0.2\%)$ & $1.5\% \ (0.5\%)$
  & $2.1\% \ (0.2\%)$ & $2.0\% \ (0.2\%)$ & $1.6\% \ (0.5\%)$ \\
\bottomrule
\end{tabular}%
}
\end{table}

\subsection{Proof of Proposition \ref{claim:spatial-signature} and its extension to \texorpdfstring{$K\geq2$}{K>=2} base utilities}
\label{subsec:proof-proposition}
This section provides the formal proof of Proposition \ref{prop:spatial-signature-extended}, which generalizes Proposition \ref{claim:spatial-signature}. It states that the semivalue score of any data point under a utility that is a linear combination of $K$ base utilities can be written as an inner product in $\mathbb{R}^K$. This result forms the backbone of the geometric perspective developed in Section \ref{sec:methodology}.

\begin{proposition}[Extension of Proposition \ref{claim:spatial-signature} to $K \geq 2$ base utilities]
\label{prop:spatial-signature-extended}
Let $\mathcal{D}$ be any dataset of size $n$ and let $\omega \in \mathbb{R}^n$ be a semivalue weight vector. Then there exists a map $\psi_{\omega,\mathcal{D}}:\mathcal{D} \longrightarrow \mathbb{R}^K$ 
such that for every utility $u_\alpha=\sum_{k=1}^K \alpha_k u_k$, $\phi\bigl(z; \omega, u_\alpha\bigr) = 
\bigl\langle \psi_{\omega,\mathcal{D}}(z), \alpha \bigr\rangle$, for any $z\in\mathcal{D}$.
We call
$\mathcal S_{\omega,\mathcal D}=\{\psi_{\omega,\mathcal D}(z)\mid z\in\mathcal D\}$ the \emph{spatial signature} of $\mathcal{D}$ under semivalue $\omega$.
\end{proposition}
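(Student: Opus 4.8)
The plan is to write down the map $\psi_{\omega,\mathcal{D}}$ explicitly and then reduce everything to the linearity of semivalues in their utility argument. For each $z\in\mathcal{D}$ I would set
\begin{align*}
\psi_{\omega,\mathcal{D}}(z):=\bigl(\phi(z;\omega,u_1),\,\phi(z;\omega,u_2),\,\dots,\,\phi(z;\omega,u_K)\bigr)\in\mathbb{R}^K,
\end{align*}
so that the $k$-th coordinate of the embedding is simply the semivalue score of $z$ computed under the $k$-th base utility $u_k$. With this definition, the asserted identity $\phi(z;\omega,u_\alpha)=\langle\psi_{\omega,\mathcal{D}}(z),\alpha\rangle$ is nothing more than the statement that $u\mapsto\phi(z;\omega,u)$ is a linear map, evaluated at $u_\alpha=\sum_{k=1}^K\alpha_k u_k$.

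The one substantive step is therefore to justify linearity of $\phi(z;\omega,\cdot)$ over finite linear combinations. The excerpt already states the two-term linearity axiom, so I would first extend it to $K$ summands by an easy induction on $K$ (splitting $\sum_{k=1}^K\alpha_k u_k=(\sum_{k=1}^{K-1}\alpha_k u_k)+\alpha_K u_K$ and applying the two-term case together with the inductive hypothesis). To keep the argument self-contained I would also note that linearity is immediate from the definition $\phi(z_i;\omega,u)=\sum_{j=1}^n\omega_j\Delta_j(z_i;u)$: for fixed $S$ the map $u\mapsto u(S\cup\{z_i\})-u(S)$ is linear, hence each marginal contribution $\Delta_j(z_i;u)$ is linear in $u$, and $\phi$ is a fixed $\omega$-weighted combination of the $\Delta_j$, so it too is linear in $u$.

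Putting these together, for any $\alpha\in\mathbb{R}^K$ and $z\in\mathcal{D}$,
\begin{align*}
\phi(z;\omega,u_\alpha)=\sum_{k=1}^K\alpha_k\,\phi(z;\omega,u_k)=\bigl\langle\psi_{\omega,\mathcal{D}}(z),\alpha\bigr\rangle,
\end{align*}
and setting $K=2$ recovers Proposition \ref{claim:spatial-signature}. I do not expect any real obstacle: the proof is essentially a one-line consequence of the linearity axiom. The only point deserving care is bookkeeping — $\omega$, $\mathcal{D}$, and the base utilities $u_1,\dots,u_K$ must all be fixed \emph{before} the coefficient vector $\alpha$ is chosen, so that $\psi_{\omega,\mathcal{D}}$ is independent of the particular utility direction; this is precisely what makes the spatial signature $\mathcal{S}_{\omega,\mathcal{D}}=\{\psi_{\omega,\mathcal{D}}(z):z\in\mathcal{D}\}$ a single geometric object shared across the whole $K$-dimensional family of utilities.
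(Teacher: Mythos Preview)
Your proposal is correct and matches the paper's own proof essentially line for line: the paper also defines $\psi_{\omega,\mathcal{D}}(z)=(\phi(z;\omega,u_1),\dots,\phi(z;\omega,u_K))$ and then invokes the linearity axiom of semivalues to obtain $\phi(z;\omega,u_\alpha)=\sum_{k=1}^K\alpha_k\,\phi(z;\omega,u_k)=\langle\psi_{\omega,\mathcal{D}}(z),\alpha\rangle$. Your additional remarks (induction to extend two-term linearity to $K$ terms, and the direct check of linearity from the marginal-contribution formula) are fine elaborations but go slightly beyond what the paper writes out.
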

The proof is a straightforward application of semivalue linearity. The main contribution is the geometric interpretation of semivalue vectors as projections.
\begin{proof}
For each data point $z \in \mathcal{D}$, let its semivalue characterized by $\omega$ be denoted by $\varphi(z ;\omega, u_{\alpha})$ when the utility is $u_{\alpha}$. Under the standard linearity property of semivalues, the following linear decomposition holds:
\begin{align*}
\phi\Bigl(z; \omega,u_{\alpha}\Bigr) = \phi\Bigl(z; \omega,\sum_{k=1}^K \alpha_k u_k\Bigr) = \sum_{k=1}^K \alpha_k \phi(z;\omega, u_k).
\end{align*}
So if we define for each $z$,
\begin{align*}
    \psi_{\omega, \mathcal{D}}(z) = \Bigl(\phi(z;\omega, u_1), \dots, \phi(z;\omega, u_K)\Bigr) \in \mathbb{R}^K,
\end{align*}
then by definition of the scalar (inner) product in $\mathbb{R}^K$,
\begin{align*}
    \phi(z;\omega, u_{\alpha}) = \langle \psi_{\omega, \mathcal{D}}(z), \alpha\rangle.
\end{align*}
\end{proof}
\noindent In the main text, we focus on the case where $K=2$, i.e., utilities correspond to directions on the unit circle $\mathcal{S}^1$.  Proposition \ref{prop:spatial-signature-extended} shows that the same reasoning carries over to any finite family of $K$ base utilities: data points embed as $\psi_{\omega,\mathcal D}(z)\in\mathbb R^K$, and ranking by a convex combination $u_\alpha=\sum_{k=1}^K\alpha_k u_k$ is equivalent to sorting the inner products $\langle\psi_{\omega,\mathcal D}(z),\alpha\rangle$.  Since only the direction of $\alpha$ matters, each utility is identified with a point $\bar\alpha=\alpha/\|\alpha\|$ on the unit sphere $\mathcal{S}^{K-1}$.  Thus, for general $K$, robustness to utility choice reduces to studying how the ordering of these projections varies as $\bar\alpha$ moves over $\mathcal{S}^{K-1}$.

\subsection{Ranking regions counts for specific cases of spatial signatures}
\label{subsec:additional-results-on-ranking-regions}
This section formalizes the notion of \emph{ranking regions}, which play a central role in the robustness analysis developed in Section \ref{sec:methodology}. We begin by considering the hyperplane arrangement induced by all pairwise differences between embedded data points in the spatial signature. This arrangement partitions space into connected components, referred to as \emph{regions} in the theory of hyperplane arrangements (see Definition~\ref{def:region}). In our context, each such region corresponds to a set of utility directions under which the ordering of data points remains constant. We refer to these as \emph{ranking regions}.
\begin{definition}[Region of a hyperplane arrangement]
\label{def:region}
Let $\mathcal{A} \subset V$ be a finite arrangement of hyperplanes in a real vector space $V$. The \emph{regions} of $\mathcal{A}$ are the connected components of 
\begin{align*}
    V \setminus \bigcup_{H \in \mathcal{A}} H. 
\end{align*}
Each region is the interior of a (possibly unbounded) polyhedral cone and is homeomorphic to 
$V$. We denote the number of such regions by $r(\mathcal{A})$.
\end{definition}
We now specialize Definition \ref{def:region} to our data valuation setting. Let $\mathcal{D} = \{z_1, \dots, z_n\}$ be a dataset and let $\psi_{\omega, \mathcal{D}}(z_i) \in \mathbb{R}^K$ denote the embedding of each point under semivalue weighting $\omega$.
For each pair $i < j$, we define
\begin{align*}
H_{ij} = \left\{ \alpha \in \mathbb{R}^K : \left\langle \alpha, \psi_{\omega, \mathcal{D}}(z_i) - \psi_{\omega, \mathcal{D}}(z_j) \right\rangle = 0 \right\}.
\end{align*}
Each set $H_{ij}$ is defined as the kernel of the linear functional $\alpha \mapsto \left\langle \alpha, \psi_{\omega, \mathcal{D}}(z_i) - \psi_{\omega, \mathcal{D}}(z_j) \right\rangle$.
Since $\psi_{\omega, \mathcal{D}}(z_i) \neq \psi_{\omega, \mathcal{D}}(z_j)$
for $i \neq j$ (unless the data points are embedded identically), the difference vector $\psi_{\omega, \mathcal{D}}(z_i) - \psi_{\omega, \mathcal{D}}(z_j) \in \mathbb{R}^K$ is nonzero. Therefore, this kernel is a linear subspace of codimension one in $\mathbb{R}^K$, which, by definition, is a hyperplane. Moreover, each $H_{ij}$ contains the origin $\alpha=0_K$; it is thus a central hyperplane by definition.

\noindent Each hyperplane $H_{ij}$ is the set of utility directions that assign equal projection scores to points $z_i$ and $z_j$. The finite arrangement $\mathcal{A}_{\omega, \mathcal{D}} = \{H_{ij} : 1 \le i < j \le n \}$ then induces a collection of regions in the sense of Definition \ref{def:region}, partitioning $\mathbb{R}^K$ into open cones such that, in each region, the relative ordering of projected values $\langle \alpha, \psi_{\omega, \mathcal{D}}(z_i)\rangle$ and $\langle \alpha, \psi_{\omega, \mathcal{D}}(z_j)\rangle$ remains the same for all $i<j$. Therefore, each region determines a unique ordering of the embedded points, corresponding to a distinct way of ranking the data points of $\mathcal{D}$ based on utility direction. To study robustness with respect to directional changes, we project this arrangement onto the unit sphere $\mathcal{S}^{K-1}$. Since all hyperplanes are central, their intersection with the sphere produces great spheres, and the resulting decomposition of $\mathcal{S}^{K-1}$ consists of spherical connected regions over which the ranking of the data points remains invariant. We refer to these regions as \emph{ranking regions}. Formally, a \emph{ranking region} is a connected component of $\mathcal{S}^{K-1} \setminus \bigcup_{i < j} \big(H_{ij} \bigcap \mathcal{S}^{K-1}\big)$.

\noindent We now study how the number of such ranking regions depends on the geometry of the spatial signature. In particular, using Proposition \ref{prop:regions-counts}, we provide an explicit count of ranking regions in two specific geometric configurations of the embedded points.

\begin{proposition}[Regions counts]
\label{prop:regions-counts}
Let $\mathcal{A} = \{H_1, \dots, H_m\}$ be an arrangement of $m$ central (i.e., origin-passing) hyperplanes in a real vector space $V$ of dimension $K$.
\begin{enumerate}
\item If no $K$ hyperplanes in the arrangement intersect in a common subspace of dimension greater than zero (in particular, not in a line), then the number of regions into which $\mathcal{A}$ partitions $V$ is
\begin{align*}
r(\mathcal{A}) = 2 \sum_{k=0}^{K-1} \binom{m-1}{j}.
\end{align*}
\item If all hyperplanes coincide (i.e., $H_1 = \cdots = H_m$), then the number of regions is:
\begin{align*}
r(\mathcal{A}) = 2.
\end{align*}
\end{enumerate}
\end{proposition}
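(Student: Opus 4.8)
The plan is to dispatch Part~2 immediately and prove Part~1 by a deletion--restriction induction. For Part~2: if $H_1=\cdots=H_m=:H$, then $V\setminus\bigcup_{H'\in\mathcal A}H'=V\setminus H$, which has exactly two connected components since $H$ is a codimension-one subspace; hence $r(\mathcal A)=2$.

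For Part~1, write $H_i=\ker v_i$ with $v_i\in V^*\setminus\{0\}$. First I would reformulate the hypothesis: since $\bigcap_{j}\ker v_{i_j}$ has dimension $K-\dim\operatorname{span}(v_{i_1},\dots,v_{i_K})$, the condition ``no $K$ of the $H_i$ meet in dimension $>0$'' is equivalent to ``any $K$ of the $v_i$ are linearly independent'', which in the regime $m\ge K$ also forces the $v_i$ to be pairwise non-proportional and any $j\le K$ of them independent. Let $r(m,K)$ denote the region count of such an arrangement in $\mathbb R^K$.

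The engine is the standard identity: adding one hyperplane $H$ back to $\mathcal A'=\mathcal A\setminus\{H\}$ yields $r(\mathcal A)=r(\mathcal A')+r(\mathcal A'|_H)$, where $\mathcal A'|_H=\{H'\cap H:H'\in\mathcal A'\}$ is the induced arrangement inside $H$. The proof is the usual bookkeeping: each region $R$ of $\mathcal A'$ is open and convex, $H$ splits it into exactly two regions of $\mathcal A$ when $H\cap R\neq\varnothing$ and leaves it intact otherwise, and $R\mapsto R\cap H$ is a bijection from the regions of $\mathcal A'$ met by $H$ onto the regions of $\mathcal A'|_H$. One then checks that genericity is inherited under restriction: on $H_m=\ker v_m\cong\mathbb R^{K-1}$ the induced covectors are $v_i|_{H_m}$ ($i<m$), and any $K-1$ of them are independent iff the corresponding $v_i$ together with $v_m$ are --- true by hypothesis. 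Hence $r(m,K)=r(m-1,K)+r(m-1,K-1)$ for $m,K\ge2$, with base values $r(1,K)=2$ and $r(m,1)=2$. Finally, a routine induction with Pascal's rule --- expand $2\sum_{k=0}^{K-1}\binom{m-2}{k}+2\sum_{k=0}^{K-2}\binom{m-2}{k}$, reindex the second sum, and use $\binom{m-2}{k}+\binom{m-2}{k-1}=\binom{m-1}{k}$ --- gives $r(m,K)=2\sum_{k=0}^{K-1}\binom{m-1}{k}$.

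The step requiring the most care is the deletion--restriction identity: verifying that each region of the restricted arrangement slices \emph{exactly one} region of $\mathcal A'$ into \emph{exactly two} pieces and that no other region changes; this is where convexity and connectedness are genuinely used. (One should also note that the literally-stated hypothesis only pins down the intended general-position condition when $m\ge K$, which holds in the application where $m=\binom n2$.) An alternative, if one is willing to invoke machinery, is Zaslavsky's theorem $r(\mathcal A)=(-1)^K\chi_{\mathcal A}(-1)$: the intersection poset here is the Boolean lattice on $\{1,\dots,m\}$ truncated at corank one together with a bottom element $\{0\}$, so $\chi_{\mathcal A}(t)=\sum_{j=0}^{K-1}(-1)^j\binom mj t^{K-j}+\mu(\hat 0,\{0\})$, and evaluating at $t=-1$ (using $\sum_{j=0}^{K-1}(-1)^j\binom mj=(-1)^{K-1}\binom{m-1}{K-1}$) again yields $2\sum_{k=0}^{K-1}\binom{m-1}{k}$.
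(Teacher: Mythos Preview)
Your proof is correct and takes a genuinely different route from the paper. The paper fixes one hyperplane $H\in\mathcal A$, slides two affine copies $H^+,H^-$ parallel to $H$ on either side of the origin, observes that the remaining $m-1$ central hyperplanes trace an affine arrangement of $m-1$ hyperplanes in general position inside $H^+\cong\mathbb R^{K-1}$, and then \emph{cites} the known affine-general-position count $\sum_{j=0}^{K-1}\binom{m-1}{j}$ (Zaslavsky, via Stanley's notes) for the number of regions on one side of $H$; symmetry gives the factor of two. Your argument instead runs the deletion--restriction recursion $r(m,K)=r(m-1,K)+r(m-1,K-1)$ all the way down, verifying inheritance of the general-position hypothesis under restriction and closing with Pascal's rule. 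The trade-off is clear: the paper's proof is shorter on the page but outsources the combinatorics to a reference, whereas yours is self-contained and makes explicit exactly where convexity and the linear-algebraic reformulation of the hypothesis are used. Your side remark that the stated hypothesis only forces full general position when $m\ge K$ is also a useful caveat the paper does not spell out.
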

\begin{proof}
Let $\mathcal{A}=\left\{H_1, \ldots, H_m\right\}$ be an arrangement of $m$ hyperplanes in a real vector space $V$ of dimension $K$. 
\begin{enumerate}
    \item Suppose no $K$ hyperplanes in the arrangement intersect in a common subspace of dimension greater than zero (in particular, not in a line).

    \noindent Choose any hyperplane $H \in \mathcal{A}$, and define two affine hyperplanes $H^{+}$and $H^{-}$, parallel to $H$ and on opposite sides of the origin, such that the origin lies strictly between them.

    \noindent Each of the remaining $m-1$ hyperplanes of $\mathcal{A}$ intersects $H^{+}$ in a hyperplane of dimension $K-2$, and these intersections form an arrangement of $m-1$ hyperplanes in $H^{+}$ (which is a space of dimension $K-1$). By Proposition 2.4 in \cite{hyperplanearrangements} (derived from Zaslavsky's work \cite{zaslavsky1975}), the number of regions induced by this non-central\footnote{Since the $m-1$ hyperplanes do not all pass through a same point on $\mathrm{H}^{+}$.} arrangement in $\mathrm{H}^{+}$ is:
    \begin{align*}
    \sum_{j=0}^{n-1}\binom{m-1}{j}
    \end{align*}
    These regions correspond exactly to the regions of $V \backslash \bigcup_{H \in \mathcal{A}} H$ that lie entirely on one side of $H$. By symmetry, the same number of regions lies on the opposite side (on $\mathrm{H}^{-}$). Therefore, the total number of regions for the whole arrangement is:
    \begin{align*}
    r(\mathcal{A})=2 \sum_{j=0}^{n-1}\binom{m-1}{j}
    \end{align*}
    \item Suppose that all hyperplanes in the arrangement coincide, i.e., $H_1 = \cdots = H_m = H$ for some hyperplane $H \subset V$. Then
    \begin{align*}
    \bigcup_{H \in \mathcal{A}} H = H,
    \end{align*}
    and the complement $V \setminus \bigcup_{H \in \mathcal{A}} H$ consists of exactly two connected open sets: the two half-spaces determined by $H$. Therefore, the number of regions is
    \begin{align*}
        r(\mathcal{A}) = 2.
    \end{align*}

\end{enumerate}
\end{proof}
We now apply Proposition \ref{prop:regions-counts} to the arrangement $\mathcal{A}_{\omega, \mathcal{D}}$ formed by the hyperplanes $H_{ij}$
defined from pairwise differences of embedded points in the spatial signature $S_{\omega, \mathcal{D}}$. Since each $H_{ij}$ is a central hyperplane in $\mathbb{R}^K$, the arrangement $\mathcal{A}_{\omega, \mathcal{D}}$ partitions the space into open polyhedral cones, whose connected components are the regions of the arrangement. Each of these cones intersects the unit sphere $\mathcal{S}^{K-1}$ in a unique open subset, yielding a spherical partition of $\mathcal{S}^{K-1}$. Therefore, the number of ranking regions on $\mathcal{S}^{K-1}$
is equal to the number of regions of the central hyperplane arrangement in $\mathbb{R}^K$, and can be computed directly using Proposition \ref{prop:regions-counts}.

\noindent Corollary \ref{cor:ranking-regions-counts} provides the number of ranking regions for two specific geometric configurations of the spatial signature. 

\begin{corollary}[Ranking regions counts]
\label{cor:ranking-regions-counts}
Let $\mathcal{D} = \{z_1, \dots, z_n\}$ be a dataset and let $\psi_{\omega, \mathcal{D}}(z_i) \in \mathbb{R}^K$ denote the spatial signature of point $z_i$
under semivalue weighting $\omega$. For each pair $i < j$, define the hyperplane
\begin{align*}
H_{ij} = \left\{ \alpha \in \mathbb{R}^K : \left\langle \alpha, \psi_{\omega, \mathcal{D}}(z_i) - \psi_{\omega, \mathcal{D}}(z_j) \right\rangle = 0 \right\}.
\end{align*}
Since there are $\binom{n}{2}$ pairs $(i, j)$, the arrangement $\mathcal{A}_{\omega, \mathcal{D}} = \{ H_{ij} : 1 \le i < j \le n \}$
consists of $N = \binom{n}{2}$ central hyperplanes in $\mathbb{R}^K$. Let $r(\mathcal{A}_{\omega, \mathcal{D}})$ denote the number of connected regions in the complement of this arrangement. Then
\begin{enumerate}
\item If no $K$ hyperplanes $H_{ij}$ intersect in a common subspace of dimension greater than zero, the number of ranking regions is
\begin{align*}
r(\mathcal{A}_{\omega, \mathcal{D}}) = 2 \sum_{k = 0}^{K - 1} \binom{N - 1}{k}, \quad \text{where } N = \binom{n}{2}.
\end{align*}
\item If all embedded points $\psi_{\omega, \mathcal{D}}(z_i)$ lie on a line in $\mathbb{R}^K$, then all hyperplanes $H_{ij}$ coincide and
\begin{align*}
r(\mathcal{A}_{\omega, \mathcal{D}}) = 2.
\end{align*}
\end{enumerate}
\end{corollary}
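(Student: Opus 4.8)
The plan is to reduce the statement to Proposition~\ref{prop:regions-counts}, which already carries out the combinatorial work; what then remains is to match hypotheses and to check the two geometric configurations. First I would recall the reduction set up just before the corollary: since every $H_{ij}$ is a central hyperplane in $\mathbb{R}^K$, each connected region of $\mathbb{R}^K \setminus \bigcup_{i<j} H_{ij}$ is an open polyhedral cone meeting $\mathcal{S}^{K-1}$ in exactly one nonempty connected open set, and this gives a bijection between the regions of $\mathcal{A}_{\omega,\mathcal{D}}$ and the ranking regions on the sphere. Hence it suffices to count $r(\mathcal{A}_{\omega,\mathcal{D}})$ for a central arrangement of $N = \binom{n}{2}$ hyperplanes in $\mathbb{R}^K$. (Implicitly one assumes the embeddings $\psi_{\omega,\mathcal{D}}(z_i)$ are pairwise distinct, so that each difference vector $\psi_{\omega,\mathcal{D}}(z_i) - \psi_{\omega,\mathcal{D}}(z_j)$ is nonzero and $H_{ij}$ is genuinely a hyperplane.)

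For part~1, the hypothesis that no $K$ of the hyperplanes $H_{ij}$ share a common subspace of dimension greater than zero is exactly the genericity condition appearing in the first item of Proposition~\ref{prop:regions-counts}, applied with $m = N$. Invoking that result verbatim yields $r(\mathcal{A}_{\omega,\mathcal{D}}) = 2\sum_{k=0}^{K-1}\binom{N-1}{k}$, which is the claimed count. For part~2, I would note that if all $\psi_{\omega,\mathcal{D}}(z_i)$ lie on a common line with direction vector $v \neq 0$, then every difference $\psi_{\omega,\mathcal{D}}(z_i) - \psi_{\omega,\mathcal{D}}(z_j)$ is a nonzero scalar multiple of $v$ — the line's offset cancels in the difference, so the line need not pass through the origin — and therefore $H_{ij} = \{\alpha \in \mathbb{R}^K : \langle \alpha, v\rangle = 0\}$ for every pair. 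All hyperplanes thus coincide, and the second item of Proposition~\ref{prop:regions-counts} gives $r(\mathcal{A}_{\omega,\mathcal{D}}) = 2$.

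The only points requiring care are bookkeeping ones: confirming that the corollary's genericity hypothesis is literally the one used in Proposition~\ref{prop:regions-counts}, and making sure the line-case argument does not tacitly assume the line is central (it does not, since only the direction of the difference vectors matters). There is no genuine obstacle here: the substantive content lives entirely in Proposition~\ref{prop:regions-counts} and in the sphere/cone correspondence, both of which are available to us, so the corollary is a direct specialization.
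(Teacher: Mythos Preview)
Your proposal is correct and follows the same approach as the paper: the corollary is obtained by directly specializing Proposition~\ref{prop:regions-counts} with $m = N = \binom{n}{2}$, after invoking the cone/sphere correspondence to equate region counts in $\mathbb{R}^K$ with ranking regions on $\mathcal{S}^{K-1}$. Your additional remarks --- that the embeddings are implicitly assumed pairwise distinct, and that in part~2 the common line need not pass through the origin since only the direction of the difference vectors matters --- are careful observations that the paper leaves implicit.
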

Figure \ref{fig:ranking-regions} illustrates these two specific geometric configurations on the circle $\mathcal{S}^1$ (corresponding to the case $K=2$). In both cases, the observed number of ranking regions coincides with the counts given by Corollary \ref{cor:ranking-regions-counts}.
\subsection{Link between the robustness metric \texorpdfstring{$R_p$}{Rp} and the Kendall rank correlation}
\label{subsec:ties-kendall}
If there are no tied ranks, the Kendall rank correlation between two orderings of $n$ points is defined as $\tau = 1 - \frac{2D}{N}$, where $D$ is the number of discordant pairs and $N=\binom{n}{2}$ is the total number of pairs. Since crossing one ranking region swaps exactly one pair, each such swap increases $D$ by one and thus decreases $\tau$ by $2/N$. Consequently, $p$ swaps lower the correlation from 1 to $ 1-\frac{2p}{N}$. Therefore, $R_p$ captures how far in expectation one must move from a utility direction before the Kendall rank correlation degrades by at least $2p/N$.
 
\noindent However, this statement only holds in the setting where no ties occur. In practical scenarios involving ties, the degradation in $\tau$ can be either smaller or larger than what $R_p$ would suggest. The purpose of this subsection is to explain why \emph{worse-than-expected degradation} is possible, which is the main risk when interpreting $R_p$ through the lens of Kendall correlation in practice.

\noindent The Kendall rank correlation between rankings $A$ and $B$ is defined as
\begin{align*}
\tau = \frac{c - d}
{\sqrt{\bigl(N - t_A\bigr)\,\bigl(N - t_B\bigr)}},
\end{align*}
where $c$ is the number of concordant pairs, $d$ is the number of discordant pairs ($c$ and $d$ count only untied pairs), $N = \binom{n}{2}$ is the total number of pairs, $t_A$ (resp. $t_B$) is the number of tied pairs in ranking $A$ (resp. $B$).

\noindent Performing $p$ pairwise swaps among tied items can amplify the degradation of $\tau$ beyond the idealized $-2p/N$ amount (derived under the no-ties assumption) due to two effects: 
\begin{itemize}
    \item[--] Resolving ties i.e., decreasing $t_A$ or $t_B$, increases the factors $N-t_A$ or $N-t_B$ and thus the denominator. For a fixed numerator $c-d$, this directly reduces the magnitude of $\tau_b$. Critically, even as $c-d$ decreases (due to increased discordance), the growing denominator further exacerbates the decline. 
    \item[--] Swapping two items within a block of $k$ tied points can order up to $\binom{k}{2}$ formerly tied pairs at once. If these newly ordered pairs are discordant, a single swap increases $d$ by up to $\binom{k}{2}$, rather than just 1.  
\end{itemize}
Consequently, when many tied groups exist, one might observe after $p$ swaps,
\begin{align*}
\Delta\tau_b < -\frac{2p}{N},
\end{align*}
i.e., a larger drop in rank correlation than in the no‐ties case. 
\subsection{Closed-form for \texorpdfstring{$\mathbb{E}_{\bar{\alpha}}[\rho_p(\bar{\alpha})]$}{average measure}}
\label{subsec:closed-form-for-average-distance}
This section provides the derivation of a closed-form expression for $\mathbb{E}[\rho_p]$, introduced in Section \ref{sec:methodology}, which quantifies how far, on average, one must rotate the utility direction on the sphere before $p$ pairwise ranking swaps occur. In Section \ref{sec:methodology}, we describe how this quantity captures the local stability of the ranking induced by the spatial signature. Here, we formally compute this quantity in the case where $K=2$ (i.e., in the case where the utilities we consider can be written as a linear combination of two base-utilities). We also show that the closed-form expression derived in the $K=2$ case can be computed in $\mathcal{O}(n^2\log n)$ time. Finally, we briefly discuss the higher-dimensional case $K>2$, for which no closed-form is available, and describe how $\mathbb{E}[\rho_p]$ can be approximated via Monte Carlo sampling.

\noindent Recall that $\rho_p(\bar{\alpha})$ measures the minimal geodesic distance one must rotate a utility direction $\bar{\alpha} \in \mathcal{S}^{K-1}$ before the ranking of points in $\mathcal{D} = \{z_i\}_{i \in [n]}$ changes by $p$ pairwise swaps. Each pair of points $(z_i, z_j)$ defines \emph{cuts} on $\mathcal{S}^{K-1}$ which are utility directions along which the scores of $z_i$ and $z_j$ are equal. These cuts partition $\mathcal{S}^{K-1}$ into (ranking) regions where the ranking of points remains fixed. In what follows, we focus on the case $K = 2$, where utility directions lie on the unit circle $\mathcal{S}^1$, and $\rho_p(\bar{\alpha})$ can be treated as a function of the angle associated with $\bar{\alpha} \in \mathcal{S}^1$.

\noindent We parametrize the unit circle $\mathcal{S}^1$
by the angle $\varphi \in [0, 2\pi[$, writing $\bar{\alpha}(\varphi) = (\cos{\varphi}, \sin{\varphi}) \in \mathcal{S}^1$.
Since $\rho_p$ depends only on this angle, we abbreviate $\rho_p\bigl(\bar{\alpha}(\varphi)\bigr)$
by $\rho_p(\varphi)$. Equivalently,
\begin{align*}
\mathbb{E}_{\bar{\alpha} \sim \mathrm{Unif}(\mathcal{S}^1)}[\rho_p(\bar{\alpha})] = \mathbb{E}_{\varphi \sim \mathrm{Unif}[0, 2\pi[}[\rho_p(\varphi)] = \frac{1}{2\pi} \int_0^{2\pi} \rho_p(\varphi) d\varphi.
\end{align*}
\noindent Recall from Section \ref{sec:methodology} that the $2N$ emph{cut} angles $\theta_1 \le \theta_2 \le \cdots < \theta_{2N} < 2\pi$
partition the interval $[0, 2\pi[$
into arcs of lengths $\lambda_k = \theta_{k+1} - \theta_k$
(with $\theta_{2N+1} = \theta_1 + 2\pi$). Within each arc, the ranking remains fixed, and crossing into the next arc incurs exactly one additional swap.

\noindent For $\varphi \in (\theta_k, \theta_{k+1})$, the function $\rho_p(\varphi)$ equals the minimum of the clockwise and counterclockwise distances to the
$p$-th next cut:
\begin{align*}
\rho_p(\varphi) = \min\bigl\{S_k^+(p) - (\varphi - \theta_k), S_k^-(p) + (\varphi - \theta_k)\bigr\},
\end{align*}
where the quantities $S_k^+(p)$ and $S_k^-(p)$, recalled from Section \ref{sec:methodology}, are defined as
\begin{align*}
S_k^+(p) = \sum_{i=1}^p \lambda_{(k+i)\bmod 2N}, \quad S_k^-(p) = \sum_{i=1}^p \lambda_{(k-i)\bmod 2N}.
\end{align*}

\noindent Hence, the average value of $\rho_p$
can be written as
\begin{align}
\label{eq:average-rho-p-expression}
\mathbb{E}_{\bar{\alpha}}[\rho_p(\bar{\alpha})] = \frac{1}{2\pi} \int_{0}^{2\pi} \rho_p(\varphi)  d\varphi = \frac{1}{2\pi} \sum_{k=1}^{2N} \int_{\theta_k}^{\theta_{k+1}} \rho_p(\varphi) d\varphi = \frac{1}{2\pi} \sum_{k=1}^{2N} \int_{0}^{\lambda_k} \min\{S_k^+(p) - t, S_k^-(p) + t\} dt,
\end{align}
where we set $t = \varphi - \theta_k \in [0, \lambda_k]$ as a local coordinate that measures the angular distance from the left endpoint of the $k$-th arc. 

\noindent The expression inside the integral reflects the shortest of two angular paths along the circle from the start of the $k$-th arc: one going clockwise (of length $S_k^+(p) - t$) and one counterclockwise (of length $S_k^-(p) + t$). These two expressions intersect at 
\begin{align*}
t_k^* = \frac{S_k^+(p) - S_k^-(p)}{2}.
\end{align*} Intuitively, 
\begin{itemize}
\item If $t_k^*\le 0$, even at $t=0$, the clockwise path is already shorter, so $\rho_p(t)=S_k^+(p)-t$ for all $t \in [0, \lambda_k]$.
\item If $t_k^*\ge \lambda_k$, the counterclockwise path is shorter throughout the entire arc, $\rho_p(t)=S_k^-(p)+t$ for all $t \in [0, \lambda_k]$.
\item If $0 < t_k^* < \lambda_k$, then for $t < t_k^*$ the counterclockwise path is shorter, and for $t > t_k^*$, the clockwise path is shorter.
\end{itemize}
We therefore split $\int_0^{\lambda_k}\rho_p(t) dt = \int_0^{\lambda_k}\min\{S_k^+(p)-t, S_k^-(p)+t\}dt$ into the three cases:
\begin{enumerate}
    \item If $t_k^{\star} \leq 0$, we have
    \begin{align*}
    \int_0^{\lambda_k} \min\{S_k^{+}(p) - t, S_k^{-}(p) + t \}dt = \int_0^{\lambda_k} (S_k^{+}(p) - t)dt = S_k^{+}(p)\lambda_k - \frac{1}{2} \lambda_k^2
    \end{align*}
    \item If $t_k^{\star} \geq \lambda_k$, we have
    \begin{align*}
    \int_0^{\lambda_k} \min\{S_k^{+}(p) - t, S_k^{-}(p) + t \}dt = \int_0^{\lambda_k} (S_k^{-}(p) + t)dt = S_k^{-}(p)\lambda_k + \frac{1}{2} \lambda_k^2
    \end{align*}
    \item If $0 < t_k^{\star} < \lambda_k$, we have 
    \begin{align*}
    \int_0^{\lambda_k} \min\{S_k^{+}(p) -t, S_k^{-}(p) + t\}dt &= \int_0^{t_k^{\star}} (S_k^{-}(p)+t)dt + \int_{t_k^{\star}}^{\lambda_k} (S_k^{+}(p) - t)dt \\
        &= S_k^{-}(p)t_k^{\star} + \frac{1}{2}(t_k^{\star})^2 + S_k^{+}(p)(\lambda_k - t_k^{\star}) \\
        &\quad - \frac{1}{2}(\lambda_k^2 - (t_k^{\star})^2).
    \end{align*}
\end{enumerate}
Putting these three cases together and summing over $k$ yields a piecewise-defined expression for the integral on each arc. Precisely, plugging these into the expression for $\mathbb{E}_{\bar{\alpha}}[\rho_p(\bar{\alpha})]$ in Eq. \eqref{eq:average-rho-p-expression}, we finally obtain the closed-form
\begin{align*}
\mathbb{E}_{\bar{\alpha}}[\rho_p(\bar{\alpha})] = \frac{1}{2\pi} \sum_{k=1}^{2N} I_k,
\end{align*}
where $I_k$ denotes the value of the integral over the $k$-th arc, defined as
\begin{align*}
    I_k:=\int_0^{\lambda_k} \min\{S_k^{+}(p) - t, S_k^{-}(p) + t\}dt,
\end{align*}
and can be computed using the following case distinction
\begin{align*}
I_k = \begin{cases} S_k^{+}(p)\lambda_k - \frac{1}{2} \lambda_k^2 & \text{if } t_k^* \le 0, \\ S_k^{-}(p)\lambda_k + \frac{1}{2} \lambda_k^2 & \text{if } t_k^* \ge \lambda_k, \\ S_k^{-}(p)t_k^* + \frac{1}{2}(t_k^*)^2 + S_k^{+}(p)(\lambda_k - t_k^*) - \frac{1}{2}(\lambda_k^2 - (t_k^*)^2) & \text{if } 0 < t_k^* < \lambda_k, \end{cases}
\end{align*}
with $t_k^{*} = \frac{S_k^{+}(p) - S_k^{-}(p)}{2}$ as previously defined. 
\begin{remark}[Computational cost] 
The closed-form expression for $\mathbb{E}[\rho_p]$ can be computed in $\mathcal{O}(n^2 \log n)$ time. First, computing the $2N = \mathcal{O}(n^2)$ cut angles defined by all unordered pairs of points $(z_i, z_j)$ requires $\mathcal{O}(n^2)$ time, since each involves a simple trigonometric operation in $\mathbb{R}^2$. Sorting these angles to define the arc intervals costs $\mathcal{O}(n^2 \log n)$. Once sorted, the distances $S_k^{+}(p)$ and $S_k^{-}(p)$ to the $p$-th next and previous cuts can be computed efficiently for all $k$ using sliding windows indexing in $\mathcal{O}(n^2)$ time. The final step, i.e., evaluating the integral over each of the $2N$ arcs, also takes $\mathcal{O}(n^2)$ time. Thus, the total computational cost is $\mathcal{O}(n^2 \log n)$, dominated by the sorting step.
\end{remark}

\begin{remark}[Case $K>2$]
The above closed‐form derivation relies on the fact that utilities correspond to angles on $\mathcal{S}^1$ (i.e., $K=2$). When $K>2$, utilities lie on the higher‐dimensional sphere $\mathcal{S}^{K-1}$. In that setting, one can still define $\rho_p(\bar{\alpha})$ as the minimal geodesic distance on $\mathcal{S}^{K-1}$ to incur $p$ swaps, but the integral $\mathbb{E}_{\bar{\alpha}\sim\mathrm{Unif}(\mathcal{S}^{K-1})}\bigl[\rho_p(\bar{\alpha})\bigr]$
admits no simple closed‐form expression. In practice, one must approximate it numerically by Monte Carlo sampling. Specifically, for any unit vector $\bar{\alpha} \in \mathcal{S}^{K-1}$, each pair $(i,j)$ defines a \emph{cut} great–sphere
\begin{align*}
H_{ij}=\{\beta\in \mathcal{S}^{K-1} : \langle \beta, v_{ij} \rangle=0\} \quad v_{ij}=\psi(z_i)-\psi(z_j).
\end{align*}
The shortest geodesic distance from $\bar{\alpha}$ to that cut is given in closed form by
\begin{align*}
d_{ij}(\bar\alpha) =\arcsin{\bigl|\langle\bar{\alpha},v_{ij}/\|v_{ij}\|\rangle\bigr|}.
\end{align*}
Thus, by getting all $N=\binom{n}{2}$ distances $\{d_{ij}\}$, sorting them, and picking the $p$-th smallest, we obtain $\rho_p(\bar{\alpha})$. Repeating for many independent $\bar{\alpha}\sim\mathrm{Unif}(\mathcal{S}^{K-1})$ gives a Monte Carlo estimate of the average.

\noindent Let $\hat\mu_m:=\frac{1}{m}\sum_{\ell=1}^m \rho_p\big(\bar\alpha^{(\ell)}\big)$ with i.i.d. draws $\bar\alpha^{(\ell)}\sim\mathrm{Unif}(\mathcal S^{K-1})$, and let $\mu:=\E_{\bar\alpha}[\rho_p(\bar\alpha)]$. Since $0\le \rho_p(\bar\alpha)\le \pi/2$, Hoeffding’s inequality gives, for any $\delta\in(0,1)$,
\begin{align*}
\mathbb{P}\left(\big|\hat\mu_m-\mu\big|\ \ge\ \tfrac{\pi}{2}\sqrt{\tfrac{\log(2/\delta)}{2m}}\right)\ \le\ \delta.
\end{align*}
Equivalently, to guarantee $\big|\hat\mu_m-\mu\big|\le \varepsilon$ with probability at least $1-\delta$, it suffices that
\begin{align*}
m \ge\ \frac{\pi^2}{8\,\varepsilon^2}\log\frac{2}{\delta}.
\end{align*}
\end{remark}
\subsection{Maximum average \texorpdfstring{$p$}{p}-swaps distance occurs under collinearity of the spatial signature}
\label{subsec:maximum-distance-collinearity}
This section provides the theoretical justification for the claim made in Section \ref{sec:methodology} that the average distance $\mathbb{E}_{\bar{\alpha}}[\rho_p(\bar{\alpha})]$ is maximized when the spatial signature is collinear, and equals $\pi/4$ in this case. 

\noindent Recall that the spatial signature is the set of embedded vectors
\begin{align*}
\mathcal{S}_{\omega, \mathcal{D}} = \{\psi_{\omega, \mathcal{D}}(z) \in \mathbb{R}^K : z \in \mathcal{D} \},
\end{align*}
where $\psi_{\omega, \mathcal{D}}(z)$ reflects the contribution of each data point $z \in \mathcal{D}$ to a family of $K$ base utilities, weighted by the semivalue coefficients $\omega$. This embedding allows utility directions $\bar{\alpha} \in \mathcal{S}^{K-1}$ to induce rankings via projection. The quantity $\rho_p(\bar{\alpha})$ measures the minimal geodesic distance on the sphere $\mathcal{S}^{K-1}$ one must rotate $\bar{\alpha}$ before the ranking changes by $p$ pairwise swaps. We focus here on the case $K = 2$, where utility directions lie on the unit circle $\mathcal{S}^1$, and show that the maximum of $\mathbb{E}[\rho_p]$ is achieved when all embedded points lie on a common line through the origin. This derivation provides an upper bound for $\mathbb{E}[\rho_p]$ and motivates the normalization in the robustness metric $R_p$ (see Definition \ref{def:robustness-metric}).

\noindent Each pair of points $(z_i,z_j)$ induces a \emph{cut} on the circle $\mathcal{S}^1$, namely the two antipodal points where $\langle \alpha,\psi_{\omega, \mathcal{D}}(z_i)-\psi_{\omega, \mathcal{D}}(z_j)\rangle=0$. When all embedded points $\psi_{\omega, \mathcal{D}}(z_i)$ lie on a single line through the origin, Corollary \ref{cor:ranking-regions-counts} states that there is exactly one cut (of multiplicity $N=\binom{n}{2}$) which splits $\mathcal{S}^1$ into two open arcs, each of length $\pi$. Within either arc, no swaps occur until one crosses that cut, at which point all $N$ pairs flip simultaneously. Concretely, for any direction angle $\theta \in [0, \pi[$, $\rho_p(\theta)$ corresponds to the shortest angular distance to this cut, either clockwise or counterclockwise, and is thus given by
\begin{align*}
\rho_p(\theta) = \min\{\theta,\pi-\theta\},
\end{align*}
Hence,
\begin{align*}
\mathbb{E}_{\bar{\alpha}}[\rho_p(\bar{\alpha})]
= \frac{1}{2\pi} \int_0^{2\pi} \rho_p(\varphi) d\varphi= \frac{1}{2\pi}\cdot 2\int_{0}^{\pi}\min\{\theta, \pi-\theta\}d\theta
=\frac{\pi}{4}.
\end{align*}
Now, any deviation from perfect collinearity introduces distinct cuts, which can only further subdivide those two $\pi$-length arcs into shorter pieces. Shorter maximal arc‐lengths imply a smaller average distance to the nearest swap, so for every spatial signature and every $1\le p < N$,
\begin{align*}
\mathbb{E}[\rho_p]\le\frac{\pi}{4},
\end{align*}
with equality if and only if the signature is exactly collinear.

\begin{remark}[Case $K > 2$] For $K>2$, perfect collinearity of the spatial signature still maximizes the average $p$‐swap distance, but the value $\displaystyle \max_{\mathcal{S}_{\omega, \mathcal{D}}}\mathbb{E}_{\bar\alpha}[\rho_p(\bar\alpha)]$ must be evaluated numerically since for $K > 2$ the distribution of angular distances from a uniformly random point on the sphere to a fixed great sub-sphere no longer admits a simple elementary integral like for $K=2$.  
\end{remark}

\begin{remark}[Lower bounds for \texorpdfstring{$\mathbb{E}_{\bar{\alpha}}[\rho_p(\bar{\alpha})]$}{E[rho_p]}]
Trivially, since $\rho_p(\bar{\alpha})\ge0$ for all $\bar{\alpha}$,
\begin{align*}
\mathbb{E}_{\bar{\alpha}}[\rho_p(\bar{\alpha})] \ge 0, \quad\forall p < \binom{n}{2}.
\end{align*}
If instead we assume the spatial signature to be such that all $N =\binom{n}{2}$ cuts on $\mathcal{S}^1$ are distinct, then Proposition \ref{prop:regions-counts} states that there are exactly $2N$ positive‐length arcs of total length $2\pi$. In this case, it is easy to see that considering all ways to choose $\{\lambda_k\}$ summing to $2\pi$, the configuration $\lambda_k=\pi/N$ for all $k$ minimizes the average $\rho_p$. Concretely, for $\lambda_k=\pi/N$ we find
\begin{align*}
\mathbb{E}[\rho_p]
=\frac{1}{2\pi}\sum_{k=1}^{2N}\int_{0}^{\pi/N}\Bigl(p\frac\pi N - t\Bigr)dt =(p-\tfrac{1}{2})\frac{\pi}{N}.
\end{align*}
Hence, under the distinct cuts assumption,
\begin{align*}
\mathbb{E}[\rho_p(\varphi)]\ge 
\Bigl(p-\tfrac12\Bigr)\frac{\pi}{N},
\end{align*}
with equality exactly when the $2N$ cuts are perfectly equally spaced. 

\noindent In this special setting, where all $\binom{n}{2}$ cuts on $\mathcal{S}^1$ are distinct, one could alternatively define the robustness metric as
\begin{align*}
R_p(S_{\omega, \mathcal{D}}) = \frac{\mathbb{E}_{\bar{\alpha}}[\rho_p(\bar{\alpha})] - (p-\frac{1}{2})\frac{\pi}{N}}{\pi/4 - (p-\frac{1}{2})\frac{\pi}{N}} \in [0,1].
\end{align*}
However, since in practice we cannot detect a priori that this condition on cuts holds, we instead use the general robustness metric $R_p$
as given in Definition \ref{def:robustness-metric}.
\end{remark}

\subsection{Proof of Proposition \ref{prop:correlation-decomposition}}
\label{subsec:insights-derivation-details}
In this section, we provide the detailed proof of Proposition \ref{prop:correlation-decomposition}. 
\noindent Recall that for any utility $u$,
\begin{align*}
\phi(z_i;\omega,u)
=\sum_{j=1}^n \omega_j\;\Delta_j\bigl(z_i,u\bigr),
\end{align*}
where 
$\Delta_j(z_i,u)$ is the marginal contribution of $z_i$ with respect to coalitions of size $j-1$.
\noindent By definition of the covariance,
\begin{align*}
\operatorname{Cov}\bigl(\phi(\lambda),\phi(\gamma)\bigr)
=\frac1n\sum_{i=1}^n
\bigl(\phi(z_i;\omega,\lambda)-\bar\phi(\lambda)\bigr)
\bigl(\phi(z_i;\omega,\gamma)-\bar\phi(\gamma)\bigr),
\end{align*}
where $\bar\phi(\cdot)$ denotes the mean over $i$. Using billinearity of covariance, we get
\begin{align*}
\operatorname{Cov}\bigl(\phi(\lambda),\phi(\gamma)\bigr)
=\sum_{j=1}^n\sum_{k=1}^n \omega_j\omega_k\,
\operatorname{Cov}\bigl(\Delta_j(\lambda),\,\Delta_k(\gamma)\bigr).
\end{align*}
where 
\begin{align*}
\Delta_j(\lambda) = \Bigl(\Delta_j(z_1; \omega, \lambda), \dots, \Delta_j(z_n; \omega, \lambda)\Bigr) \quad \text{and} \quad \Delta_j(\gamma) = \Bigl(\Delta_j(z_1; \omega, \gamma), \dots, \Delta_j(z_n; \omega, \gamma)\Bigr)
\end{align*}

\noindent Under the assumption
$\operatorname{Cov}(\Delta_j(\lambda),\Delta_k(\gamma))=0$
for all $j\neq k$, only the $j=k$ terms remain, giving
\begin{align*}
\operatorname{Cov}\bigl(\phi(\lambda),\phi(\gamma)\bigr)
=\sum_{j=1}^n \omega_j^2
\operatorname{Cov}\bigl(\Delta_j(\lambda),\Delta_j(\gamma)\bigr).
\end{align*}
Similarly,
\begin{align*}
\operatorname{Var}\bigl(\phi(\lambda)\bigr)
=\sum_{j=1}^n \omega_j^2\operatorname{Var}\bigl(\Delta_j(\lambda)\bigr),
\quad
\operatorname{Var}\bigl(\phi(\gamma)\bigr)
=\sum_{j=1}^n \omega_j^2\operatorname{Var}\bigl(\Delta_j(\gamma)\bigr).
\end{align*}

\noindent By the definition of Pearson correlation,
\begin{align*}
\operatorname{Corr}\bigl(\phi(\lambda),\phi(\gamma)\bigr)
=\frac{\displaystyle\sum_{j=1}^n \omega_j^2\,\operatorname{Cov}(\Delta_j(\lambda),\Delta_j(\gamma))}
{\sqrt{\displaystyle\sum_{j=1}^n \omega_j^2\,\operatorname{Var}(\Delta_j(\lambda))} \sqrt{\displaystyle\sum_{j=1}^n \omega_j^2\,\operatorname{Var}(\Delta_j(\gamma))}}.
\end{align*}
with $\operatorname{Cov}(\Delta_j(\lambda), \Delta_j(\gamma)) = \operatorname{Corr}(\Delta_j(\lambda), \Delta_j(\gamma))\sqrt{\operatorname{Var}(\Delta_j(\lambda))\operatorname{Var}(\Delta_j(\gamma))}$.
Then, the correlation becomes
\begin{align*}
\operatorname{Corr}\bigl(\phi(\lambda),\phi(\gamma)\bigr)
=\displaystyle\sum_{j=1}^n \omega_j^2 \frac{\displaystyle\operatorname{Corr}(\Delta_j(\lambda), \Delta_j(\gamma))\sqrt{\operatorname{Var}(\Delta_j(\lambda))\operatorname{Var}(\Delta_j(\gamma))}}{\sqrt{\displaystyle\sum_{j=1}^n \omega_j^2\,\operatorname{Var}(\Delta_j(\lambda))} \sqrt{\displaystyle\sum_{j=1}^n \omega_j^2\,\operatorname{Var}(\Delta_j(\gamma))}}
\end{align*}
\noindent Each term $r_j  := \operatorname{Corr}\bigl(\Delta_j(\lambda),\Delta_j(\gamma)\bigr) \sqrt{\operatorname{Var}\bigl(\Delta_j(\lambda)\bigr)\operatorname{Var}\bigl(\Delta_j(\gamma)\bigr)}$ can be understood as the \emph{effective alignment} of marginal contributions at coalition size $j-1$ across the two utilities $\lambda$ and $\gamma$. Specifically, $\operatorname{Corr}(\Delta_j(\lambda),\Delta_j(\gamma))$ measures how similarly data points’ marginal contributions at size $j-1$ move under $\lambda$ compared to under $\gamma$ and $\sqrt{\operatorname{Var}(\Delta_j(\lambda))\operatorname{Var}(\Delta_j(\gamma))}$ down-weights sizes $j-1$ where marginal contributions are nearly constant (and thus uninformative) for either utility.
\subsection{\texorpdfstring{Link between the robustness metric $R_p$ and top-$k$ stability metrics (overlap@$k$ and Jaccard@$k$)}{Link between the robustness metric Rp and top-k stability metrics
(overlap@k and Jaccard@k)}}
\label{subsec:link-robustness-top-k-metrics}
In this section, we give an analytical link between $p$ and top-$k$ overlap/Jaccard stability metrics which definitions are provided in Appendix \ref{subsec:top-k-metrics-definitions}. 
\\ \\
Let $\mathcal{D}$ be a training set of size $n$. Let $u$ and $u^{\prime}$ be two utilities (linear combinations of base utilities) that induce rankings $\pi$ and $\pi^{\prime}$ on $\mathcal{D}$, and assume $\pi$ and $\pi^{\prime}$ differ by at most $p$ swaps. Let
\begin{align*}
S^{(k)}_u := S^{(k)}_{\phi(u,\omega)},\qquad S^{(k)}_{u^{\prime}} := S^{(k)}_{\phi(u^{\prime},\omega)}
\end{align*}
denote the top-$k$ sets under $u$ and $u^{\prime}$ respectively. We then recall from Appendix \ref{subsec:top-k-metrics-definitions} the definitions of top-$k$ overlap@$k$ and Jaccard@$k$:
\begin{itemize}
\item[—] Top-$k$ overlap@$k$:
\begin{align*}
\mathrm{Overlap}@k(u,u^{\prime}):= \frac{\lvert S^{(k)}_u \cap S^{(k)}_{u^{\prime}}\rvert}{k} \in [0,1].
\end{align*}
\item[—] Jaccard@$k$:
\begin{align*}
\mathrm{Jaccard}@k(u,u^{\prime}):= \frac{\lvert S^{(k)}_u \cap S^{(k)}_{u^{\prime}\rvert}}{\lvert S^{(k)}_u \cup S^{(k)}_{u^{\prime}}\rvert} \in [0,1].	
\end{align*}	
Since both sets have cardinality $k$, this simplifies to
\begin{align*}	\mathrm{Jaccard}@k(u,u^{\prime}) = \frac{\lvert S^{(k)}_u \cap S^{(k)}_{u^{\prime}}\rvert}{2k - \lvert S^{(k)}_u \cap S^{(k)}_{u^{\prime}}\rvert}.
\end{align*}
\end{itemize}
Now let $A := S^{(k)}_u$ and $B := S^{(k)}_{u^{\prime}}$. Let $L$ be the number of points that leave the top-$k$ set when moving from $A$ to $B$; then $\lvert A\cap B \rvert = k - L$, and the symmetric difference has size $\lvert A \triangle B \rvert = 2L$. Now, a data point can enter or leave the top-$k$ set only if its rank crosses the boundary between positions $k$ and $k+1$. This can happen only when we perform a swap involving the items currently at ranks $k$ and $k+1$. Each such boundary swap can change the membership of at most the two swapped items.
Let $b$ be the number of boundary swaps among the $p$ swaps. Then at most $2b$ distinct items can change membership. Since $b \le p$, at most $2p$ distinct items can change membership. Because $\lvert A \triangle B \rvert = 2L$, we get $2L \le 2p$, hence $L \le p$. Therefore, $\lvert A \cap B \rvert  = k - L \ge k - p$. This yields the following deterministic bounds (for $p \le k$; otherwise they become vacuous):
\begin{align*}
\mathrm{Overlap}@k(u,u^{\prime}) = \frac{\lvert A\cap B\rvert}{k} \ge 1 - \frac{p}{k},
\end{align*}
and since $\lvert A\cup B \rvert = \lvert A \rvert + \lvert B \rvert - \lvert A\cap B \rvert  = 2k - \lvert A\cap B\rvert$, we have $\lvert A\cup B \rvert \le 2k - (k - p) = k + p$, so
\begin{align*}
\mathrm{Jaccard}@k(u,u') = \frac{\lvert A\cap B \rvert}{\vert A\cup B\rvert} \ge \frac{k-p}{k+p}.
\end{align*}
Now, by Definition 3.2, for each utility direction $\bar\alpha\in\mathbb{S}^{K-1}$, $\rho_p(\bar\alpha)$ is the minimal geodesic distance such that moving by $\rho_p(\bar\alpha)$ on the sphere produces exactly $p$ swaps in the ranking induced by $\bar\alpha$. So, for a fixed $p$ and $k$, a larger $R_p \propto \mathbb{E}_{\bar{\alpha}}[\rho_p(\bar\alpha)]$ means that one must move farther in average in the utility space before reaching a regime where top-$k$ overlap/Jaccard can be as low as these bounds.

\newpage
\section{Additional definitions}
\label{sec:additional-definitions}
For the reader's convenience, we first outline the main points covered in this section.
\begin{itemize}
    \item[--] Appendix \ref{subsec:linear-utilities-definitions}: Some definitions of linear fractional utilities.
    \item[--] Appendix \ref{subsec:rank-corr-metrics}: Rank correlation metrics (Kendall \& Spearman).
    \item[--] Appendix \ref{subsec:axioms}: Axioms satisfied by semivalues.
    \item[--] Appendix \ref{subsec:application-semivalue-based-methods}: Applications of semivalue-based data valuation methods. 
    \item[--] Appendix \ref{subsec:extension-multiclass}: Extension of the \emph{multiple-valide utility} scenario to multiclass classification metrics.
    \item[--] Appendix \ref{subsec:top-k-metrics-definitions}: Top-$k$ stability metrics (overlap@$k$ \& Jaccard@$k$).
\end{itemize}

\subsection{Some definitions of linear fractional utilities}
\label{subsec:linear-utilities-definitions}
Below, we give the concrete coefficients $(c_0, c_1, c_2)$ and $(d_0, d_1, d_2)$ for several commonly used linear-fractional performance metrics. Each of these metrics can be expressed in the form
\begin{align*}
    u(S) = \frac{c_0 + c_1 \lambda(S) + c_2 \gamma(S)}{d_0 + d_1 \lambda(S) + d_2 \gamma(S)}.
\end{align*}
as recalled from \eqref{eq:util_linfrac}.
\begin{table}[ht]
\centering
\caption{Some examples of \emph{linear fractional} utilities. For more examples, see \cite{choi2010}. We set $\pi = \frac{1}{m} \sum_{j = 1}^m \mathbf{1}[y_j = 1]$, the proportion of positive labels in $\mathcal{D}_{\text{test}}$.}
\label{tab:linear-fractional-utilities}
\vspace{2mm}
\begin{tabular}{@{} l  c  c @{}}
\toprule
\textbf{Utility} & $(c_0,c_1,c_2)$ & $(d_0,d_1,d_2)$ \\
\midrule
Accuracy & $\bigl(1 - \pi, 2, -1\bigr)$ & $(1, 0, ,0)$ \\
F$_\beta$-score & $\bigl(0, 1+\beta^2, 0\bigr)$ & $\bigl(\beta^2 \pi, 0, 1\bigr)$ \\
Jaccard & $(0, 1 ,0)$ & $(\pi,\,-1,\,1)$\\
AM-measure & $\bigl(\frac{1}{2}, \frac{2}{\pi}+\frac{2}{1 - \pi}, -\frac{2}{1 \pi}\bigr)$ & $(1, 0, 0)$ \\
\bottomrule
\end{tabular}
\end{table}
\subsection{Rank correlation metrics (Kendall \& Spearman)}
\label{subsec:rank-corr-metrics}
Let $X = (x_1, x_2, \dots, x_n)$ and $Y=(y_1, \dots, y_n)$ be two real‐valued score vectors on the same $n$ items. And let $\pi_{X}$ and $\pi_Y$ be their induced rankings. Rank correlations measure monotonic relationships between relative ordering $\pi_{X}$ and $\pi_{Y}$.
\begin{definition}[Kendall rank correlation]
Define the set of all pairs of distinct indices $\mathcal{P} = \bigl\{(i,j):1\le i<j\le n\bigr\}$. For each $(i,j)\in\mathcal{P}$, call the pair \emph{concordant} if $(x_i - x_j)(y_i - y_j)>0$, \emph{discordant} if $(x_i - x_j)(y_i - y_j)<0$, and a \emph{tie} in $X$ (resp. $Y$) if $x_i=x_j$ (resp. $y_i=y_j$).  

\noindent Let $c$ the number of concordant pairs, $d$ the number of discordant pairs, and $t_X$ (resp. $t_Y$) the number of ties in $X$ (resp. $Y$). Then, the Kendall rank correlation $\tau$ is
\begin{align*}
\tau = \frac{c - d}{\sqrt{\bigl[\binom{n}{2} - t_X\bigr]\bigl[\binom{n}{2} - t_Y\bigr]}},
\end{align*}
which simplify to $\tau = \frac{c-d}{\binom{n}{2}}$ if there are no ties ($t_X = t_Y = 0$). 
\end{definition}

\begin{definition}[Spearman rank correlation]
Let $\pi_X(i)$ be the rank of $x_i$ in $X$ and likewise $\pi_Y(i)$ for $Y$. Define the rank‐differences $d_i = \pi_X(i) - \pi_Y(i)$. The Spearman rank correlation $s$ is the Pearson correlation of the ranked vectors:
\begin{align*}
s = \frac{\displaystyle \sum_{i=1}^n (\pi_X(i) - \bar{\pi}_X)\,(\pi_Y(i) - \bar{\pi}_Y)}{\sqrt{\sum_{i=1}^n (\pi_X(i) - \bar{\pi}_X)^2}\,\sqrt{\sum_{i=1}^n (\pi_Y(i) - \bar{\pi}_Y)^2}}
\end{align*}
where $\bar{\pi}_X=\frac{1}{n}\displaystyle\sum_{i=1}^n \pi_X(i)$ and $\bar{\pi}_Y=\frac{1}{n}\displaystyle\sum_{i=1}^n \pi_Y(i)$.
If there are no ties, it simplifies to
\begin{align*}
s = 1 - \frac{6\sum_{i=1}^n d_i^2}{n(n^2-1)}.
\end{align*}
\end{definition}
Both metrics lie in $[-1,1]$, with $+1$ indicating perfect agreement and $-1$ perfect reversal.
\subsection{Axioms satisfied by semivalues}
\label{subsec:axioms}
Semivalues as defined in \eqref{eq:semivalue} satisfy fundamental axioms that ensure desirable properties in data valuation. We formally state these axioms in the following.
Let $\phi(., \omega; .)$ be a semivalue-based data valuation method defined by a weight vector $\omega$ and let $u$ and $v$ be utility functions. Then, $\phi$ satisfies the following axioms:
\begin{enumerate}
    \item \emph{Dummy}. If $u(S \cup \{z_i\}) = u(S) + c$ for all $S \subseteq \mathcal{D} \backslash \{z_i\}$ and some $c \in \mathbb{R}$, then $\phi(z_i; \omega, u) = c$.
    \item \emph{Symmetry}. If $u(S \cup \{z_j\}) = u(S\cup \{z_j\})$ for all $S \subseteq \mathcal{D} \backslash \{z_i, z_j\}$, then $\phi(z_i; \omega, u) = \phi(z_j; \omega, u)$.
    \item \emph{Linearity}. For any $\alpha_1, \alpha_2 \in \mathbb{R}$, $\phi(z_i; \omega, \alpha_1 u + \alpha_2 v) = \alpha_1 \phi(z_i; \omega, u) + \alpha_2 \phi(z_i; \omega, v)$.
\end{enumerate}
While all semivalues satisfy the above axioms, Data Shapley uniquely also guarantees \emph{efficiency}: $\sum_{z \in \mathcal{D}} \phi(z, \omega, u) = u(\mathcal{D})$.

\subsection{Applications of semivalue-based data valuation methods}
\label{subsec:application-semivalue-based-methods}
In practice, semivalue-based methods are mostly applied to perform \textit{data cleaning} or \textit{data subset selection} \citep{tang2021, pandl2021, bloch2021, zheng2024}. Both tasks involve ranking data points according to their assigned values.
\paragraph{Data cleaning.} Data cleaning aims to improve dataset quality by identifying and removing noisy or low-quality data points. Since semivalue-based methods quantify each point’s contribution to a downstream task, low-valued points are natural candidates for removal. Specifically, a common approach is to remove points that fall into the set $\mathcal{N}_{\tau}$, defined as the subset of data points with the lowest values \citep{datashapley}. Formally, $\mathcal{N}_{\tau} = \{z_i \in \mathcal{D} \mid \phi(z_i; u, \omega) \leq \tau\}$, where $\tau$ is a threshold determined through domain knowledge or empirical evaluation.
\paragraph{Data subset selection.} Data subset selection involves choosing the optimal training set from available samples to maximize final model performance. Since semivalues measure data quality, prioritizing data points with the highest values is a natural approach. Consequently, a common practice in the literature \citep{databanzhaf, opendataval, rethinkingdatashapley} is selecting, given a size budget $k$, the subset $\mathcal{S}^{(k)}_{\phi(u,\omega)}$ of data points with top-$k$ data values, i.e., $\mathcal{S}^{(k)}_{\phi(u,\omega)} = \arg\max_{\mathcal{S} \subseteq \mathcal{D}, |\mathcal{S}| = k} \sum_{z_i \in \mathcal{S}} \phi(z_i;u,\omega)$.
\subsection{Extension of the \emph{multiple-valid utility} scenario to multiclass metrics}
\label{subsec:extension-multiclass}

Let $\mathcal{Y}=\{1,\dots,K\}$ be the class set and let $g_S$ be the model trained on $S$. For each class $k$, define the one-vs-rest confusion counts on the test set:
\begin{align*}
\mathrm{TP}_k=\#\{y=k,\ g_S(x)=k\},\quad
\mathrm{FN}_k=\#\{y=k,\ g_S(x)\neq k\},\quad
\mathrm{FP}_k=\#\{y\neq k,\ g_S(x)=k\},
\end{align*}
and the class supports $n_k=\mathrm{TP}_k+\mathrm{FN}_k$ (true instances of class $k$) and $\hat n_k=\mathrm{TP}_k+\mathrm{FP}_k$ (predicted as class $k$).

\paragraph{Recall.}
The per-class recalls form the $K$-vector
\begin{align*}
r(S):=\big(r_1(S),\dots,r_K(S)\big)\in[0,1]^K,\qquad
r_k(S):=\frac{\mathrm{TP}_k}{\mathrm{TP}_k+\mathrm{FN}_k}=\frac{\mathrm{TP}_k}{n_k}.
\end{align*}
Any average recall can be written as a dot product with a weight vector $w\in\Delta_K:=\{w\!\in\!\mathbb{R}^K_{\ge0}:\sum_k w_k=1\}$:
\begin{align*}
\mathrm{rec}_w(S) :=\ \langle w,r(S)\rangle.
\end{align*}
Two common choices are immediate:
\begin{align*}
\text{macro-recall: } w^{\mathrm{macro}}=\tfrac{1}{K}\mathbf{1},
\qquad
\text{weighted-recall: } w^{\mathrm{wgt}}_k=\frac{n_k}{\sum_{\ell=1}^K n_\ell}.
\end{align*}
Thus macro- and weighted-recall are the \emph{same linear functional} applied to the per-class recall basis $r(S)$ with different $w$.

\paragraph{Precision and F$_1$.}
Analogously, define the per-class precisions
\begin{align*}
p(S):=\big(p_1(S),\dots,p_K(S)\big),\qquad
p_k(S):=\frac{\mathrm{TP}_k}{\mathrm{TP}_k+\mathrm{FP}_k}=\frac{\mathrm{TP}_k}{\hat n_k},
\end{align*}
and per-class F$_1$’s
\begin{align*}
f_k(S):=\frac{2p_k(S)r_k(S)}{p_k(S)+r_k(S)}\quad(\text{with }f_k=0\text{ if }p_k+r_k=0),\qquad
f(S):=(f_1,\dots,f_K).
\end{align*}
Macro/weighted versions are again linear averages over the same class-wise basis:
\begin{align*}
\mathrm{prec}_w(S)=\langle w,\,p(S)\rangle,\qquad
\mathrm{F1}_w(S)=\langle w,\,f(S)\rangle,
\quad w\in\Delta_K,
\end{align*}
with $w^{\mathrm{macro}}$ and $w^{\mathrm{wgt}}$ defined as above.

\paragraph{Implication for our framework.}
Let the \emph{class-wise utilities} be $u^{\mathrm{rec}}_k(S):=r_k(S)$ (or $u^{\mathrm{prec}}_k(S):=p_k(S)$, $u^{\mathrm{F1}}_k(S):=f_k(S)$). Then any macro/weighted multiclass metric is a convex combination
\[
u_w(S)\;=\;\sum_{k=1}^K w_k\,u_k(S),\qquad w\in\Delta_K.
\]
By linearity of semivalues,
\[
\phi(z;\omega,u_w)\;=\;\sum_{k=1}^K w_k\,\phi\big(z;\omega,u_k\big),
\]
so the spatial signature lives in $\mathbb{R}^K$ with coordinates given by the class-wise utilities. Robustness to \emph{all} convex mixtures $w\in\Delta_K$ is therefore a $K$-utility instance and $R_p$ is computed via the Monte Carlo procedure on $\mathcal{S}^{K-1}$ described in Appendix \ref{subsec:closed-form-for-average-distance}.
\subsection{\texorpdfstring{Top-$k$ stability metrics (overlap@$k$ \& Jaccard@$k$)}{Top-k stability metrics (overlap@k \& Jaccard@k)}}
\label{subsec:top-k-metrics-definitions} 
Given two rankings $\pi$ and $\pi^{\prime}$ over the same dataset $D = \{z_1,\dots,z_n\}$, 
we denote by $\mathrm{Top}_k(\pi)$ the set of the $k$ highest-ranked points under $\pi$. 
Two standard metrics are then commonly used.
\paragraph{Top-$k$ overlap@$k$.} 
The overlap@$k$ between $\pi$ and $\pi^{\prime}$ is defined as
\begin{align*}
    \mathrm{Overlap}@k(\pi,\pi^{\prime}) 
    := \frac{\lvert \mathrm{Top}_k(\pi) \cap \mathrm{Top}_k(\pi^{\prime})\rvert}{k} \in [0,1].
\end{align*}
It measures the fraction of items remaining in the top-$k$ set when switching from one ranking to another. 

\paragraph{Top-$k$ Jaccard@$k$.} 
The Jaccard@$k$ similarity normalizes the overlap by the size of the union of the two sets:
\begin{align*}
    \mathrm{Jaccard}@k(\pi,\pi^{\prime})
    := 
    \frac{\lvert \mathrm{Top}_k(\pi) \cap \mathrm{Top}_k(\pi')\rvert}{\lvert \mathrm{Top}_k(\pi) \cup \mathrm{Top}_k(\pi')\rvert}
    \in [0,1].
\end{align*}
It provides a scale-free measure of agreement, where $1$ indicates identical top-$k$ selections.

\newpage
\section{Additional figures}
\subsection{\texorpdfstring{Additional figures for $K = 2$ base utilities}{Additional figures for K = 2 base utilities}}
\label{subsec:additional-figures}
In Section \ref{sec:methodology}, we plot the spatial signatures for the \textsc{wind} dataset (Figure \ref{fig:geometry-illustration}) to illustrate the geometric mapping at the heart of our framework. Figures \ref{fig:ss-breast}, \ref{fig:ss-titanic}, \ref{fig:ss-credit}, \ref{fig:ss-heart}, \ref{fig:ss-cpu}, \ref{fig:ss-2dplanes} and \ref{fig:ss-pol} present the analogous plots for the remaining binary datasets introduced in Table \ref{tab:datasets}.
\begin{figure*}[ht]
    \centering
    \subfloat[Shapley]{%
        \includegraphics[width=0.30\textwidth]{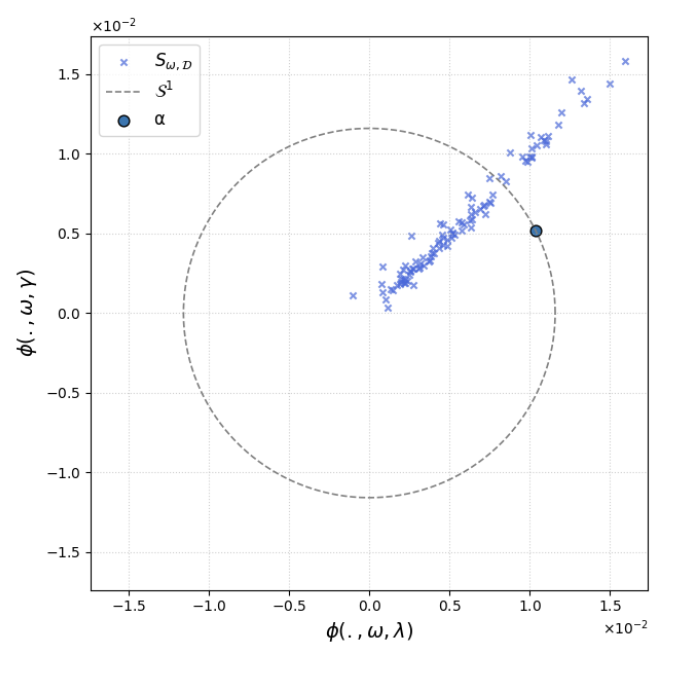}
    }
    \hfill
    \subfloat[$(4,1)$-Beta Shapley]{%
        \includegraphics[width=0.30\textwidth]{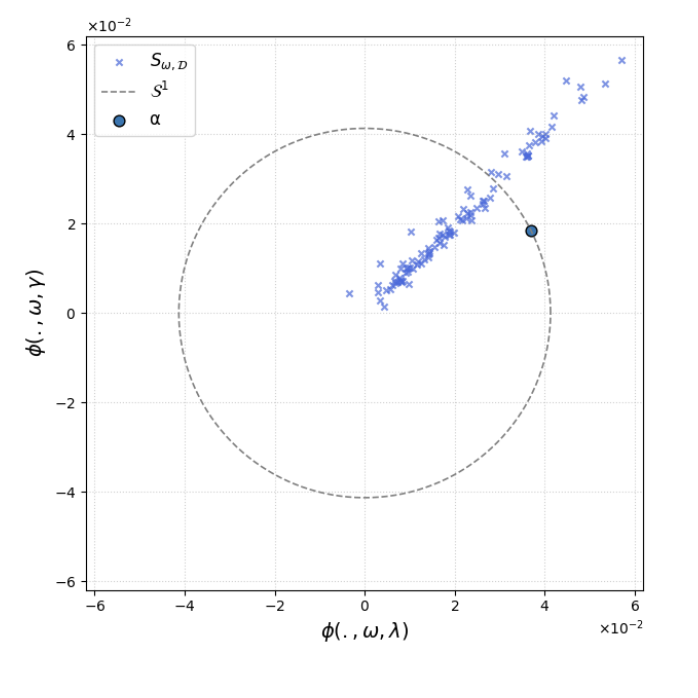}
    }
    \hfill
    \subfloat[Banzhaf]{%
        \includegraphics[width=0.30\textwidth]{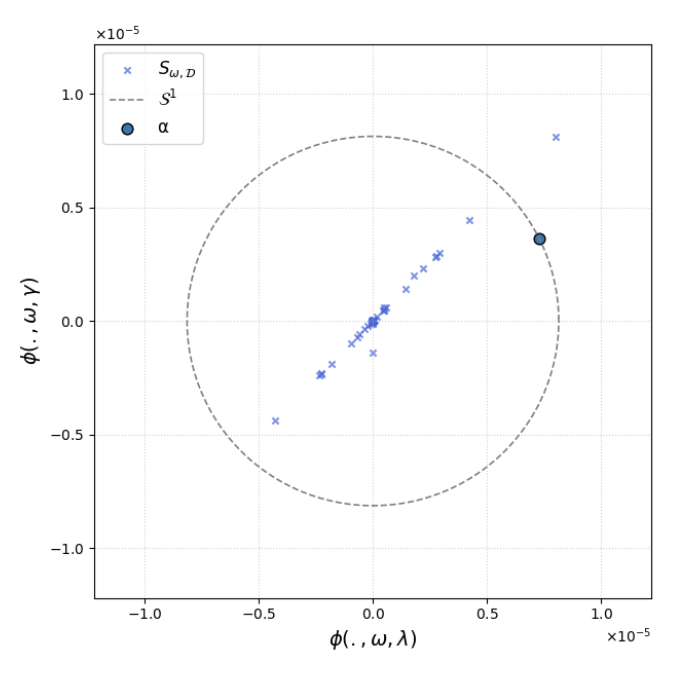}
    }
    \caption{Spatial signature of the \textsc{breast} dataset for three semivalues (a) Shapley, (b) $(4,1)$-Beta Shapley, and (c) Banzhaf.  Each cross marks the embedding $\psi_{\omega,\mathcal{D}}(z)$ of a data point (with $u_1=\lambda$, $u_2=\gamma$), the dashed circle is the unit circle $\mathcal{S}^1$, and the filled dot indicates one utility direction $\bar{\alpha}$.}
    \label{fig:ss-breast}
    \vskip -0.2in
\end{figure*}
\begin{figure*}[ht]
    \centering
    \subfloat[Shapley]{%
        \includegraphics[width=0.30\textwidth]{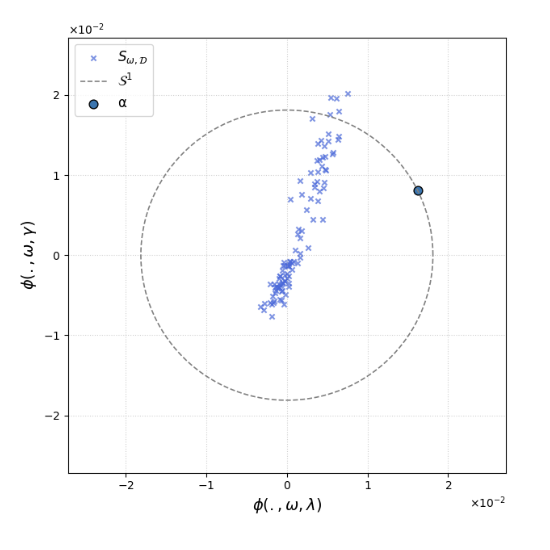}
    }
    \hfill
    \subfloat[$(4,1)$-Beta Shapley]{%
        \includegraphics[width=0.30\textwidth]{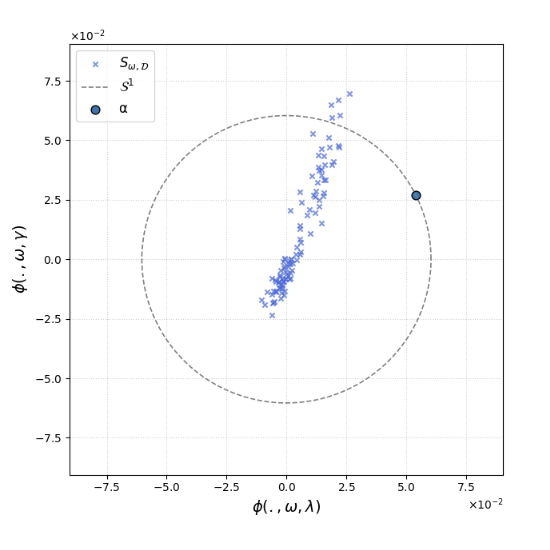}
    }
    \hfill
    \subfloat[Banzhaf]{%
        \includegraphics[width=0.30\textwidth]{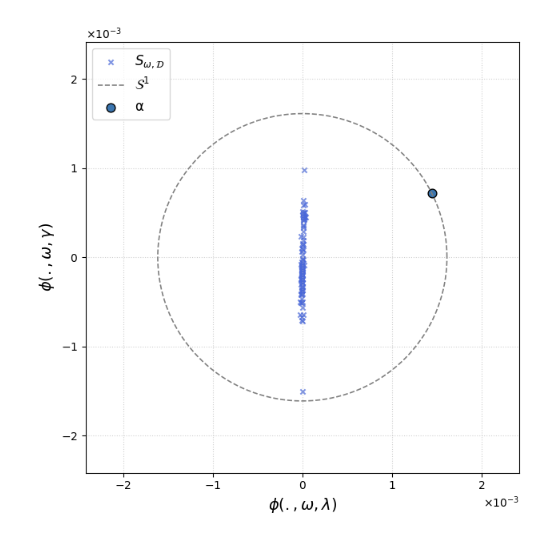}
    }
    \caption{Spatial signature of the \textsc{titanic} dataset for three semivalues (a) Shapley, (b) $(4,1)$-Beta Shapley, and (c) Banzhaf.  Each cross marks the embedding $\psi_{\omega,\mathcal{D}}(z)$ of a data point (with $u_1=\lambda$, $u_2=\gamma$), the dashed circle is the unit circle $\mathcal{S}^1$, and the filled dot indicates one utility direction $\bar{\alpha}$.}
    \label{fig:ss-titanic}
    \vskip -0.2in
\end{figure*}
\begin{figure*}[ht]
    \centering
    \subfloat[Shapley]{%
        \includegraphics[width=0.30\textwidth]{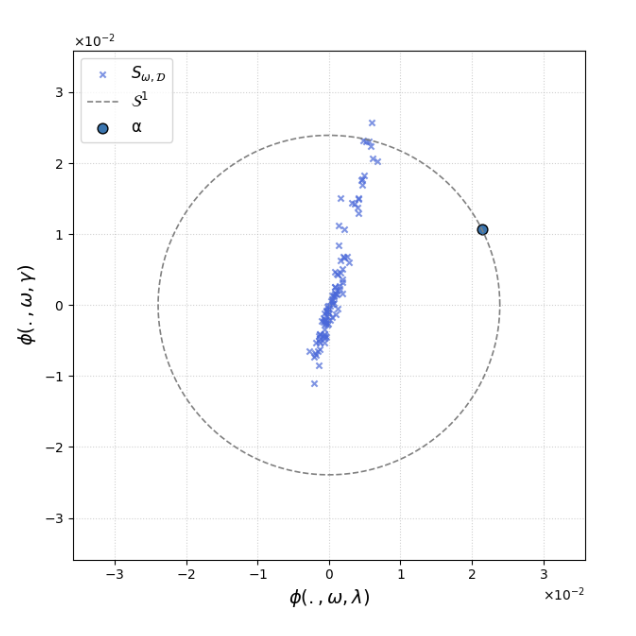}
    }
    \hfill
    \subfloat[$(4,1)$-Beta Shapley]{%
        \includegraphics[width=0.30\textwidth]{./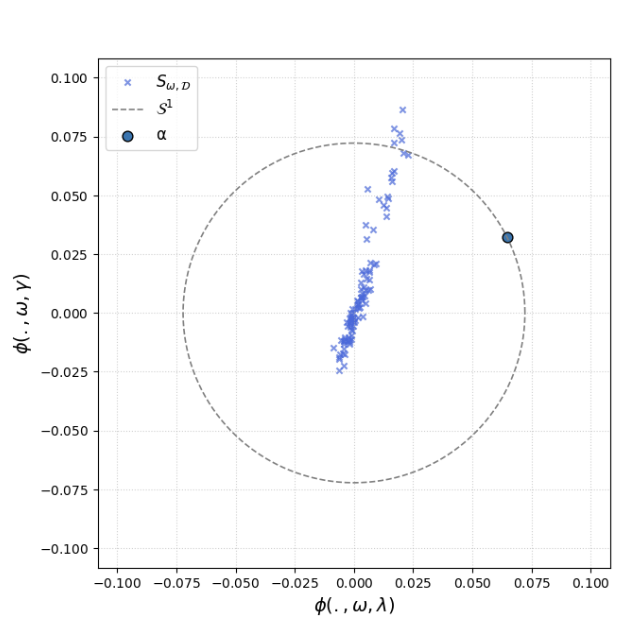}
    }
    \hfill
    \subfloat[Banzhaf]{%
        \includegraphics[width=0.30\textwidth]{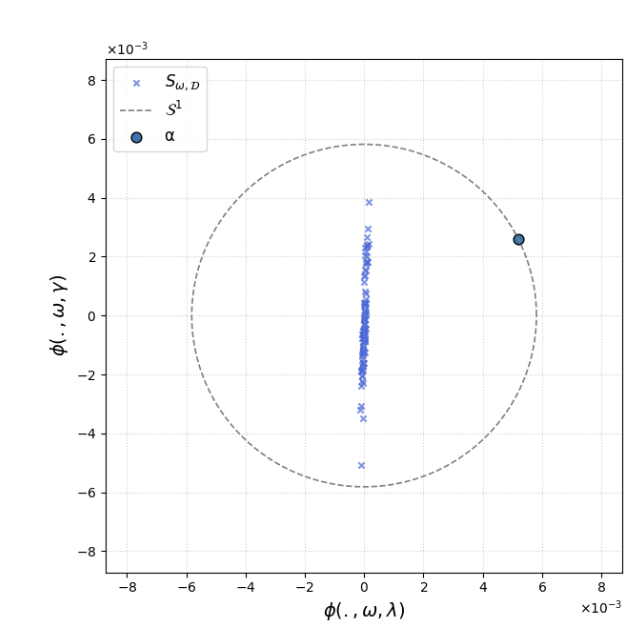}
    }
    \caption{Spatial signature of the \textsc{credit} dataset for three semivalues (a) Shapley, (b) $(4,1)$-Beta Shapley, and (c) Banzhaf.  Each cross marks the embedding $\psi_{\omega,\mathcal{D}}(z)$ of a data point (with $u_1=\lambda$, $u_2=\gamma$), the dashed circle is the unit circle $\mathcal{S}^1$, and the filled dot indicates one utility direction $\bar{\alpha}$.}
    \label{fig:ss-credit}
    \vskip -0.2in
\end{figure*}
\begin{figure*}[ht]
    \centering
    \subfloat[Shapley]{%
        \includegraphics[width=0.30\textwidth]{./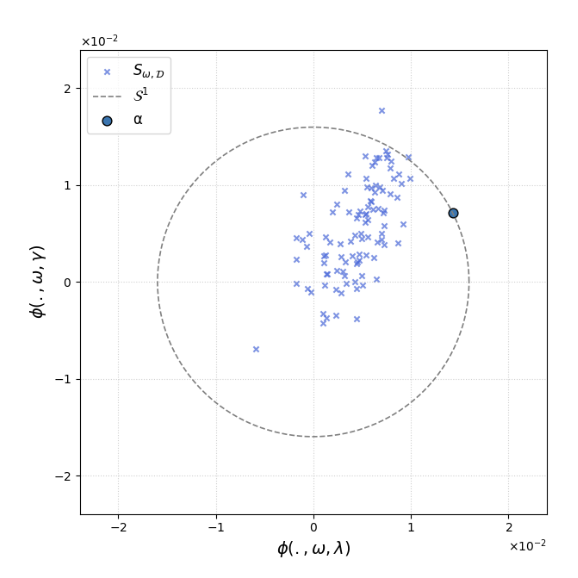}
    }
    \hfill
    \subfloat[$(4,1)$-Beta Shapley]{%
        \includegraphics[width=0.30\textwidth]{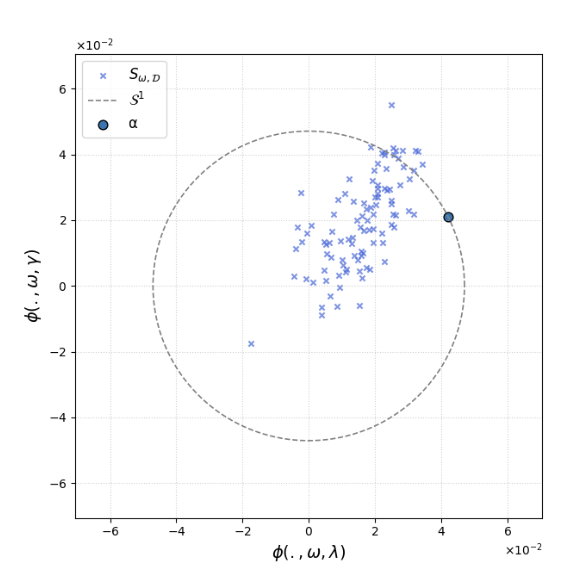}
    }
    \hfill
    \subfloat[Banzhaf]{%
        \includegraphics[width=0.30\textwidth]{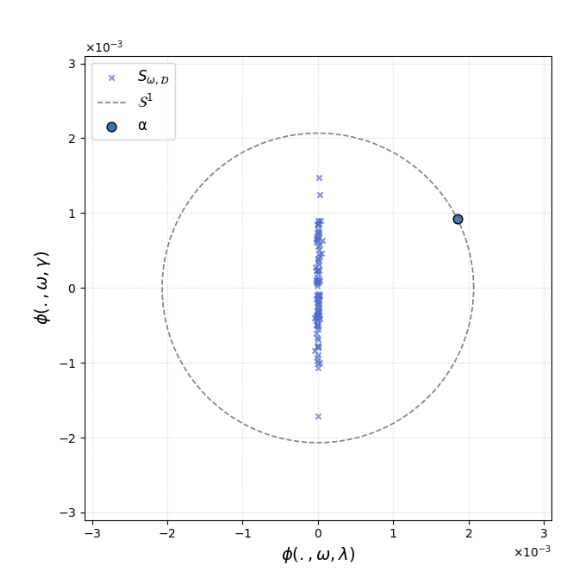}
    }
    \caption{Spatial signature of the \textsc{heart} dataset for three semivalues (a) Shapley, (b) $(4,1)$-Beta Shapley, and (c) Banzhaf.  Each cross marks the embedding $\psi_{\omega,\mathcal{D}}(z)$ of a data point (with $u_1=\lambda$, $u_2=\gamma$), the dashed circle is the unit circle $\mathcal{S}^1$, and the filled dot indicates one utility direction $\bar{\alpha}$.}
    \label{fig:ss-heart}
    \vskip -0.2in
\end{figure*}
\begin{figure*}[ht]
    \centering
    \subfloat[Shapley]{%
        \includegraphics[width=0.30\textwidth]{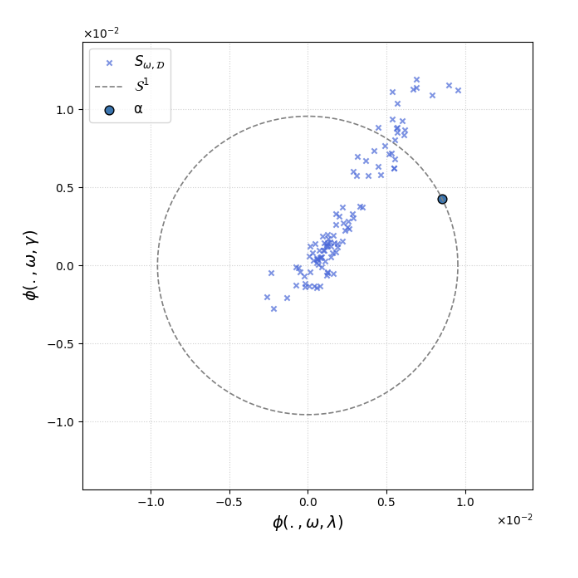}
    }
    \hfill
    \subfloat[$(4,1)$-Beta Shapley]{%
        \includegraphics[width=0.30\textwidth]{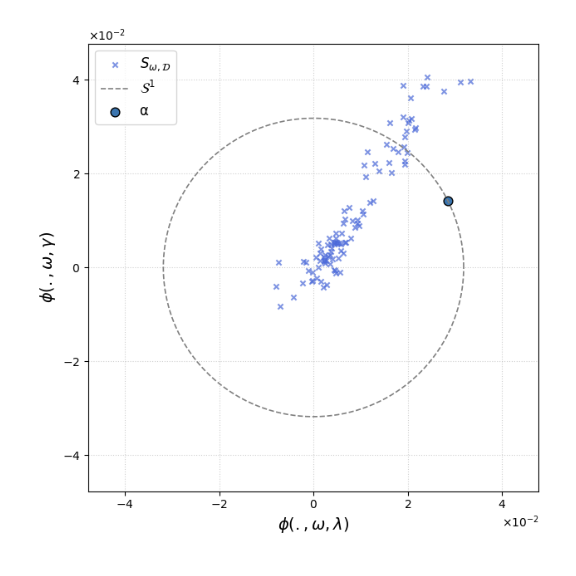}
    }
    \hfill
    \subfloat[Banzhaf]{%
        \includegraphics[width=0.30\textwidth]{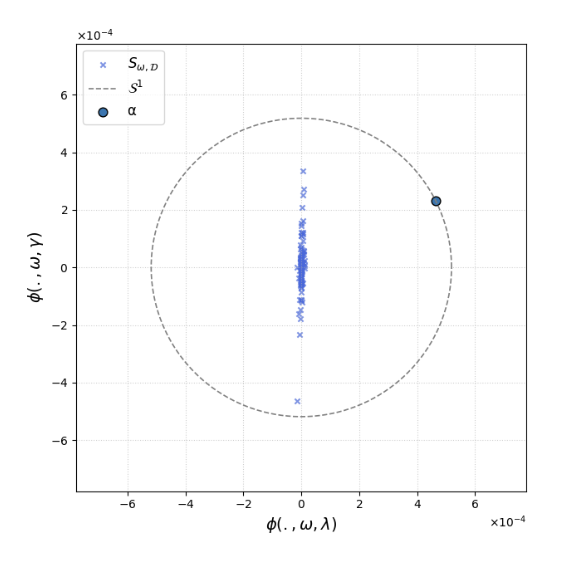}
    }
    \caption{Spatial signature of the \textsc{cpu} dataset for three semivalues (a) Shapley, (b) $(4,1)$-Beta Shapley, and (c) Banzhaf.  Each cross marks the embedding $\psi_{\omega,\mathcal{D}}(z)$ of a data point (with $u_1=\lambda$, $u_2=\gamma$), the dashed circle is the unit circle $\mathcal{S}^1$, and the filled dot indicates one utility direction $\bar{\alpha}$.}
    \label{fig:ss-cpu}
    \vskip -0.2in
\end{figure*}
\begin{figure*}[ht]
    \centering
    \subfloat[Shapley]{%
        \includegraphics[width=0.30\textwidth]{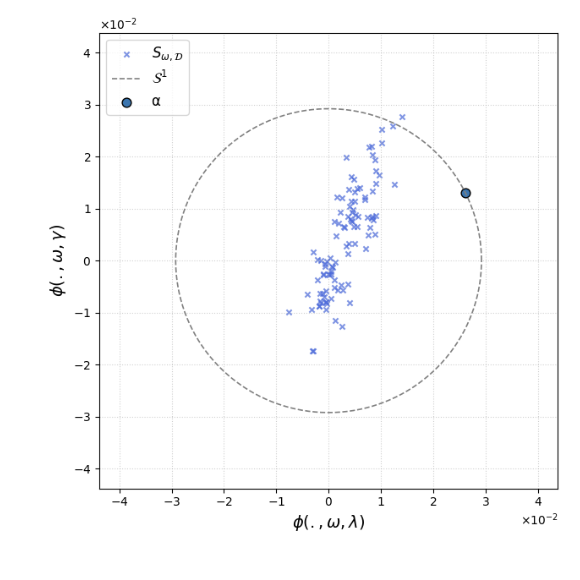}
    }
    \hfill
    \subfloat[$(4,1)$-Beta Shapley]{%
        \includegraphics[width=0.30\textwidth]{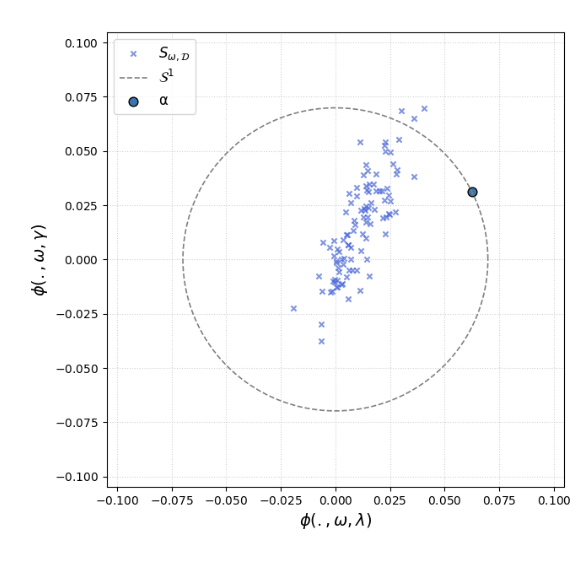}
    }
    \hfill
    \subfloat[Banzhaf]{%
        \includegraphics[width=0.30\textwidth]{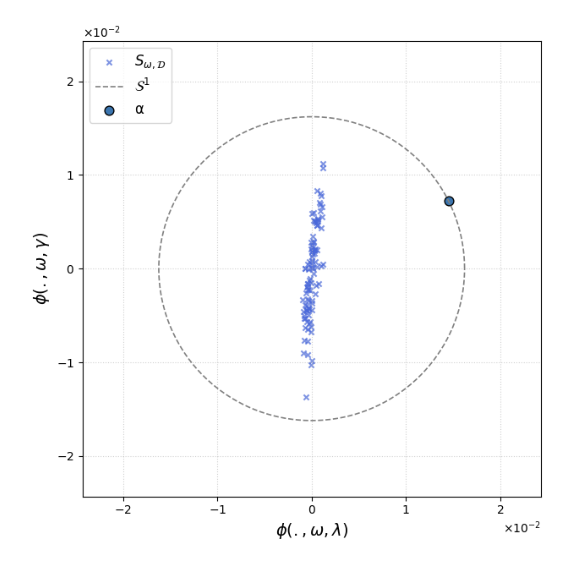}
    }
    \caption{Spatial signature of the \textsc{2dplanes} dataset for three semivalues (a) Shapley, (b) $(4,1)$-Beta Shapley, and (c) Banzhaf.  Each cross marks the embedding $\psi_{\omega,\mathcal{D}}(z)$ of a data point (with $u_1=\lambda$, $u_2=\gamma$), the dashed circle is the unit circle $\mathcal{S}^1$, and the filled dot indicates one utility direction $\bar{\alpha}$.}
    \label{fig:ss-2dplanes}
    \vskip -0.2in
\end{figure*}
\begin{figure*}[ht]
    \centering
    \subfloat[Shapley]{%
        \includegraphics[width=0.30\textwidth]{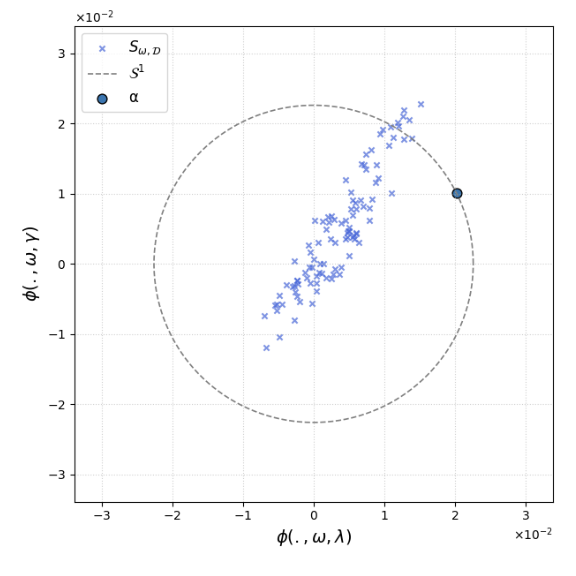}
    }
    \hfill
    \subfloat[$(4,1)$-Beta Shapley]{%
        \includegraphics[width=0.30\textwidth]{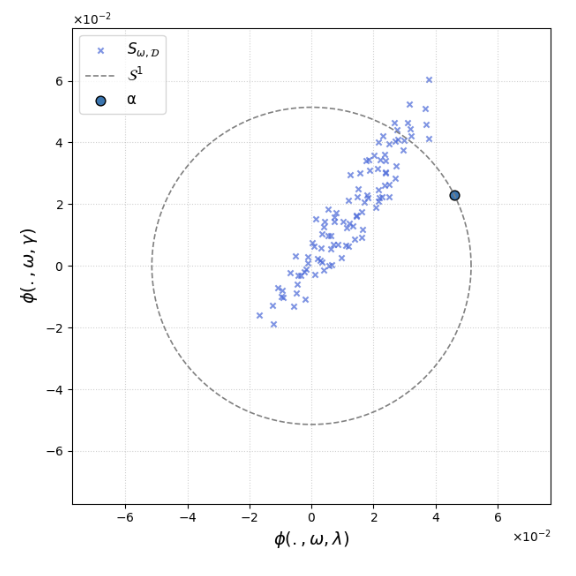}
    }
    \hfill
    \subfloat[Banzhaf]{%
        \includegraphics[width=0.30\textwidth]{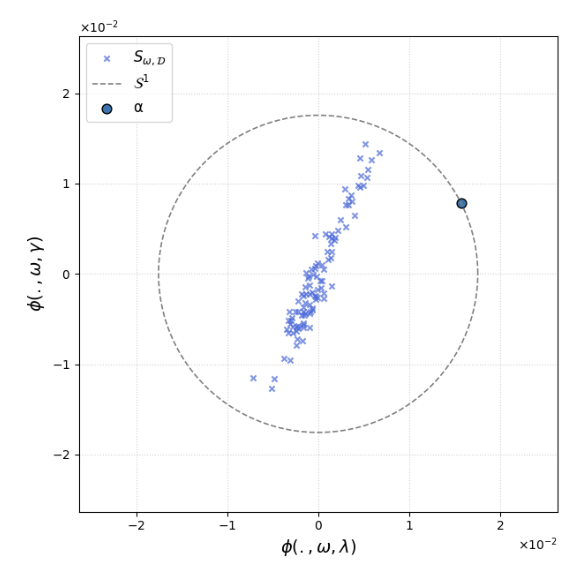}
    }
    \caption{Spatial signature of the \textsc{pol} dataset for three semivalues (a) Shapley, (b) $(4,1)$-Beta Shapley, and (c) Banzhaf.  Each cross marks the embedding $\psi_{\omega,\mathcal{D}}(z)$ of a data point (with $u_1=\lambda$, $u_2=\gamma$), the dashed circle is the unit circle $\mathcal{S}^1$, and the filled dot indicates one utility direction $\bar{\alpha}$.}
    \label{fig:ss-pol}
    \vskip -0.2in
\end{figure*}
\subsection{\texorpdfstring{Additional figures for $K > 2$ base utilities}{Additional figures for K > 2 base utilities}}
\label{subsec:additional-figures-k-greater-2}
In this section, we visualize, in three dimensions, the spatial signatures associated with the \emph{utility trade-off experiments} for binary classification with $K=3$ base utilities 
(Accuracy, F1, Recall) described in Appendix \ref{subsubsec:case-K-ge-2}.
These plots (corresponding to Figures \ref{fig:ss-breast-3D}, \ref{fig:ss-titanic-3D}, \ref{fig:ss-credit-3D}, \ref{fig:ss-heart-3D}, \ref{fig:ss-wind-3D}, \ref{fig:ss-cpu-3D}, \ref{fig:ss-2dplanes-3D} and \ref{fig:ss-pol-3D}) provide the geometric counterpart of the robustness scores reported in Table \ref{tab:acc-f1-recall}.
\begin{figure*}[ht]
    \centering
    \subfloat[Shapley]{%
        \includegraphics[width=0.30\textwidth]{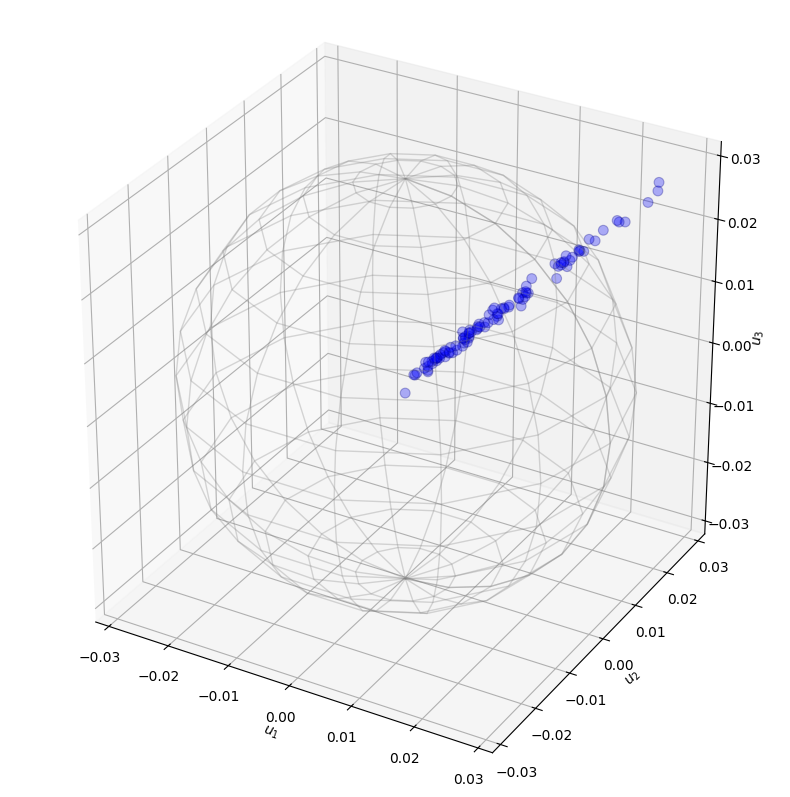}
    }
    \hfill
    \subfloat[$(4,1)$-Beta Shapley]{%
        \includegraphics[width=0.30\textwidth]{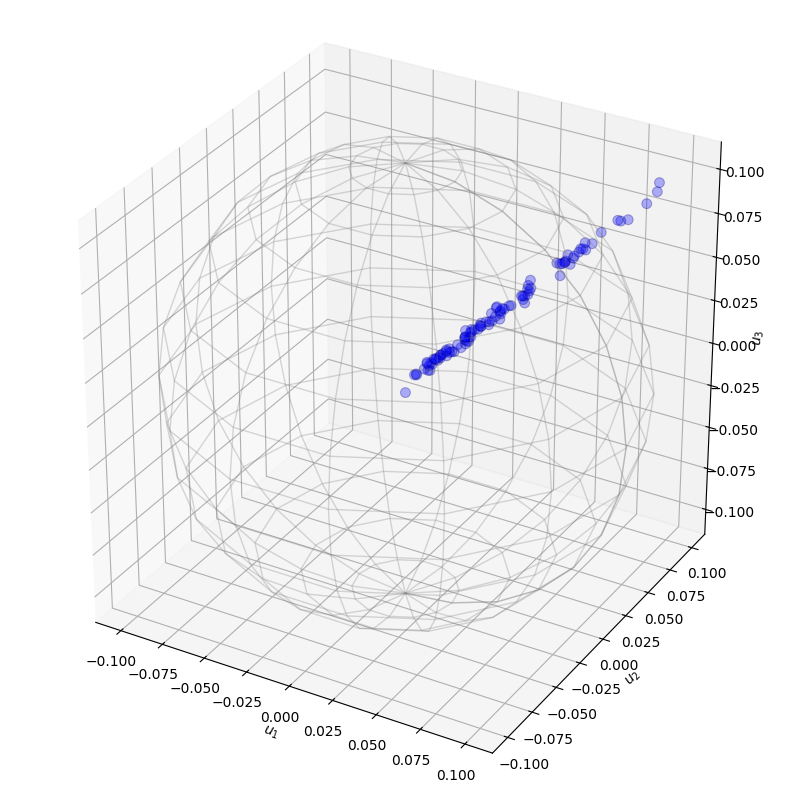}
    }
    \hfill
    \subfloat[Banzhaf]{%
        \includegraphics[width=0.30\textwidth]{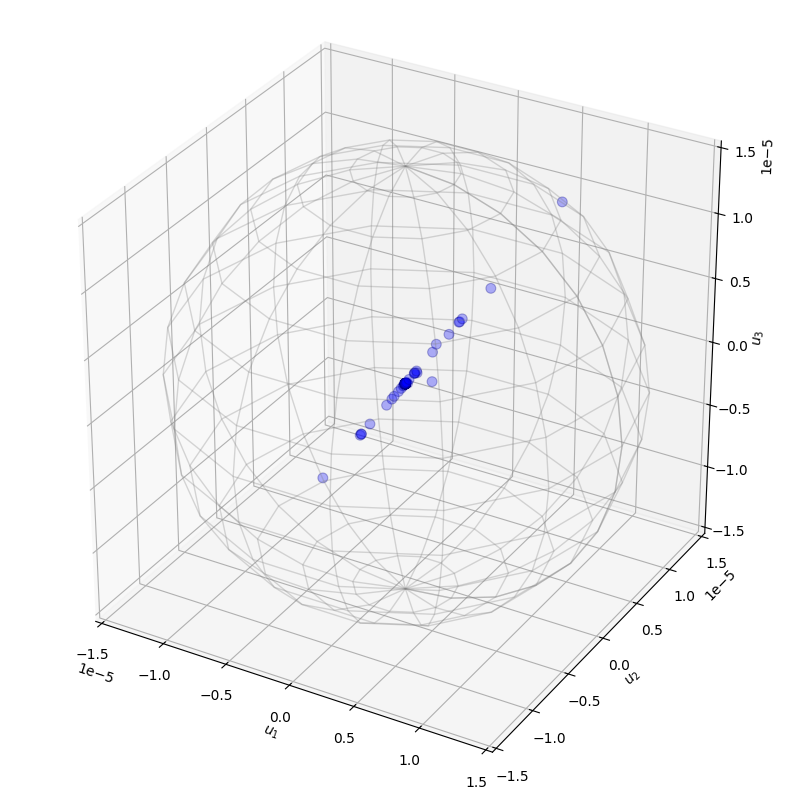}
    }
    \caption{Spatial signature of the \textsc{Breast} dataset for three semivalues (a) Shapley, (b) $(4,1)$-Beta Shapley, and (c) Banzhaf.  Each blue points marks the embedding $\psi_{\omega,\mathcal{D}}(z)$ of a data point $z \in \mathcal{D}$ (with $u_1=\mathrm{accuracy}$, $u_2=\mathrm{f1}$, $u_3=\mathrm{recall}$). The represented sphere is $\mathcal{S}^2$.}
    \label{fig:ss-breast-3D}
    \vskip -0.2in
\end{figure*}
\begin{figure*}[ht]
    \centering
    \subfloat[Shapley]{%
        \includegraphics[width=0.30\textwidth]{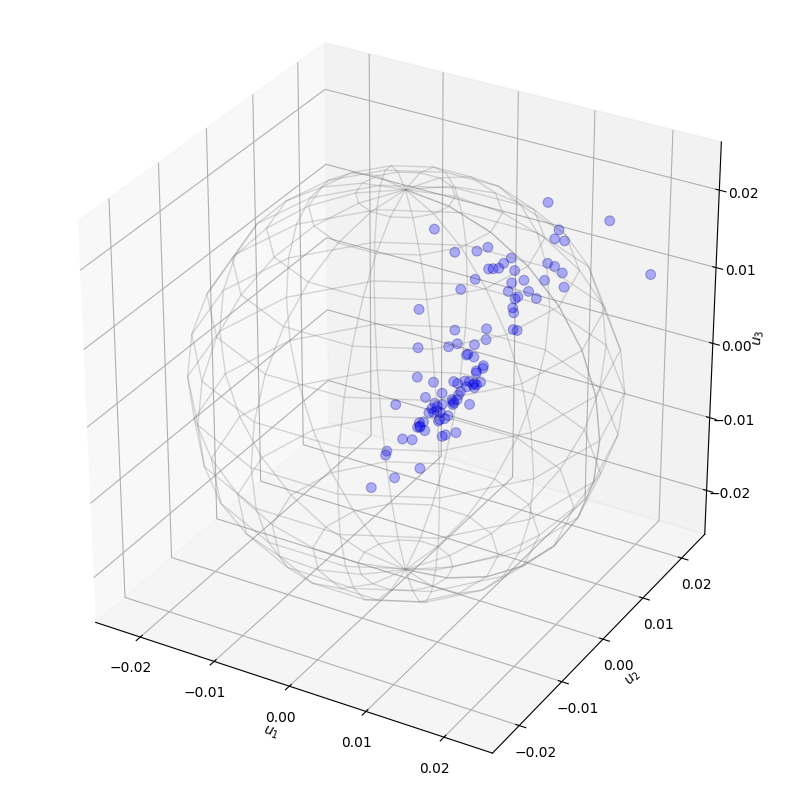}
    }
    \hfill
    \subfloat[$(4,1)$-Beta Shapley]{%
        \includegraphics[width=0.30\textwidth]{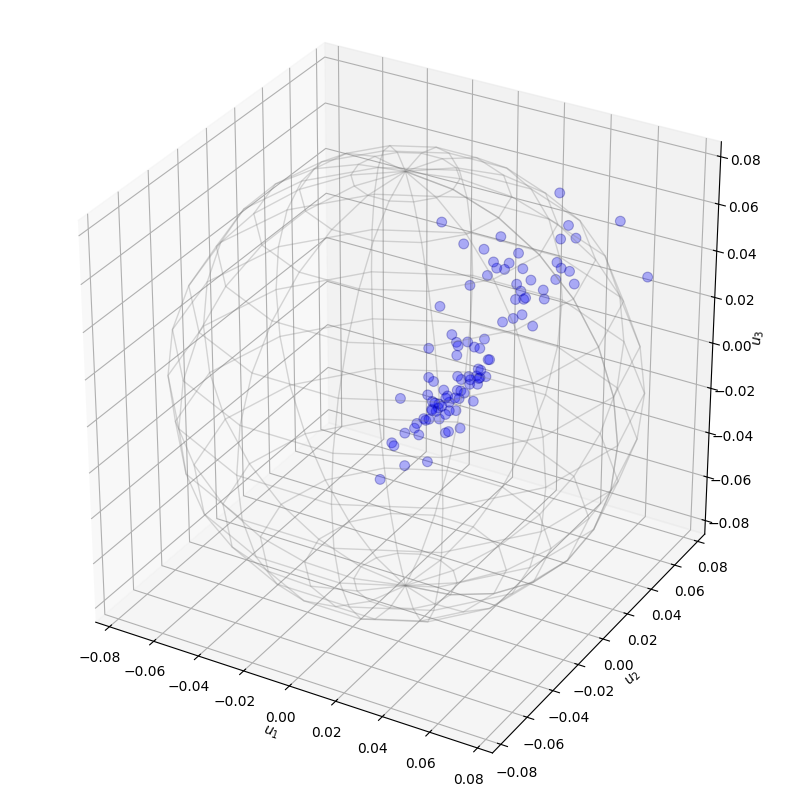}
    }
    \hfill
    \subfloat[Banzhaf]{%
        \includegraphics[width=0.30\textwidth]{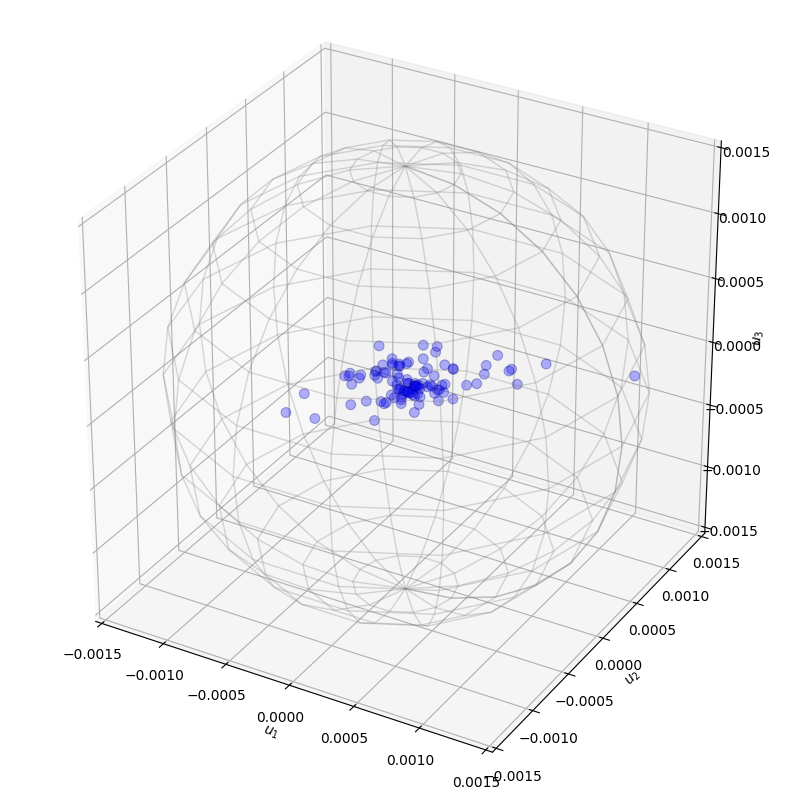}
    }
    \caption{Spatial signature of the \textsc{Titanic} dataset for three semivalues (a) Shapley, (b) $(4,1)$-Beta Shapley, and (c) Banzhaf.  Each blue points marks the embedding $\psi_{\omega,\mathcal{D}}(z)$ of a data point $z \in \mathcal{D}$ (with $u_1=\mathrm{accuracy}$, $u_2=\mathrm{f1}$, $u_3=\mathrm{recall}$). The represented sphere is $\mathcal{S}^2$.}
    \label{fig:ss-titanic-3D}
    \vskip -0.2in
\end{figure*}
\begin{figure*}[ht]
    \centering
    \subfloat[Shapley]{%
        \includegraphics[width=0.30\textwidth]{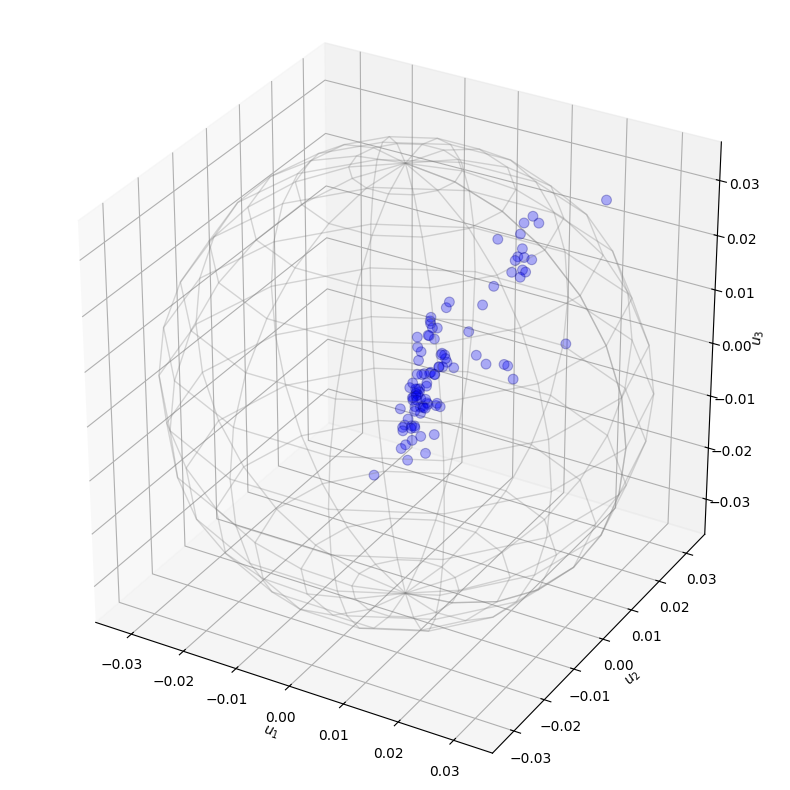}
    }
    \hfill
    \subfloat[$(4,1)$-Beta Shapley]{%
        \includegraphics[width=0.30\textwidth]{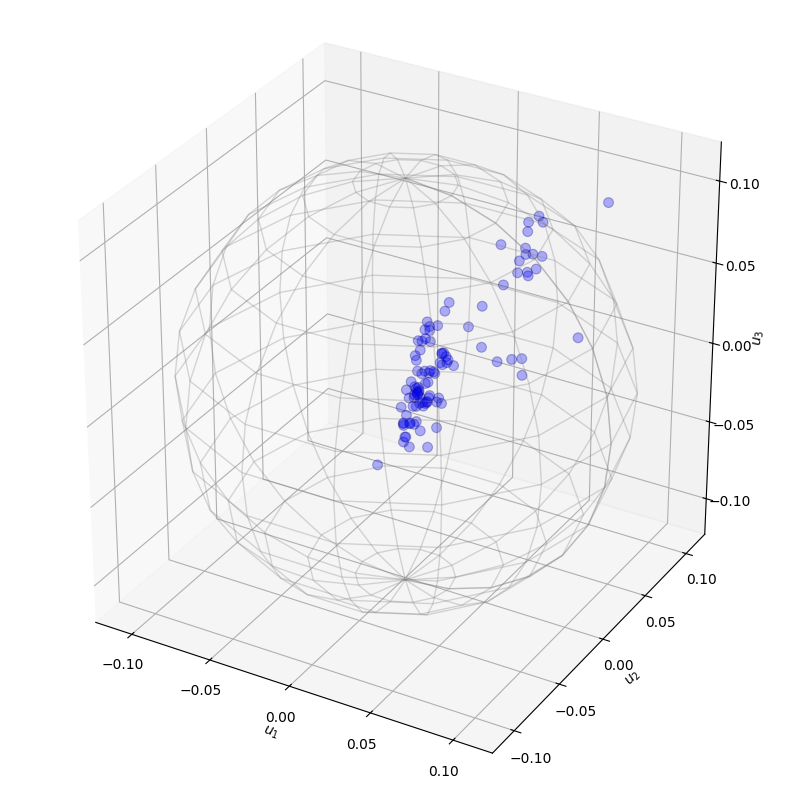}
    }
    \hfill
    \subfloat[Banzhaf]{%
        \includegraphics[width=0.30\textwidth]{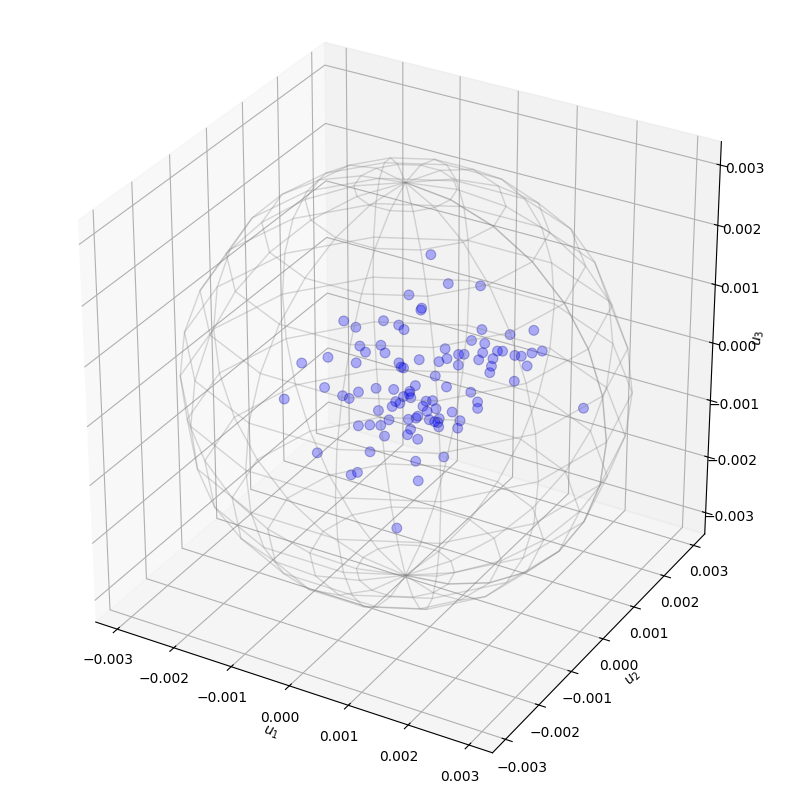}
    }
    \caption{Spatial signature of the \textsc{Credit} dataset for three semivalues (a) Shapley, (b) $(4,1)$-Beta Shapley, and (c) Banzhaf.  Each blue points marks the embedding $\psi_{\omega,\mathcal{D}}(z)$ of a data point $z \in \mathcal{D}$ (with $u_1=\mathrm{accuracy}$, $u_2=\mathrm{f1}$, $u_3=\mathrm{recall}$). The represented sphere is $\mathcal{S}^2$.}
    \label{fig:ss-credit-3D}
    \vskip -0.2in
\end{figure*}
\begin{figure*}[ht]
    \centering
    \subfloat[Shapley]{%
        \includegraphics[width=0.30\textwidth]{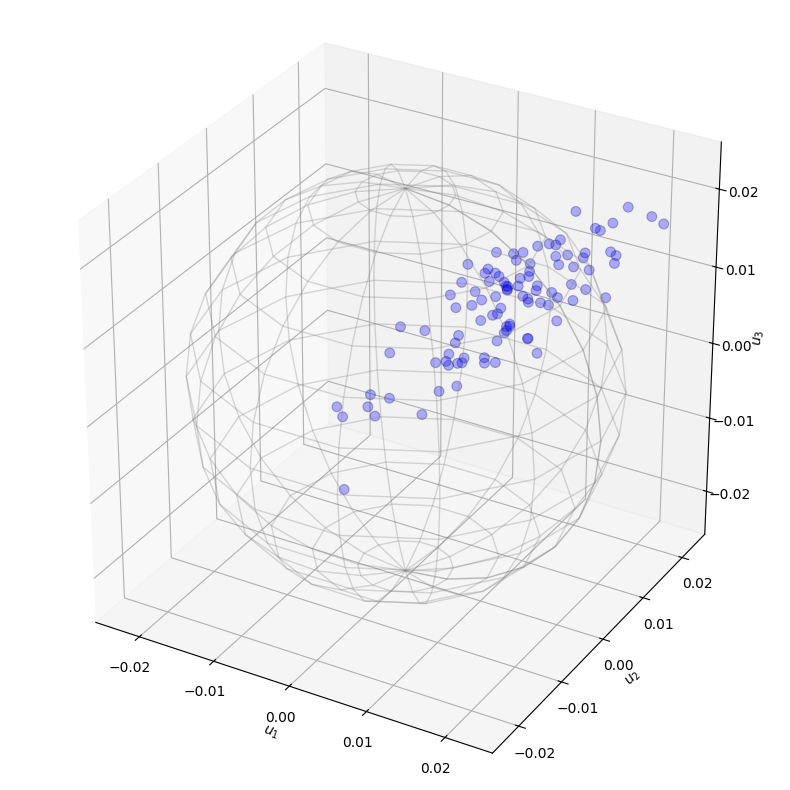}
    }
    \hfill
    \subfloat[$(4,1)$-Beta Shapley]{%
        \includegraphics[width=0.30\textwidth]{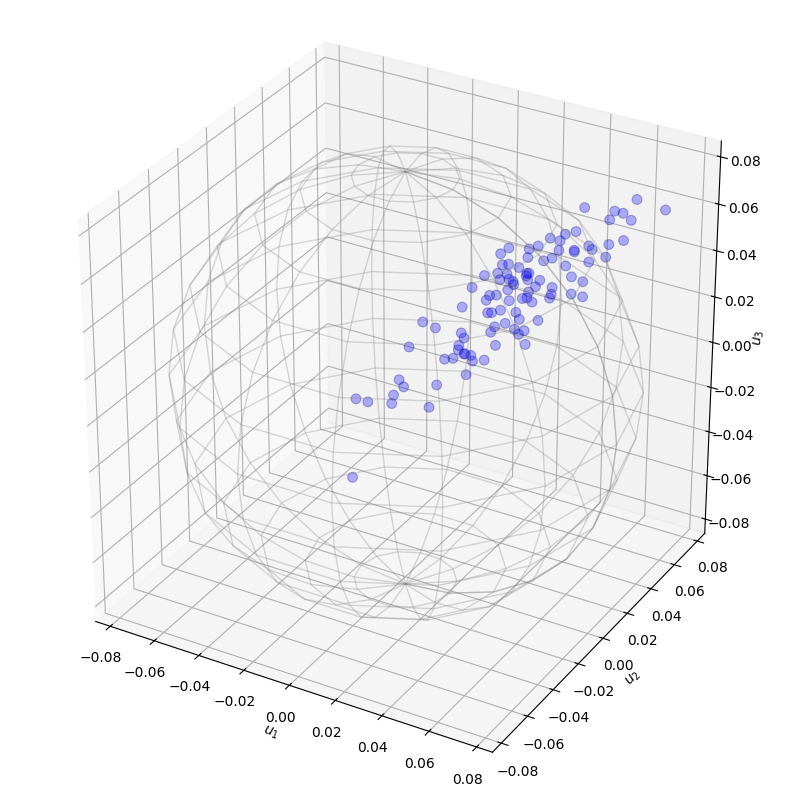}
    }
    \hfill
    \subfloat[Banzhaf]{%
        \includegraphics[width=0.30\textwidth]{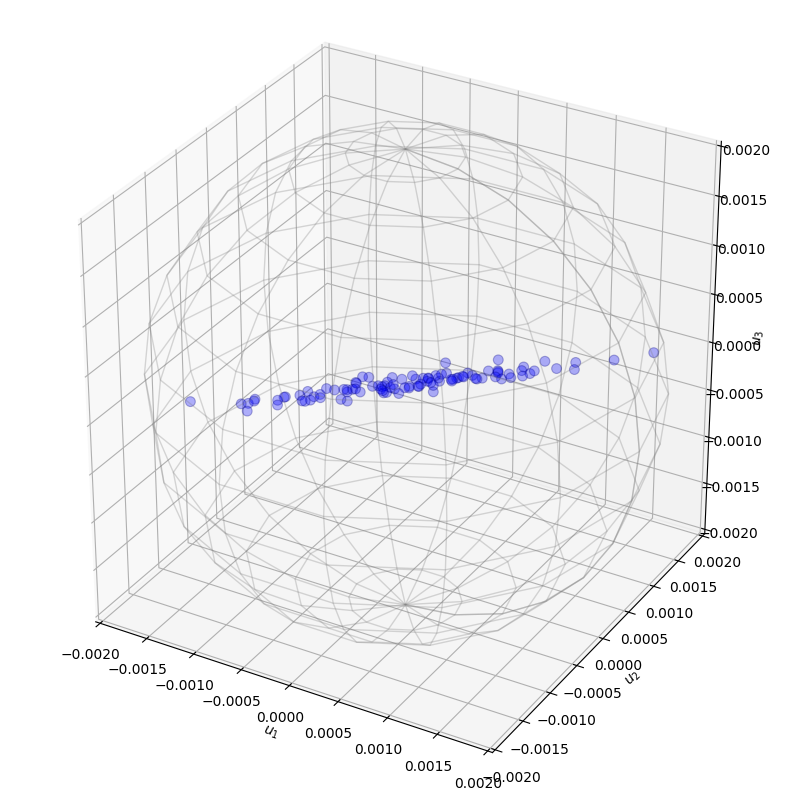}
    }
    \caption{Spatial signature of the \textsc{Heart} dataset for three semivalues (a) Shapley, (b) $(4,1)$-Beta Shapley, and (c) Banzhaf.  Each blue points marks the embedding $\psi_{\omega,\mathcal{D}}(z)$ of a data point $z \in \mathcal{D}$ (with $u_1=\mathrm{accuracy}$, $u_2=\mathrm{f1}$, $u_3=\mathrm{recall}$). The represented sphere is $\mathcal{S}^2$.}
    \label{fig:ss-heart-3D}
    \vskip -0.2in
\end{figure*}
\begin{figure*}[ht]
    \centering
    \subfloat[Shapley]{%
        \includegraphics[width=0.30\textwidth]{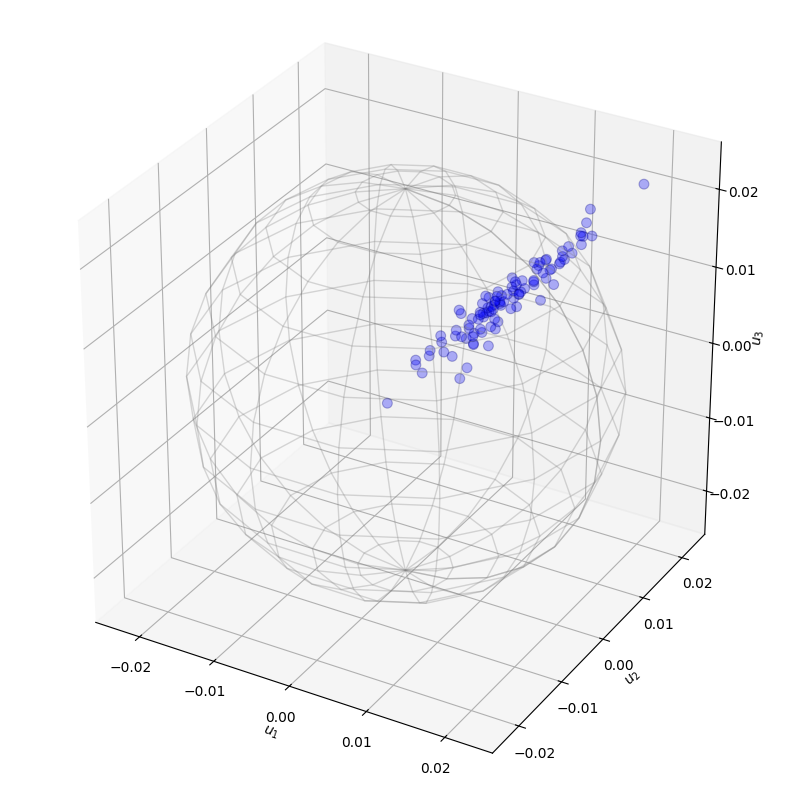}
    }
    \hfill
    \subfloat[$(4,1)$-Beta Shapley]{%
        \includegraphics[width=0.30\textwidth]{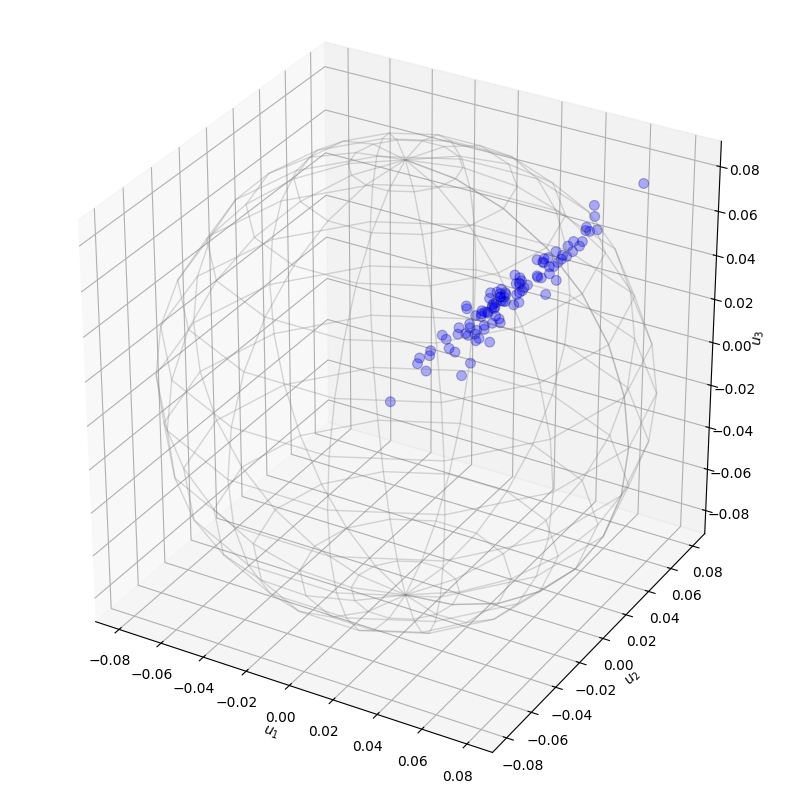}
    }
    \hfill
    \subfloat[Banzhaf]{%
        \includegraphics[width=0.30\textwidth]{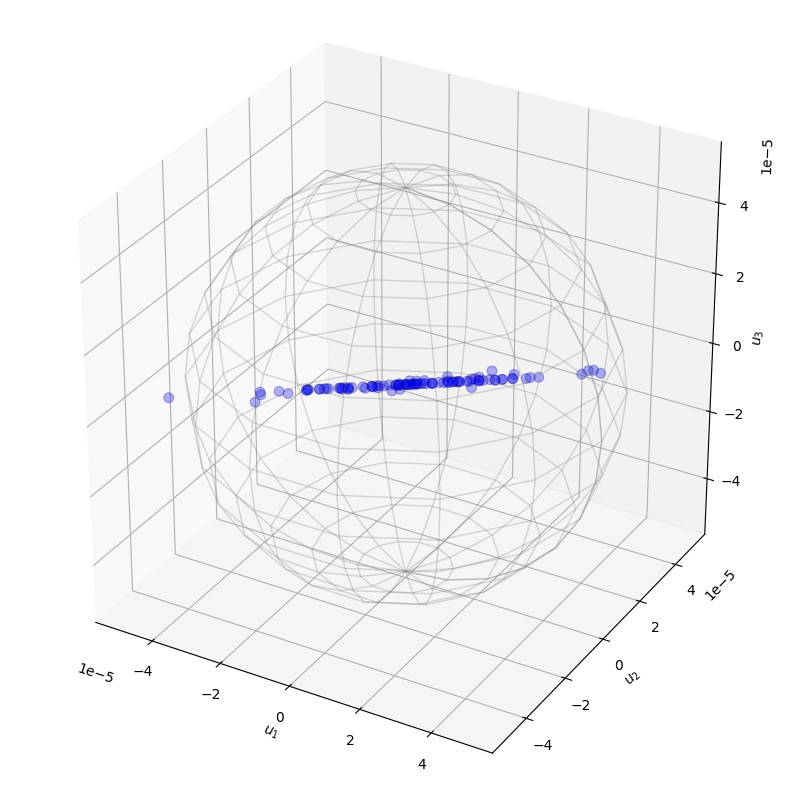}
    }
    \caption{Spatial signature of the \textsc{Wind} dataset for three semivalues (a) Shapley, (b) $(4,1)$-Beta Shapley, and (c) Banzhaf.  Each blue points marks the embedding $\psi_{\omega,\mathcal{D}}(z)$ of a data point $z \in \mathcal{D}$ (with $u_1=\mathrm{accuracy}$, $u_2=\mathrm{f1}$, $u_3=\mathrm{recall}$). The represented sphere is $\mathcal{S}^2$.}
    \label{fig:ss-wind-3D}
    \vskip -0.2in
\end{figure*}
\begin{figure*}[ht]
    \centering
    \subfloat[Shapley]{%
        \includegraphics[width=0.30\textwidth]{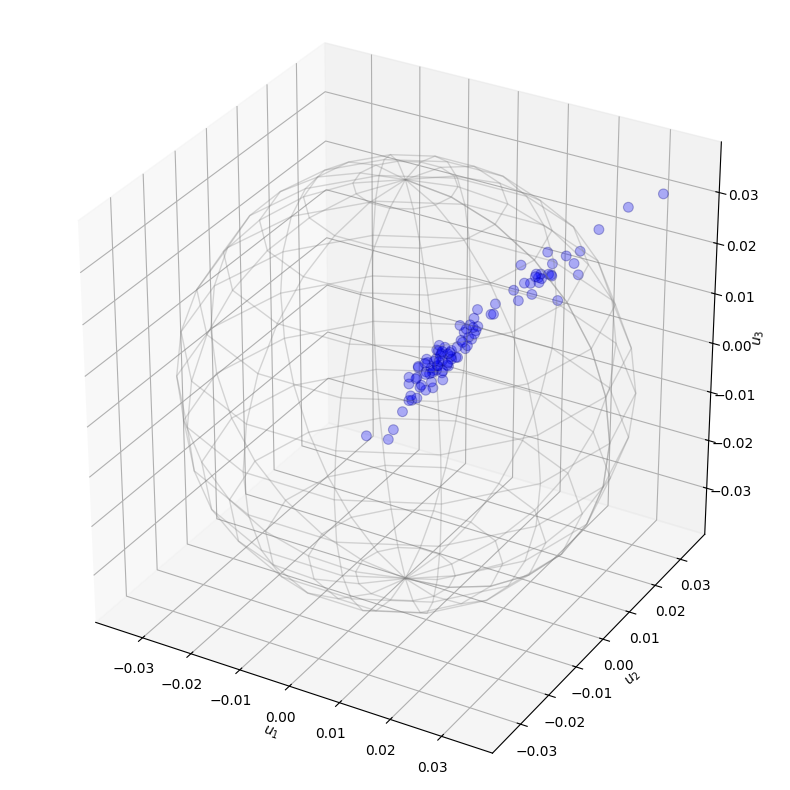}
    }
    \hfill
    \subfloat[$(4,1)$-Beta Shapley]{%
        \includegraphics[width=0.30\textwidth]{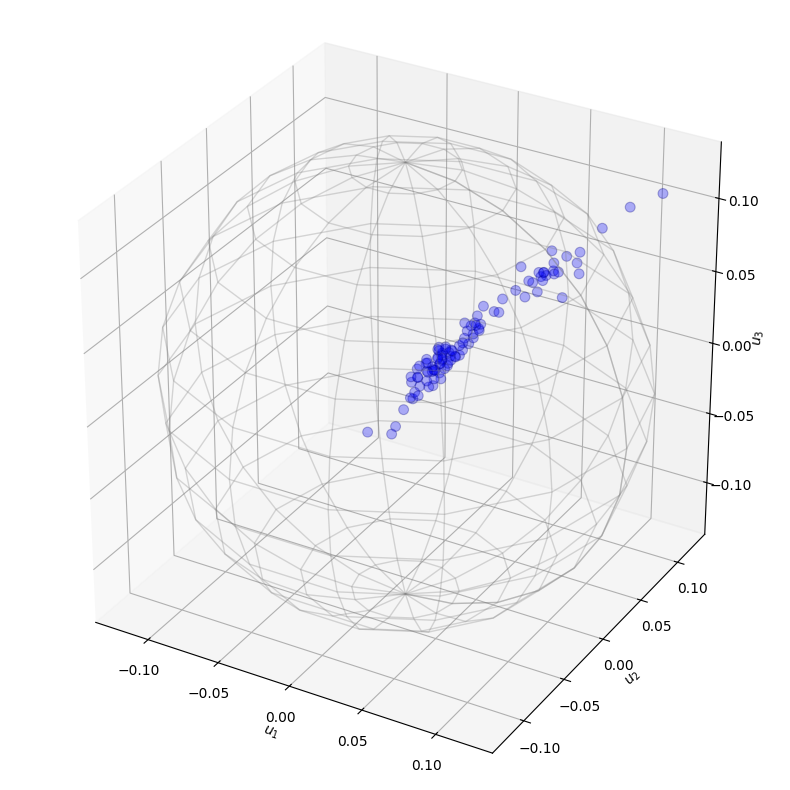}
    }
    \hfill
    \subfloat[Banzhaf]{%
        \includegraphics[width=0.30\textwidth]{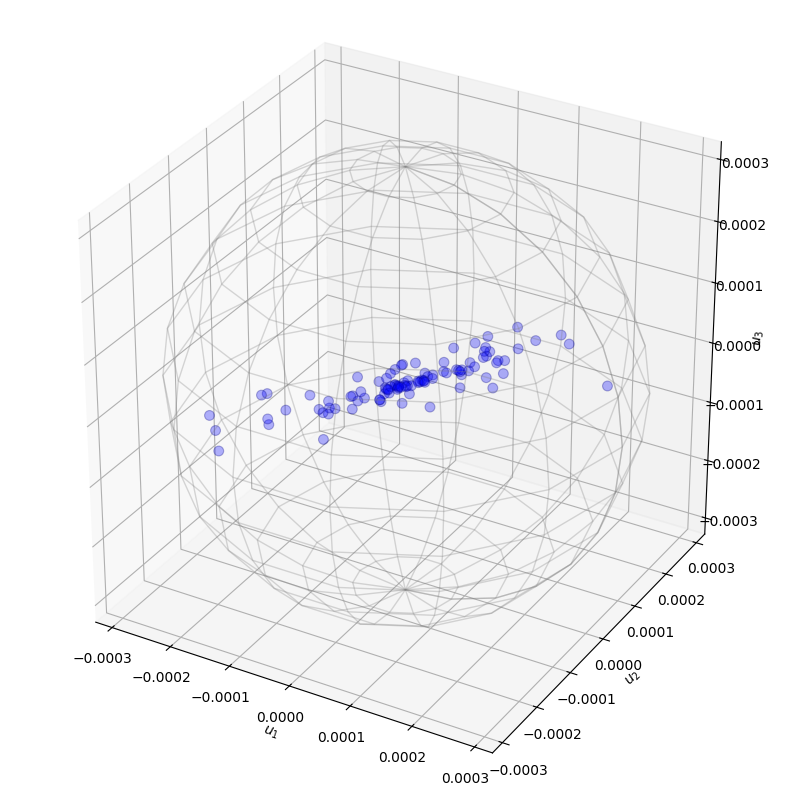}
    }
    \caption{Spatial signature of the \textsc{Cpu} dataset for three semivalues (a) Shapley, (b) $(4,1)$-Beta Shapley, and (c) Banzhaf.  Each blue points marks the embedding $\psi_{\omega,\mathcal{D}}(z)$ of a data point $z \in \mathcal{D}$ (with $u_1=\mathrm{accuracy}$, $u_2=\mathrm{f1}$, $u_3=\mathrm{recall}$). The represented sphere is $\mathcal{S}^2$.}
    \label{fig:ss-cpu-3D}
    \vskip -0.2in
\end{figure*}
\begin{figure*}[ht]
    \centering
    \subfloat[Shapley]{%
        \includegraphics[width=0.30\textwidth]{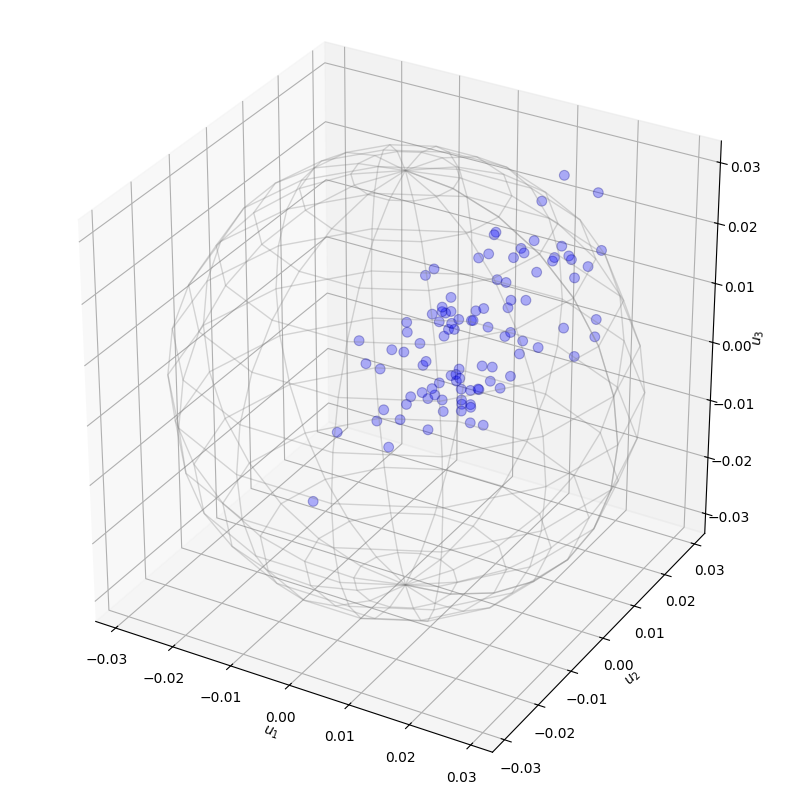}
    }
    \hfill
    \subfloat[$(4,1)$-Beta Shapley]{%
        \includegraphics[width=0.30\textwidth]{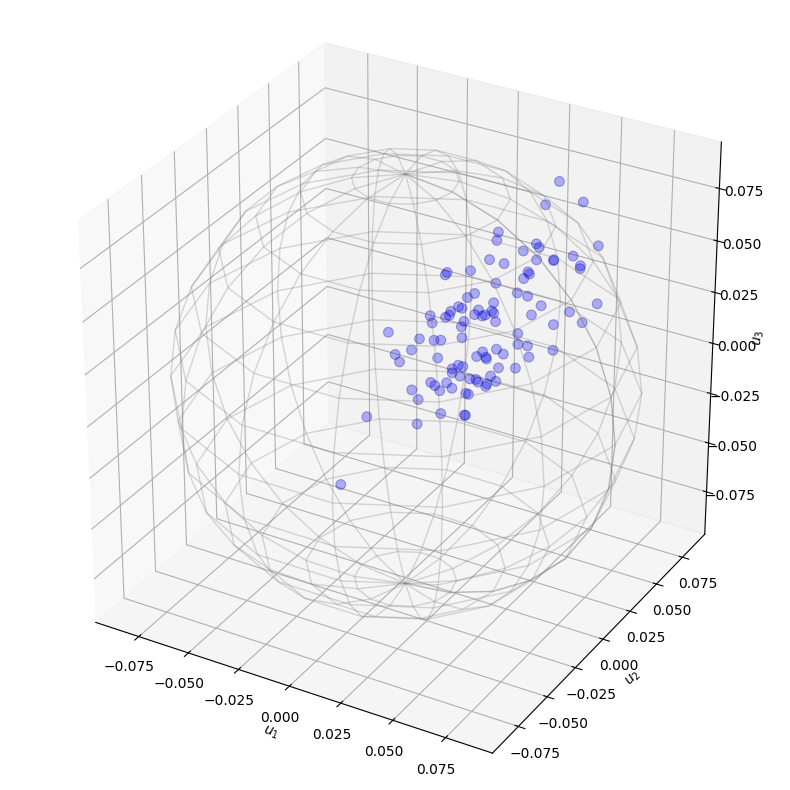}
    }
    \hfill
    \subfloat[Banzhaf]{%
        \includegraphics[width=0.30\textwidth]{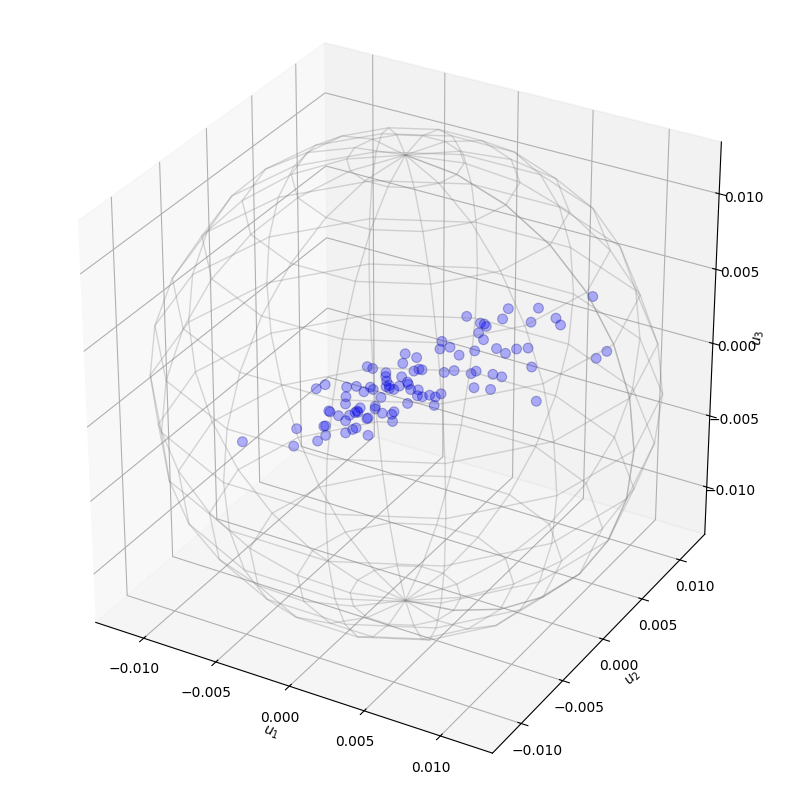}
    }
    \caption{Spatial signature of the \textsc{2dplanes} dataset for three semivalues (a) Shapley, (b) $(4,1)$-Beta Shapley, and (c) Banzhaf.  Each blue points marks the embedding $\psi_{\omega,\mathcal{D}}(z)$ of a data point $z \in \mathcal{D}$ (with $u_1=\mathrm{accuracy}$, $u_2=\mathrm{f1}$, $u_3=\mathrm{recall}$). The represented sphere is $\mathcal{S}^2$.}
    \label{fig:ss-2dplanes-3D}
    \vskip -0.2in
\end{figure*}
\begin{figure*}[ht]
    \centering
    \subfloat[Shapley]{%
        \includegraphics[width=0.30\textwidth]{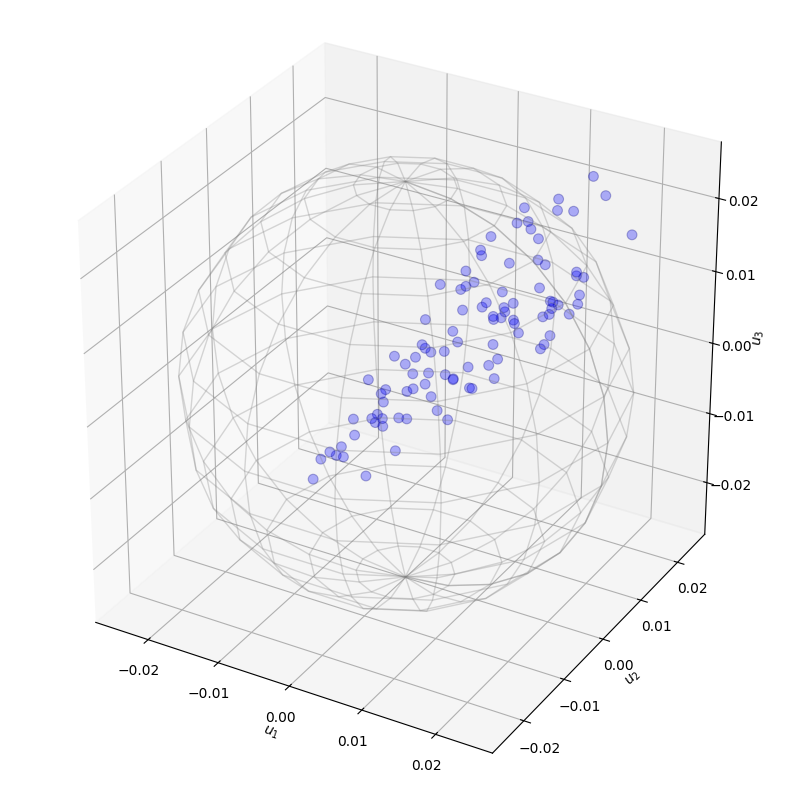}
    }
    \hfill
    \subfloat[$(4,1)$-Beta Shapley]{%
        \includegraphics[width=0.30\textwidth]{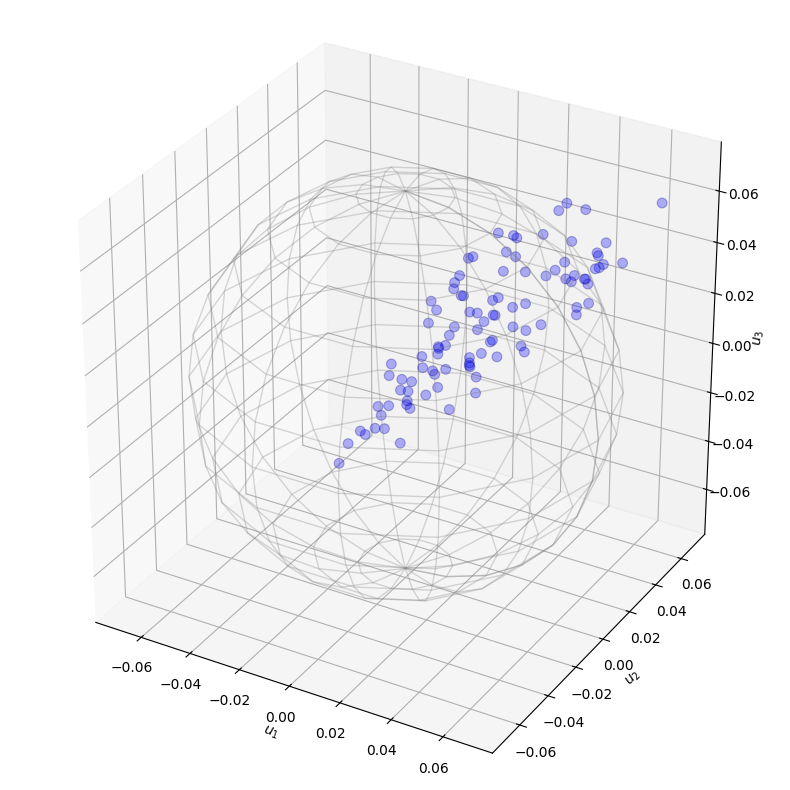}
    }
    \hfill
    \subfloat[Banzhaf]{%
        \includegraphics[width=0.30\textwidth]{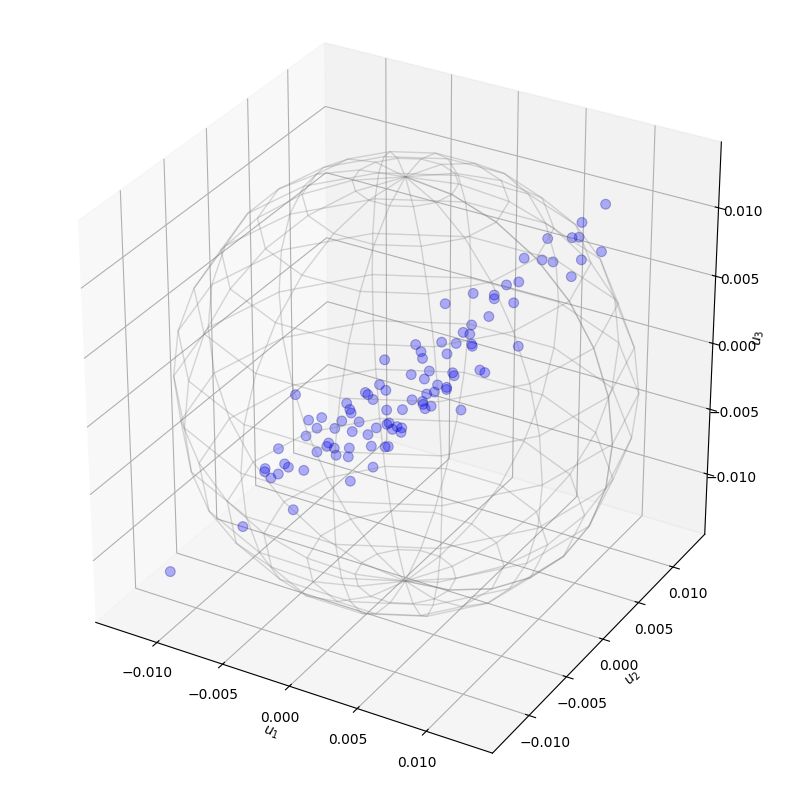}
    }
    \caption{Spatial signature of the \textsc{Pol} dataset for three semivalues (a) Shapley, (b) $(4,1)$-Beta Shapley, and (c) Banzhaf.  Each blue points marks the embedding $\psi_{\omega,\mathcal{D}}(z)$ of a data point $z \in \mathcal{D}$ (with $u_1=\mathrm{accuracy}$, $u_2=\mathrm{f1}$, $u_3=\mathrm{recall}$). The represented sphere is $\mathcal{S}^2$.}
    \label{fig:ss-pol-3D}
    \vskip -0.2in
\end{figure*}
\end{document}